\documentclass{article}
\usepackage[preprint]{neurips_2026}
\usepackage[utf8]{inputenc}
\usepackage[T1]{fontenc}
\usepackage{hyperref}
\usepackage{gensymb}
\usepackage{url}
\usepackage{pmboxdraw}
\usepackage{booktabs}
\usepackage{enumitem}
\usepackage{amsfonts}
\usepackage{amsmath}
\usepackage{amsthm}
\usepackage{amssymb}
\newtheorem{proposition}{Proposition}   
\newtheorem{theorem}{Theorem}
\newtheorem{lemma}{Lemma}
\newtheorem{remark}{Remark}
\usepackage{nicefrac}
\usepackage{microtype}
\usepackage{graphicx}
\usepackage{xcolor}
\usepackage{algorithm}
\usepackage{algorithmic}
\usepackage{placeins} 
\usepackage{subcaption}
\usepackage{multirow}
\newtheorem*{proposition*}{Proposition}

\title{SphereVAD: Training-Free Video Anomaly Detection via Geodesic Inference on the Unit Hypersphere}

\author{%
  Chao Huang\textsuperscript{1} \quad
  Penfei Wei\textsuperscript{1} \quad
  Wei Wang\textsuperscript{1} \quad
  Jie Wen\textsuperscript{2} \quad
  Zhihua Wang\textsuperscript{1} \\[2pt]
  Li Shen\textsuperscript{1} \quad
  Wenqi Ren\textsuperscript{1} \quad
  Xiaochun Cao\textsuperscript{1} \\[4pt]
  \textsuperscript{1}Shenzhen Campus of Sun Yat-sen University \quad
  \textsuperscript{2}Harbin Institute of Technology, Shenzhen \\[2pt]
  \texttt{\{huangch253, weipf, wangwei29, wangzh, lis, caoxiaochun\}@mail.sysu.edu.cn} \\
  \texttt{wenjie@hit.edu.cn}
}

\usepackage{pifont}
\usepackage{xcolor}
\newcommand{\cmark}{{\color{green!70!black}\ding{51}}}%
\newcommand{\xmark}{{\color{red}\ding{55}}}%

\begin{document}

\maketitle

\begin{abstract}
Video anomaly detection (VAD) aims to automatically identify events that deviate from normal patterns in untrimmed surveillance videos.
Existing methods universally depend on large-scale annotations or task-specific training procedures, severely limiting their rapid deployment to novel scenes.
We observe that intermediate-layer features of pre-trained multimodal large language models (MLLMs) already encode rich anomaly semantics, yet existing approaches rely on the language output pathway and fail to exploit the geometric discriminability latent in these representations.
Based on this finding, we propose \textbf{SphereVAD}, a fully training-free, zero-shot VAD framework that recasts anomaly discrimination as von Mises--Fisher (vMF) likelihood-ratio geodesic inference on the unit hypersphere, unleashing latent discriminability through principled geometric reasoning rather than learning new representations.
Specifically, SphereVAD first applies Fr\'{e}chet mean centering to unfold feature distributions and eliminate domain biases, then employs Holistic Scene Attention (HSA) to reinforce feature consistency using cross-video priors, and finally performs vMF-guided Spherical Geodesic Pulling (SGP) to align ambiguous segments with directional prototypes on the spherical manifold.
This training-free pipeline requires only minimal synthetic images for calibration.
SphereVAD establishes new state-of-the-art results among training-free approaches on three major benchmarks and remains competitive with fully supervised baselines.
Code is available at: \url{https://github.com/S1inetzz/SphereVAD}
\end{abstract}

\section{Introduction}
\label{sec:intro}
Video anomaly detection (VAD) aims to automatically identify events that
deviate from expected behaviour in untrimmed surveillance videos, with
broad applications in public safety, intelligent transportation, and
beyond~\cite{ramachandra2020survey,sultani2018real}.
The key challenge of this task lies in the open-ended nature of
anomalies: anomalous event categories are diverse and virtually
inexhaustible, rendering large-scale annotation prohibitively expensive.
Existing approaches, whether weakly
supervised~\cite{sultani2018real,tian2021weakly},
unsupervised~\cite{thakare2023dyannet}, or few-shot
paradigms~\cite{lu2020few,pang2020self}, all rely on some form of
training procedure and therefore face bottlenecks including high
computational cost, strong dependence on target-domain data, and the
necessity of retraining when transferring across scenes.

Recent advances in multimodal large language models
(MLLMs)~\cite{radford2021learning,li2023blip} have opened new
possibilities for training-free VAD.
Several methods attempt to leverage the language reasoning capability of
MLLMs to judge anomalies directly~\cite{wu2024vadclip,zanella2024harnessing},
yet a significant performance gap with respect to supervised counterparts
persists.
We find that MLLM intermediate-layer features already encode rich
discriminative information about anomalies.
The real bottleneck is that current methods rely exclusively on the
language output pathway for inference, leaving the abundant semantic
representations in these intermediate layers largely untapped.

However, directly utilising MLLM intermediate-layer features for anomaly
detection exposes three widely overlooked geometric issues.
\textbf{First, conical collapse of features.}
As illustrated in Figure~\ref{fig:motivation}(a), MLLM
intermediate-layer features exhibit a highly conical distribution in
Euclidean space, concentrated within a narrow cone of merely
${\sim}13^{\circ}$.
In such a crowded space, Euclidean distance metrics become extremely
fragile, the most discriminative directional information remains underexploited.
Even after $\ell_2$-normalisation onto the unit hypersphere
(Figure~\ref{fig:motivation}(b)), the distribution remains clustered in
a small spherical cap (angular standard deviation
$\approx 1.7^{\circ}$, as confirmed by the narrow spread in Figure~\ref{fig:discriminative_structure}, left column).
Spherical centering (Figure~\ref{fig:motivation}(c)) effectively unfolds
these features over the full hypersphere (angular standard deviation
$\approx 17.1^{\circ}$, Figure~\ref{fig:discriminative_structure}, middle column), and the separation between normal and
anomalous features improves significantly.
\textbf{Second, domain shift across different data sources.}
Features extracted from different data sources occupy distinct regions on
the hypersphere, manifesting as a systematic rotational offset between
their respective mean directions.
If left unaligned, this rotational bias causes reference directions
calibrated on one domain to be misaligned with features from another,
leading to substantial performance degradation (see
Appendix~\ref{app:domain_shift} for empirical evidence).
\textbf{Third, ambiguous features near the decision boundary.}
In decision-space-based methods, a considerable proportion of features
fall into the ambiguous zone between normal and anomalous regions.
These borderline features are highly sensitive to small perturbations and
can easily be assigned incorrect labels, undermining both the accuracy
and robustness of the detection.

\begin{figure}[t]\centering
  \includegraphics[width=0.95\linewidth]{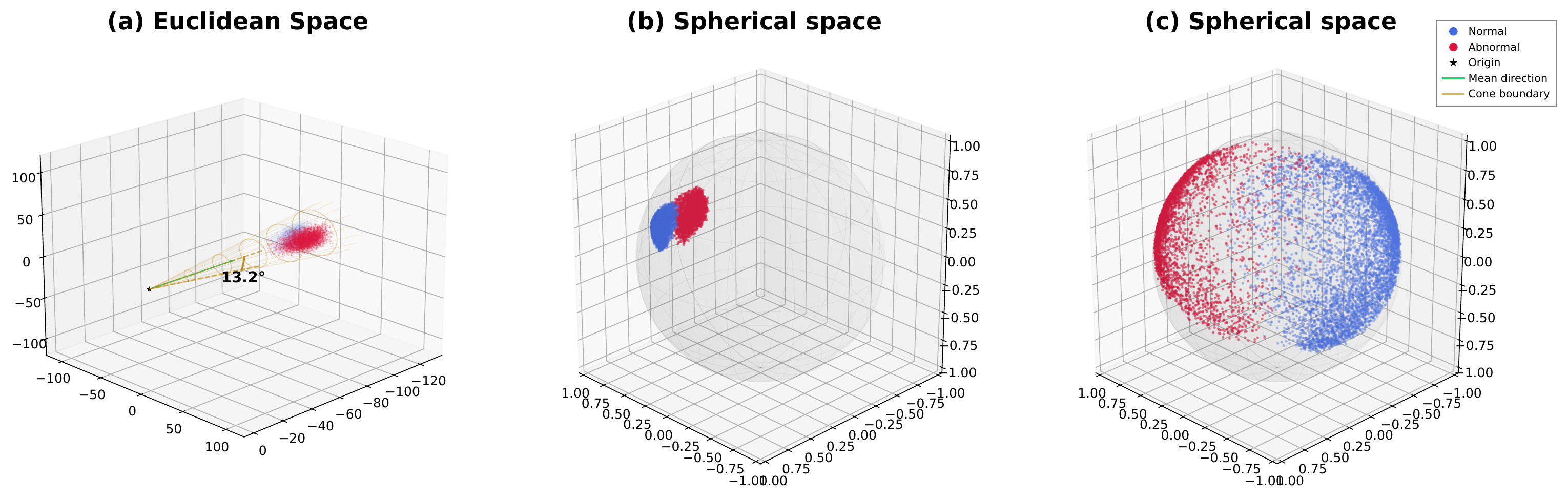}
\caption{Motivation of SphereVAD on XD-Violence. Features are extracted from intermediate layers of a frozen Qwen3.5.
(a)~Original features in Euclidean space concentrate within a ${\sim}13^{\circ}$ cone.
(b)~After $\ell_2$-normalisation, features remain in a small spherical cap.
(c)~After spherical centering, features unfold over the full hypersphere with substantially improved class separation.}\label{fig:motivation}
\end{figure}

Based on the observations above, we propose SphereVAD, a fully
training-free video anomaly detection framework that recasts anomaly
discrimination as geodesic inference on the unit hypersphere
$\mathcal{S}^{D-1}$.
SphereVAD comprises three complementary components.
\textbf{1.~Unified Fr\'{e}chet Mean Spherical Centering}, which aligns
features from different data sources to a common tangent-space
coordinate system while simultaneously unfolding the conical distribution
over the entire sphere.
\textbf{2.~Holistic Scene Attention (HSA)}, which constructs cross-video
attention maps based on scene-similarity priors to reinforce feature
consistency.
\textbf{3.~vMF-guided Spherical Geodesic Pulling (SGP)}, which moves
ambiguous segments along geodesics toward the dominant vMF prototype
direction, with pull strength that adapts to the degree of uncertainty,
ensuring that features always remain on the spherical manifold.
The entire pipeline requires only approximately 2{,}000 synthetic
calibration images, involves no gradient descent, and contains only four
hyperparameters.

Our contributions are summarised as follows:
\begin{itemize}\item We propose SphereVAD, the first training-free framework thatmodels video anomaly detection as vMF likelihood-ratio geodesic
        inference on the unit hypersphere, offering genuine plug-and-play
        capability.
  \item We propose a unified Fr\'{e}chet mean spherical centering
        strategy that effectively alleviates the rotational bias across
        different data sources. We further prove that this operation does
        not reduce, and typically enlarges, inter-class angular distances
        (Proposition~\ref{prop:centering}).
  \item We design HSA, a cross-video holistic scene attention module
        that strengthens feature consistency across videos, together with
        SGP, a score-adaptive spherical geodesic pulling mechanism that
        corrects the direction of ambiguous features under manifold
        constraints with pull strength proportional to uncertainty.
  \item SphereVAD achieves state-of-the-art training-free performance
        on XD-Violence (AP~86.99\%), UCF-Crime (AUC~86.38\%), and
        UBnormal (AUC~76.46\%), substantially surpassing existing
        zero-shot methods and remaining competitive with fully supervised
        baselines.
\end{itemize}

\section{Related Work}

\textbf{Video Anomaly Detection.}
Existing VAD methods fall into two paradigms.
Weakly supervised methods, represented by MIL~\cite{sultani2018real} and Prompt-Enhanced MIL~\cite{chen2024prompt}, train classifiers with video-level labels but are constrained by training-set coverage.
MLLM-based methods offer an alternative: zero-shot approaches such as LAVAD~\cite{zanella2024harnessing}, LAVIDA~\cite{dai2026no}, and Sherlock~\cite{ma2025sherlock} leverage large-model common sense for anomaly reasoning, while SteerVAD~\cite{cai2026steering} and HiProbe~\cite{cai2025hiprobe} adopt parameter-efficient fine-tuning with minimal data.
However, the former suffer from textual hallucinations and high inference overhead, and the latter still require gradient optimization.
SphereVAD instead performs geodesic inference directly on the unit hypersphere $\mathbb{S}^{D-1}$ of intermediate-layer MLLM features, calibrating vMF directional prototypes from sparse synthetic samples without parameter updates---bypassing the language output pathway with negligible computational overhead.

\textbf{Synthetic Data for Anomaly Calibration.}
Synthetic data have been extensively used in industrial defect detection~\cite{zavrtanik2021draem,li2021cutpaste}, whereas their application in video VAD typically demands large-scale samples coupled with model training.
Inspired by the finding of HiProbe~\cite{cai2025hiprobe} that a minimal number of samples suffices to activate the latent discriminative capacity of intermediate MLLM layers, SphereVAD calibrates vMF directional prototypes with only approximately 2{,}000 synthetic images and involves no model training whatsoever.
Because vMF parameter estimation requires only the mean direction, sample efficiency is inherently high; synthetic data thus serve as a directional calibrator in feature space rather than a conventional data augmentation resource.


\section{Method}

\subsection{Overview and Problem Definition}

\textbf{Problem definition.}
Given a set of untrimmed test videos $\{V_1,\ldots,V_N\}$, each uniformly divided into $T_i$ clips, the goal is to produce a per-frame anomaly score $s\in[0,1]$ without any gradient-based training.
The available resources comprise a pre-trained MLLM $\theta$ whose parameters are entirely frozen and a small synthetic calibration set $\mathcal{D}_{\mathrm{syn}}=\{(x_j,y_j)\}_{j=1}^{M}$, $y_j\in\{\mathrm{norm},\mathrm{abn}\}$, $M\approx 2000$.

\textbf{Core idea.}
SphereVAD builds on a key observation: intermediate-layer features of MLLMs already encode rich anomaly semantics, yet the raw features are highly concentrated within a narrow cone (Figure~\ref{fig:motivation}(b)), severely under-utilizing the angular discriminative space.
We recast anomaly detection as a vMF likelihood-ratio geodesic inference problem on the unit hypersphere $\mathcal{S}^{D-1}$, unleashing this latent discriminability through principled geometric reasoning.

As illustrated in Figure~\ref{fig:pipeline}, the pipeline comprises four stages:
\textbf{(a)}~Feature preparation---extracting dual-stream intermediate-layer features and applying spherical centering to remove domain bias;
\textbf{(b)}~vMF center construction: unified Fr\'{e}chet mean computation, logarithmic-map spherical centering, and spherical $K$-Means prototype calibration.
\textbf{(c)}~HSA---cross-video attention aggregation for feature consistency;
\textbf{(d)}~vMF spherical inference---SGP on ambiguous clips followed by final scoring.
The entire pipeline is training-free and involves only four core hyperparameters $(K_N,K_A,\alpha_G,\beta_{\mathrm{base}})$.

\begin{figure}[t]\centering
  \includegraphics[width=\linewidth]{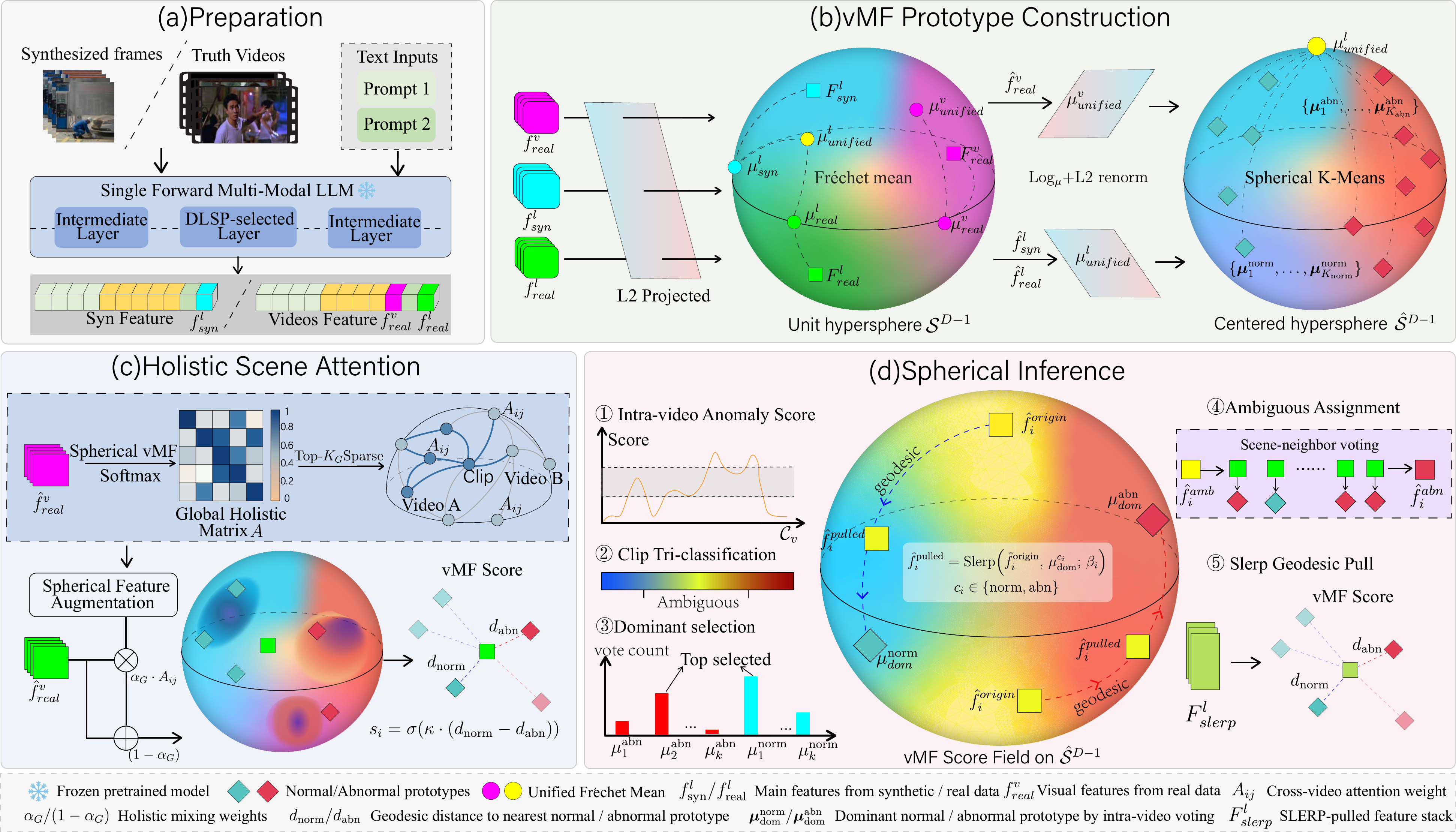}
  \caption{Overall pipeline of SphereVAD.}
  \label{fig:pipeline}
\end{figure}

\textbf{Notation.}
Raw features are denoted $f$, $\ell_2$-normalized features $\tilde{f}$, and spherically centered features $\hat{f}$.
Superscripts $l$ and $v$ indicate the main feature (last token) and the visual feature (visual last), respectively.

\subsection{Feature Extraction and Spherical Centering}

\textbf{Intermediate-layer feature extraction.}
From the frozen MLLM $\theta$ we extract two complementary representations:
the main feature $f^{l}\!\in\!\mathbb{R}^{D}$ (the hidden state of the last token at layer $\ell$, encoding global semantics) and the visual feature $f^{v}\!\in\!\mathbb{R}^{D}$ (the hidden state of the final visual-sequence token at layer $\ell'$, encoding scene appearance).
The selection of layer indices is detailed in Appendix~\ref{app:dlsp}.
All features are projected onto the unit hypersphere by $\ell_2$-normalization: $\tilde{f}=f/\|f\|_2\in\mathcal{S}^{D-1}$.

\textbf{Unified Fr\'{e}chet mean and spherical centering.}
As discussed in Section~\ref{sec:intro}, features from different data sources exhibit a systematic rotational bias on the hypersphere due to input-domain discrepancies (see Appendix~\ref{app:domain_shift} for empirical evidence).
Meanwhile, the raw features cluster within a narrow cone (Figure~\ref{fig:motivation}(b)), under-utilizing the angular discriminative capacity.
Spherical centering addresses both issues at once.

We pool the main features from both domains to compute a unified Fr\'{e}chet mean (details in Appendix~\ref{app:karcher}):
\begin{equation}
  \boldsymbol{\mu}_{\mathrm{unified}}
  = \mathrm{Fr\acute{e}chetMean}\!\bigl(\{\tilde{f}_{\mathrm{syn}}^{l}\}\cup\{\tilde{f}_{\mathrm{real}}^{l}\}\bigr).\label{eq:frechet}
\end{equation}
Spherical centering is then performed with $\boldsymbol{\mu}_{\mathrm{unified}}$ as the base point:
\begin{equation}
  \hat{f}
  = \frac{\mathrm{Log}_{\boldsymbol{\mu}_{\mathrm{unified}}}(\tilde{f})}{\bigl\|\mathrm{Log}_{\boldsymbol{\mu}_{\mathrm{unified}}}(\tilde{f})\bigr\|_2}\;\in\;\mathcal{S}^{D-1}.\label{eq:centering}
\end{equation}
The logarithmic map~\cite{absil2008optimization} removes the mean-direction bias, and the subsequent $\ell_2$-normalization projects the result back onto the sphere. This procedure is equivalent to spherical zero-centering, which unfolds the narrow-cone distribution over the full sphere and substantially expands the usable angular discrimination range.
Visual features are independently centered with their own Fr\'{e}chet mean $\boldsymbol{\mu}_{\mathrm{unified}}^{v}$ (computed only from the test set, as detailed in Appendix~\ref{app:dual_feature}).

A natural question is whether this centering operation might inadvertently reduce the angular separation between normal and anomalous features. Proposition~\ref{prop:centering} shows that this is not the case.

\begin{proposition}[Spherical centering does not reduce inter-class angular distance]
\label{prop:centering}
Let $\mathcal{X}=\mathcal{X}_0\cup\mathcal{X}_1\subset\mathcal{S}^{D-1}$, where each class follows $\mathrm{vMF}(\boldsymbol{\mu}_c,\kappa_c)$.
Denote by $\phi$ the centering map of Eq.~\eqref{eq:centering}.
Under this idealized two-class vMF model, $d_{\mathrm{geo}}\!\bigl(\phi(\boldsymbol{\mu}_0),\phi(\boldsymbol{\mu}_1)\bigr)\geq d_{\mathrm{geo}}(\boldsymbol{\mu}_0,\boldsymbol{\mu}_1)$, with equality only in the degenerate case $d_{\mathrm{geo}}(\boldsymbol{\mu}_0,\boldsymbol{\mu}_1)=\pi$.
In practice, the bound holds approximately for finite samples when the empirical Fr\'{e}chet mean is close to the population mean (Remark~2 in Appendix~\ref{app:proof_centering}).
\emph{Proof in Appendix~\ref{app:proof_centering}.}
\end{proposition}

Furthermore, the choice of a unified Fr\'{e}chet mean computed from both domains is not arbitrary. Proposition~\ref{prop:alignment} guarantees that this mean naturally absorbs the rotational bias between domains, serving as an optimal shared reference point.

\begin{proposition}[Unified Fr\'{e}chet mean as rotational alignment]
\label{prop:alignment}
If the two domain distributions differ only by a rotation $R\!\in\!\mathrm{SO}(D)$ and share equal concentration and sample size, the unified Fr\'{e}chet mean coincides with the geodesic midpoint of the domain-specific means. In the asymmetric case it still lies on the connecting geodesic, providing a reference that absorbs rotational bias from both sides.
\emph{Proof in Appendix~\ref{app:proof_alignment}.}
\end{proposition}

\subsection{vMF Directional Prototype Construction and Scoring}
\label{sec:vmf_proto}

With the centered synthetic calibration features in hand, we construct directional prototypes to serve as reference anchors on the hypersphere.
Specifically, we perform spherical $K$-Means~\cite{dhillon2001concept} clustering separately for the normal and anomalous classes, using cosine similarity for assignment and intra-cluster mean followed by $\ell_2$-normalization for update.
This yields $K_N$ normal and $K_A$ anomalous prototypes:
\begin{equation}
  \{\boldsymbol{\mu}_k^{\mathrm{norm}}\}_{k=1}^{K_N},\quad
  \{\boldsymbol{\mu}_k^{\mathrm{abn}}\}_{k=1}^{K_A}
  \;\subset\;\mathcal{S}^{D-1}.\label{eq:prototypes}
\end{equation}
These prototypes serve as the directional parameters of vMF distributions $\mathrm{vMF}(\mathbf{x};\boldsymbol{\mu},\kappa)\propto\exp(\kappa\,\boldsymbol{\mu}^{\!\top}\mathbf{x})$~\cite{banerjee2005clustering}.
Crucially, the synthetic data do not train a model but merely calibrate directional references on the sphere.
Because vMF parameter estimation requires only the mean direction, the procedure is highly sample-efficient.

\textbf{vMF likelihood-ratio scoring.}
For a clip feature $\hat{f}$, we compute the geodesic distance to the nearest prototype of each class, $d_c(\hat{f})=\min_k\arccos(\hat{f}^{\!\top}\boldsymbol{\mu}_k^{c})$, and define the anomaly score:
\begin{equation}
  s
  = \sigma\!\bigl(\kappa\,(d_{\mathrm{norm}}-d_{\mathrm{abn}})\bigr)
  = \frac{\exp(-\kappa\,d_{\mathrm{abn}})}{\exp(-\kappa\,d_{\mathrm{norm}})+\exp(-\kappa\,d_{\mathrm{abn}})}.\label{eq:vmf_score}
\end{equation}
A clip geodesically closer to an anomalous prototype receives $s>0.5$ and is deemed anomalous, while $\kappa$ controls the decision sharpness.

\subsection{Cross-Video Holistic Scene Attention}
\label{sec:holistic}

The prototypes constructed in Section~\ref{sec:vmf_proto} provide reliable scoring for clips whose features are well separated from the decision boundary.
However, clips from visually similar scenes may receive inconsistent scores due to isolated noise.
HSA addresses this by leveraging the visual features $\hat{f}^{v}$ to capture cross-video scene correlations and reinforce the consistency of the main features $\hat{f}^{l}$.

A sparse attention matrix $\mathbf{A}\!\in\!\mathbb{R}^{N_{\mathrm{total}}\times N_{\mathrm{total}}}$ is constructed from the pairwise cosine similarities of all test-clip visual features, with threshold sparsification, top-$K_G$ truncation, and row-wise softmax normalization.
The enhanced main feature is:
\begin{equation}
  \hat{f}_{i}^{\,l,\mathrm{enh}}
  = (1-\alpha_G)\,\hat{f}_{i}^{l}
  + \alpha_G\!\sum_{j}A_{ij}\,\hat{f}_{j}^{l}.\label{eq:holistic}
\end{equation}
The result is $\ell_2$-normalized back onto the sphere.
This aggregation enables clips from visually similar scenes to reinforce each other across videos, suppressing isolated noise and promoting consistent anomaly detection.
Initial vMF scores $\{s_i^{\mathrm{init}}\}$ are then obtained by applying Eq.~\eqref{eq:vmf_score} to the enhanced features.

\subsection{vMF-Guided Spherical Geodesic Pulling}
\label{sec:slerp}

After initial scoring via Eq.~\eqref{eq:vmf_score}, a subset of clips remains in the ambiguous zone near the decision boundary.
SGP proposes a within-video, manifold-constrained feature correction mechanism that leverages vMF prototype priors to adaptively adjust the feature directions of these ambiguous clips.
The procedure is applied independently to each video and comprises the following steps (full algorithm and design rationale in Appendix~\ref{app:sgp}).

\textbf{Adaptive tri-classification.}
Based on the initial scores $\{s_i^{\mathrm{init}}\}$ obtained in Section~\ref{sec:holistic}, an ambiguity interval $[\rho_{\mathrm{low}},\rho_{\mathrm{high}}]$ is adaptively determined so that clips within each video are partitioned into three groups: $\{\mathrm{norm},\mathrm{amb},\mathrm{abn}\}$.
The interval width adapts to the score distribution, narrowing when scores are clearly bimodal and widening when they concentrate near $0.5$ (details in Appendix~\ref{app:mad}).

\textbf{Dominant prototype selection.}
Within each video, the clearly normal and clearly anomalous clips vote for their respective dominant prototypes from the prototype sets defined in Eq.~\eqref{eq:prototypes}.
These dominant prototypes represent the prevailing normal and anomalous directions for the current video.
If the number of anomalous clips falls below a minimum threshold, the video is treated as entirely normal (Appendix~\ref{app:dominant}).

\textbf{Ambiguous-clip assignment.}
Rather than relying on its own unreliable score, each ambiguous clip is assigned a label via a sparse intra-video attention matrix $\mathbf{A}_H$ built from visual features.
The attention-weighted aggregation of neighbouring clip features is compared with the dominant prototypes by cosine similarity, so that neighbour consensus resolves the ambiguity (Appendix~\ref{app:neighbor}).

\textbf{Score-adaptive geodesic pull.}
Once assigned, the feature of each ambiguous clip is moved along the geodesic toward its target prototype via Spherical Linear Interpolation (SLERP)~\cite{shoemake1985animating}:
\begin{equation}
  \hat{f}_i^{\,\mathrm{pulled}}
  = \mathrm{Slerp}\!\bigl(\hat{f}_i,\;\boldsymbol{\mu}_{\mathrm{dom}}^{c_i},\;\beta_i\bigr).\label{eq:slerp}
\end{equation}
The pull strength $\beta_i=\beta_{\mathrm{base}}\times(1-\tfrac{1}{2}\hat{d}_i)$ is proportional to uncertainty: a score closer to $0.5$ yields a stronger pull, while clips with confident scores receive weaker correction, preventing over-adjustment.
SLERP traverses the great-circle arc at constant angular velocity, ensuring that the result remains strictly on $\mathcal{S}^{D-1}$ and thereby achieving manifold-constrained feature correction without distribution drift (Appendix~\ref{app:slerp_derivation}).

The corrected features are scored via Eq.~\eqref{eq:vmf_score} to produce the final clip-level anomaly scores, which are expanded to frame level and temporally smoothed with a Gaussian kernel~\cite{sultani2018real}.

\section{Experiments}

We validate SphereVAD on three large-scale video anomaly detection benchmarks, conducting SOTA comparison~(\S\ref{sec:sota}), module ablation~(\S\ref{sec:ablation}), analysis of spherical geometry advantages~(\S\ref{sec:geometry}), backbone generality verification~(\S\ref{sec:backbone}), and qualitative analysis~(\S\ref{sec:qualitative}).

\textbf{Datasets and Metrics.}
(1)~\textbf{XD-Violence}~\cite{wu2020not} comprises 4,754 untrimmed videos covering 6 violent anomaly categories; we report frame-level AP (primary) and AUC.
(2)~\textbf{UCF-Crime}~\cite{sultani2018real} consists of 1,900 surveillance videos spanning 13 crime categories; frame-level AUC is the standard metric.
(3)~\textbf{UBnormal}~\cite{acsintoae2022ubnormal} is a synthetic open-set benchmark with pixel-level annotations across 29 anomaly types; we report frame-level AUC.
Following common practice, all metrics are computed by concatenating frames from all test videos.

\textbf{Implementation Details.}
We employ a frozen Qwen3.5~\cite{qwen35blog} as the default backbone.
The feature extraction layer is selected as layer 31 via the procedure
described in Appendix~\ref{app:dlsp}
We synthesize approximately 2{,}000 calibration images (${\sim}$1{,}000 normal/anomalous pairs) using only binary labels; no anomaly category information is leaked and there is no content overlap with any test set (details in Appendix~\ref{app:synth}).
All experiments run on 4$\times$A100 GPU at inference time with no gradient descent.
Beyond offline batch processing, SphereVAD also supports an \textbf{online streaming mode} in which each incoming clip is scored immediately after feature extraction and spherical centering (corresponding to configuration M1, without HSA or SGP). In this mode, the per-clip geometric inference is negligible ($<$0.1\,ms), so throughput is dominated by the single MLLM forward pass, achieving \textbf{59.3\,FPS} on a single A100 GPU---comfortably exceeding real-time requirements. A detailed computational cost analysis covering both deployment modes is provided in Appendix~\ref{app:compute}.
Full hyperparameter settings and sensitivity analyses are in Appendix~\ref{app:hyper}.
\subsection{Comparison with State-of-the-Art Methods}
\label{sec:sota}
\begin{table}[t]
\caption{\textbf{VAD performance comparison.}
\cmark/\xmark\ indicate zero-shot and training-free status.}
\label{tab:main_results}
\centering
\small
\begin{tabular}{@{}l|c|c|c|c|cc@{}}
\toprule
\multirow{2}{*}{\textbf{Method}} & \multirow{2}{*}{\textbf{Zero-shot}} & \multirow{2}{*}{\textbf{Train-free}} & \textbf{UCF-Crime} & \textbf{UBnormal} & \multicolumn{2}{c}{\textbf{XD-Violence}}\\
& & & \textbf{AUC(\%)} & \textbf{AUC(\%)} & \textbf{AUC(\%)} & \textbf{AP(\%)} \\
\midrule
Sultani et al.~\cite{sultani2018real} & \xmark & \xmark & 77.92 & 50.30 & -- & 73.20\\
RTFM~\cite{tian2021weakly}& \xmark & \xmark & 83.31 & 64.94 & -- & 77.81\\
UR-DMU~\cite{zhou2023dual}& \xmark & \xmark & 86.97 & -- & 94.02 & 81.66\\
CLIP-TSA~\cite{joo2023clip}       & \xmark & \xmark & 87.58 & -- & -- & 82.19\\
MGFN~\cite{chen2023mgfn}             & \xmark & \xmark & 86.98 & -- & -- & 80.11\\
VadCLIP~\cite{wu2024vadclip}       & \xmark & \xmark & 88.02 & -- & -- & 84.51\\
OVVAD~\cite{wu2024open}              & \xmark & \xmark & 86.40 & 62.94 & -- & 66.53\\
STPrompt~\cite{wu2024weakly}     & \xmark & \xmark & 88.08 & 63.98 & -- & --\\
TPWNG~\cite{yang2024text}           & \xmark & \xmark & 87.79 & -- & -- & 83.68\\
Holmes-VAU~\cite{zhang2025holmes}     & \xmark & \xmark & 88.96 & 56.77 & -- & 87.68\\
$\pi$-VAD~\cite{majhi2025just}       & \xmark & \xmark & \textbf{90.33} & -- & -- & 85.37\\
RefineVAD~\cite{lee2026refinevad}& \xmark & \xmark &88.92 & -- & -- & 88.66\\
VERA~\cite{ye2025vera}             & \xmark & \cmark & 86.55 & -- & 88.26 & 70.54\\
LAVIDA~\cite{dai2026no}         & \cmark & \xmark & 82.18 & \textbf{76.45} & -- & \textbf{90.62}\\
\midrule
CLIP~\cite{radford2021learning}& \cmark & \cmark & 53.16 & -- & 38.21 & 17.83\\
LLaVA-1.5~\cite{liu2024improved}       & \cmark & \cmark & 72.84 & -- & 79.62 & 50.26\\
LAVAD~\cite{zanella2024harnessing}           & \cmark & \cmark & 80.28 & 51.06 & 85.36 & 62.01\\
MCANet~\cite{dev2024mcanet} & \cmark & \cmark & 82.47 & 62.94 & 87.43 & 69.72\\
GLM-4.1V-9B~\cite{hong2025glm}     & \cmark & \cmark & 61.80 & 60.81 & 72.73 & 52.93\\
EventVAD~\cite{shao2025eventvad}     & \cmark & \cmark & 82.03 & -- & 87.51 & 64.04\\
Unified-VAD~\cite{lin2025unified}& \cmark & \cmark &84.28 & 69.02 & 91.34 & 68.07\\
VADTree~\cite{li2025vadtree}       & \cmark & \cmark & 84.74 & 65.80 & 90.55 & 68.85\\
PANDA~\cite{yang2025panda}       & \cmark & \cmark & 84.89 & 75.78 & - & 70.16\\
\textbf{SphereVAD (Ours)}& \cmark & \cmark & \textbf{86.38} & \textbf{76.46} & \textbf{95.74} & \textbf{86.99}\\
\bottomrule
\end{tabular}
\end{table}

As shown in Table~\ref{tab:main_results}, SphereVAD achieves state-of-the-art (SOTA) performance among training-free methods on all three benchmarks.
On XD-Violence, it surpasses the previous best training-free method (VADTree, AP\,68.85\%) by \textbf{+18.14\%} AP and even exceeds several fully supervised methods such as VadCLIP (84.51\%) and $\pi$-VAD (85.37\%).
On UCF-Crime, SphereVAD achieves 86.38\% AUC, outperforming VADTree by +1.64\% and narrowing the gap with the best supervised method ($\pi$-VAD, 90.33\%) to under 4\,points.
On UBnormal, SphereVAD attains 76.46\% AUC, surpassing the previous best training-free method PANDA (75.78\%) and matching the best zero-shot method LAVIDA (76.45\%) that requires task-specific training, while substantially outperforming all other supervised baselines that report UBnormal results (e.g., RTFM 64.94\%, STPrompt 63.98\%).
The XD-Violence gains are particularly striking because this dataset contains diverse violence types that benefit from the cross-video HSA mechanism.

\subsection{Ablation Studies}
\label{sec:ablation}

\begin{table}[t]
\caption{\textbf{Component-wise ablation.} Each row adds one module to its predecessor.
``Eucl.''\ denotes Euclidean distance-ratio scoring; ``vMF''\ denotes vMF
log-likelihood-ratio scoring. $\Delta$ is the improvement of vMF over Euclidean scoring
at each stage. Primary metrics are \underline{underlined}.
UBnormal does not use SGP, so M2 and M3 share the same results.}
\label{tab:ablation_module}
\centering
\small
\begin{tabular}{@{}cl|ccc|ccc|ccc@{}}
\toprule
& & \multicolumn{3}{c|}{\textbf{XD-Violence \underline{AP}\,(\%)}}
& \multicolumn{3}{c|}{\textbf{UCF-Crime \underline{AUC}\,(\%)}}
& \multicolumn{3}{c}{\textbf{UBnormal \underline{AUC}\,(\%)}} \\
& \textbf{Configuration}
& Eucl. & vMF & $\Delta$
& Eucl. & vMF & $\Delta$
& Eucl. & vMF & $\Delta$ \\
\midrule
M0a & Raw features
& 80.19 & -- & --
& 75.42 & -- & --
& 71.13 & -- & -- \\
M0b & + Spherical center
& 82.03 & -- & --
& 81.20 & -- & --
& 75.49 & -- & -- \\
\midrule
M1  & + vMF scoring
& 82.03 & 82.51 & +0.48
& 81.20 & 81.68 & +0.48
& 75.49 & 75.98 & +0.49 \\
M2  & + HSA
& 85.01 & 85.16 & +0.15
& 82.71 & 83.06 & +0.35
& 76.01 & 76.46 & +0.45 \\
M3  & + SGP
& 85.60 & \textbf{86.99} & +1.39
& 84.80 & \textbf{86.38} & +1.58
& 76.01$^\dagger$ & \textbf{76.46}$^\dagger$ & +0.45 \\
\bottomrule
\multicolumn{11}{@{}l}{\footnotesize $^\dagger$ UBnormal videos average ${\sim}$12\,seconds, too short for intra-video SGP; M3 results are identical to M2.}
\end{tabular}
\end{table}

Table~\ref{tab:ablation_module} presents the incremental contribution of each module across all three benchmarks.
Spherical centering alone accounts for the largest single gain.
On UCF-Crime it raises AUC from 75.42\% to 81.20\% ($+5.78\%$), on UBnormal from 71.13\% to 75.49\% ($+4.36\%$), and on XD-Violence from 80.19\% to 82.03\% ($+1.84\%$ AP), confirming the necessity of unfolding narrow-cone features onto the full sphere.
Adding HSA (M2) brings $+2.65\%$ AP on XD-Violence and $+0.48\%$ AUC on UBnormal, confirming the value of cross-video scene-consistency priors.
SGP (M3) further yields $+3.32\%$ AUC on UCF-Crime and $+1.83\%$ AP on XD-Violence, demonstrating that directional correction of ambiguous transitional segments is critical for long surveillance videos.
Note that UBnormal does not employ SGP because its test videos average only ${\sim}$12\,seconds each, providing insufficient temporal context for the intra-video tri-classification and neighbour-consensus mechanisms that SGP relies on. Consequently M2 and M3 share identical results.
Cumulatively, M1$\to$M3 improves by $+4.48\%$ AP (XD-Violence) and $+4.70\%$ AUC (UCF-Crime).

\subsection{Advantages of Spherical Geometry}
\label{sec:geometry}

\subsubsection{vMF Scoring vs.\ Distance-Ratio Scoring}

Across all three benchmarks and all pipeline stages, vMF log-likelihood-ratio scoring consistently outperforms Euclidean distance-ratio scoring (Table~\ref{tab:ablation_module}, $\Delta$ columns).
The advantage grows as the pipeline deepens.
At M3, vMF surpasses Euclidean scoring by $+1.39\%$ AP (XD-Violence), $+1.58\%$ AUC (UCF-Crime), and $+0.45\%$ AUC (UBnormal), compared with a uniform ${\sim}+0.48\%$ at M1.
This suggests that the concentration parameter~$\kappa$ provides finer decision-boundary control that becomes more effective when combined with directional feature refinement, a benefit unavailable to pure distance ratios.

\subsubsection{Visualization of Feature Discriminative Structure}

\begin{figure}[t]\centering
  \includegraphics[width=\linewidth]{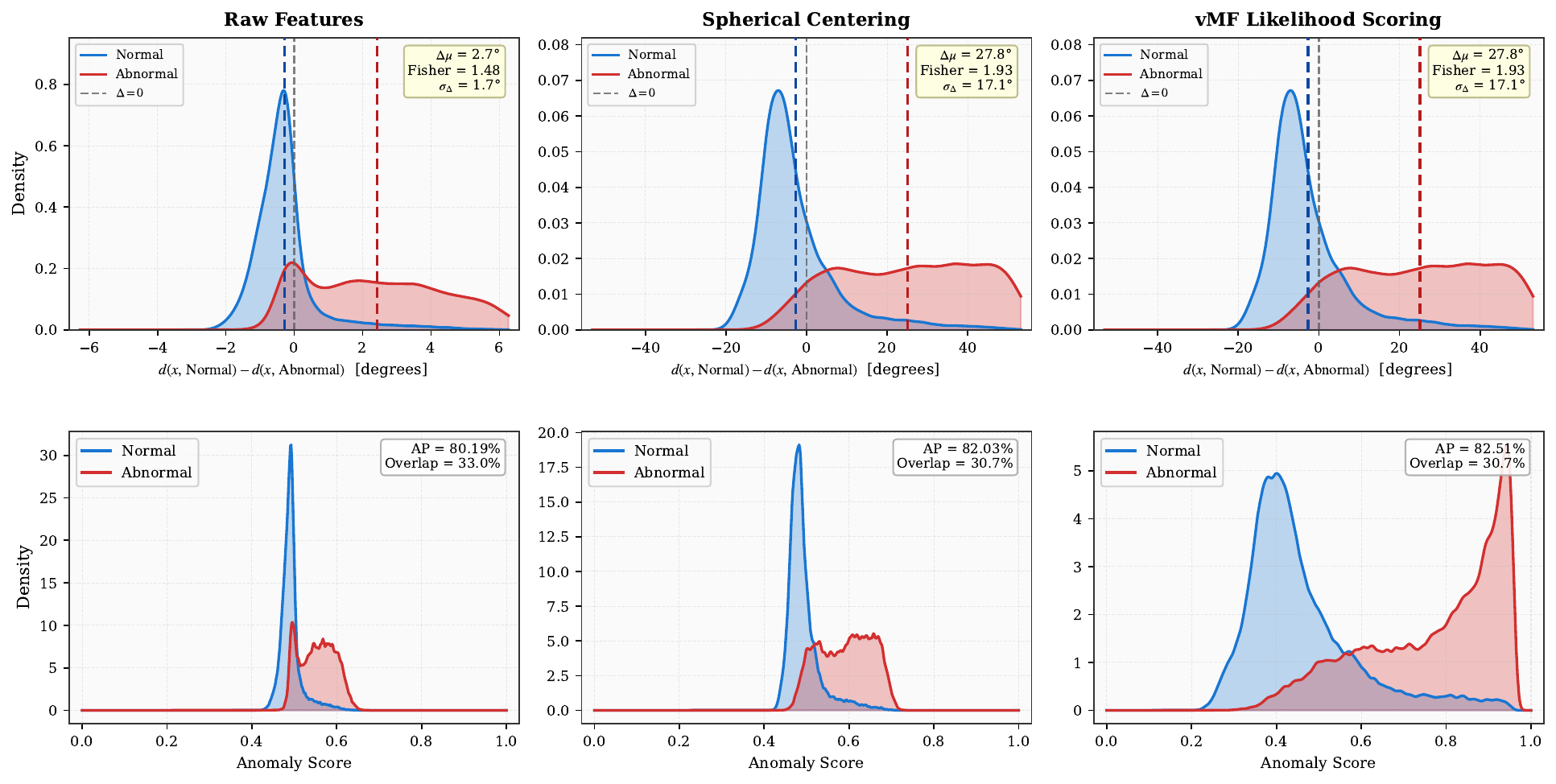}
  \caption{Progressive improvement of feature discriminative structure.
  \textbf{Top:} geodesic distance difference $\Delta = d(\mathbf{x}, \text{Normal}) - d(\mathbf{x}, \text{Abnormal})$.
  \textbf{Bottom:} anomaly score distributions.
  Columns from left to right: Raw / Spherical Centering / vMF Scoring.}
  \label{fig:discriminative_structure}
\end{figure}

Figure~\ref{fig:discriminative_structure} visualizes how each stage reshapes the feature discriminative structure.
Raw features exhibit near-complete overlap between normal and anomalous distance-difference distributions ($\Delta\mu{=}2.7^{\circ}$, $\sigma_{\Delta}{=}1.7^{\circ}$, Fisher${=}1.48$, score overlap 33.0\%, AP${=}80.19\%$).
Spherical centering expands $\Delta\mu$ to $27.8^{\circ}$ ($\sigma_{\Delta}{=}17.1^{\circ}$, Fisher${=}1.93$) and reduces score overlap to 30.7\% (AP${=}82.03\%$), confirming its role as the single most impactful component. It unfolds features compressed within a narrow cone onto the full sphere, recovering wasted angular discriminative capacity.
vMF scoring preserves the same geodesic separation ($\Delta\mu{=}27.8^{\circ}$, Fisher${=}1.93$) while nonlinearly reshaping the score distribution through the vMF log-likelihood ratio, further lifting AP to 82.51\%, consistent with the ablation results in Table~\ref{tab:ablation_module}.

\subsection{Backbone Generality}
\label{sec:backbone}
\begin{table}[t]
\caption{\textbf{Backbone generality.} SphereVAD with four different MLLM backbones. The first row uses the same Qwen3.5 backbone but directly prompts its language head to rate anomaly severity, without our projection framework.}
\label{tab:ablation_backbone}
\centering
\small
\begin{tabular}{@{}l|c|c|c@{}}
\toprule
\textbf{Backbone / Method} & \textbf{XD-Violence AP(\%)} & \textbf{UCF-crime AUC(\%)} & \textbf{UBnormal AUC(\%)} \\
\midrule
Qwen3.5 direct inference$^\dagger$ & 58.70 & 72.64 & 55.74\\
\midrule
Llava-ov-1.5~\cite{an2025llavaonevision15fullyopenframework}   & 76.93 & 79.36 & 69.83\\
InternVL3~\cite{zhu2025internvl3} & 80.41 & 84.76 & 72.39\\
Qwen3-VL~\cite{bai2025qwen3}& 82.81 & 83.42 & 74.51 \\ 
Qwen3.5~\cite{qwen35blog}& \textbf{86.99}~{\scriptsize(+28.29)} & \textbf{86.38}~{\scriptsize(+13.74)} & \textbf{76.46}~{\scriptsize(+20.72)} \\
\bottomrule
\end{tabular}
\vspace{2pt}
\par\raggedright\scriptsize $^\dagger$ Language-only baseline: prompting Qwen3.5's language head to score anomaly severity without SphereVAD.
\label{tab:ablation_llm}
\end{table}

As shown in Table~\ref{tab:ablation_backbone}, SphereVAD produces effective results across all four MLLM backbones, demonstrating that the spherical geometric inference framework is backbone-agnostic.
Even the weakest backbone (LLaVA-OV-1.5) achieves 76.93\% AP on XD,79.36\% AUC on UCF, and 69.83\% on UBnormal, while the best backbone (Qwen3.5) reaches 86.99\%, 86.38\%, and 76.46\%, respectively.
Comparing with the LLM direct inference baseline (58.70\% / 72.64\% / 55.74\%), SphereVAD with the same Qwen3.5 backbone yields improvements of +28.29\%, +13.74\%, and +20.72\% on the three benchmarks.
This reveals the bottleneck of the language decoding head in fine-grained anomaly understanding---what matters is not what the model ``says,'' but what the model ``sees.''

\subsection{Qualitative Analysis}
\label{sec:qualitative}

\begin{figure}[t]\centering
  \includegraphics[width=\linewidth]{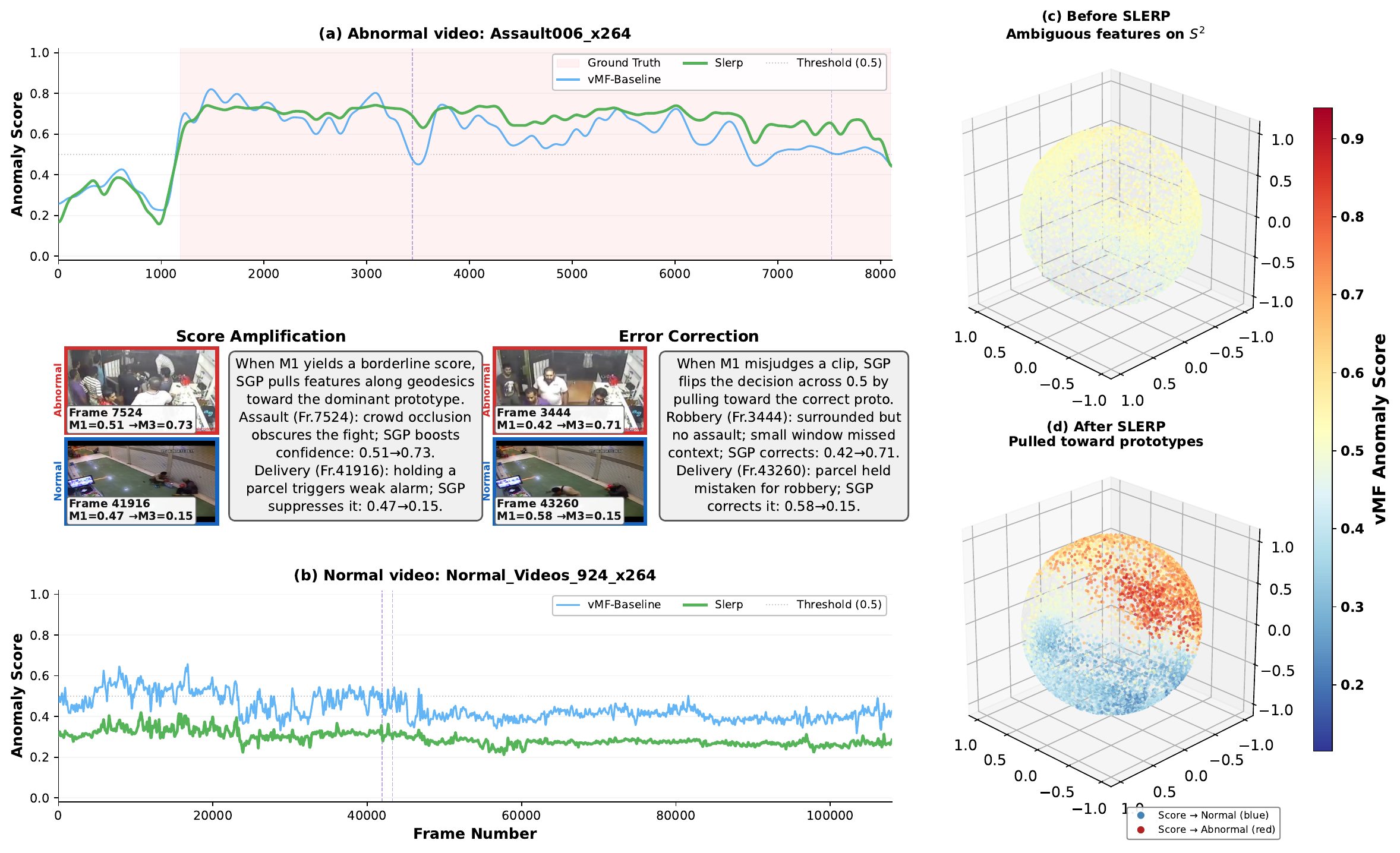}
  \caption{Qualitative results of SGP.
  \textbf{Left (a)(b):} Frame-level score curves (vMF-Baseline\,/\,SGP vs.\ GT) for two videos.
  \textbf{Right (c)(d):} Spherical visualization before vs.\ after pulling.}
  \label{fig:qualitative_slerp}
\end{figure}

Figure~\ref{fig:qualitative_slerp} illustrates both the temporal and geometric effects of SGP.
For the anomalous video (a), the vMF-Baseline (M1) captures the main anomalous intervals but fluctuates severely; SGP (M3) corrects boundary segments, aligning the curve closely with the ground truth.
For the normal video (b), SGP suppresses M1's false positives (e.g., Frame\,25884: $0.54{\to}0.19$) by pulling toward the normal prototype.
The spherical views (c--d) confirm the mechanism: before pulling, ambiguous features scatter between prototypes with mixed labels; after pulling, they migrate along geodesics toward assigned prototypes, yielding clearly separated normal (blue) and anomalous (red) clusters.

\section{Conclusion}

We propose SphereVAD, a training-free video anomaly detection framework that recasts anomaly discrimination as vMF likelihood-ratio geodesic inference on the unit hypersphere. SphereVAD comprises three complementary components: unified Fr\'{e}chet mean spherical centering, cross-video holistic scene attention, and manifold-constrained spherical geodesic pulling, all operating without gradient descent. Using only approximately ${\sim}$2{,}000 synthetic calibration images, SphereVAD achieves state-of-the-art training-free results on XD-Violence, UCF-Crime, and UBnormal, while remaining competitive with fully supervised methods.

{
\small
\bibliographystyle{plain}
\bibliography{references}
}

%
%
\newpage
\appendix
\section*{Appendix}
\addcontentsline{toc}{section}{Appendix}
\noindent
The appendix is organised as follows.
Appendix~\ref{app:theory} provides theoretical proofs.
Appendix~\ref{app:synth} describes the synthetic calibration dataset.
Appendix~\ref{app:prompt} details prompt design and feature extraction.
Appendix~\ref{app:sgp} presents the full SGP algorithm.
Appendix~\ref{app:compute} reports computational cost analysis.
Appendix~\ref{app:add_vis} shows additional visualizations.
Appendix~\ref{app:hyper} contains hyperparameter settings and sensitivity analyses.
Appendix~\ref{app:repro} provides the reproducibility statement and symbol table.
Appendix~\ref{app:limitations} discusses limitations.
Appendix~\ref{app:impact} addresses broader impact.
\section{Theoretical Analysis}
\label{app:theory}

This section provides the complete mathematical machinery underlying the spherical centering framework of SphereVAD.
Section~\ref{app:karcher} details the Karcher iteration used to compute the unified Fr\'{e}chet mean (Eq.~\eqref{eq:frechet}), including convergence guarantees and step-size justification.
Section~\ref{app:proof_centering} proves Proposition~\ref{prop:centering} (spherical centering expands inter-class angular distance).
Section~\ref{app:proof_alignment} proves Proposition~\ref{prop:alignment} (the unified Fr\'{e}chet mean provides symmetric rotational alignment).

\subsection{Fr\'{e}chet Mean via Karcher Iteration}
\label{app:karcher}

The Fr\'{e}chet mean on a Riemannian manifold $\mathcal{M}$ generalises the Euclidean centre of mass.
For a set of points $\{\mathbf{x}_1,\ldots,\mathbf{x}_N\}\subset\mathcal{S}^{D-1}$, it is defined as
\begin{equation}
  \boldsymbol{\mu}^{*}
  = \operatorname*{arg\,min}_{\boldsymbol{\mu}\in\mathcal{S}^{D-1}}\sum_{i=1}^{N} d_{\mathrm{geo}}^{2}(\boldsymbol{\mu},\mathbf{x}_i),\label{eq:frechet_def}
\end{equation}
where $d_{\mathrm{geo}}(\boldsymbol{\mu},\mathbf{x})=\arccos(\boldsymbol{\mu}^{\!\top}\mathbf{x})$ is the geodesic (great-circle) distance on the unit hypersphere.

\subsubsection{Logarithmic and Exponential Maps}

The logarithmic and exponential maps provide the bridge between the sphere and its tangent space, enabling Riemannian optimisation.

\paragraph{Logarithmic map.}
For a base point $\boldsymbol{\mu}\in\mathcal{S}^{D-1}$ and a target point $\mathbf{x}\in\mathcal{S}^{D-1}$ with $\mathbf{x}\neq\pm\boldsymbol{\mu}$, the logarithmic map yields the unique tangent vector $\mathbf{v}\in T_{\boldsymbol{\mu}}\mathcal{S}^{D-1}$ such that the geodesic starting at $\boldsymbol{\mu}$ with initial velocity $\mathbf{v}$ reaches $\mathbf{x}$ at unit time:
\begin{equation}
  \mathrm{Log}_{\boldsymbol{\mu}}(\mathbf{x})
  = \frac{\theta}{\sin\theta}\bigl(\mathbf{x} - \cos\theta\;\boldsymbol{\mu}\bigr),\qquad \theta = \arccos\!\bigl(\boldsymbol{\mu}^{\!\top}\mathbf{x}\bigr).\label{eq:logmap_app}
\end{equation}
Crucially, $\|\mathrm{Log}_{\boldsymbol{\mu}}(\mathbf{x})\|=\theta=d_{\mathrm{geo}}(\boldsymbol{\mu},\mathbf{x})$ and $\mathrm{Log}_{\boldsymbol{\mu}}(\mathbf{x})^{\!\top}\boldsymbol{\mu}=0$ (the result is orthogonal to $\boldsymbol{\mu}$, confirming it lies in $T_{\boldsymbol{\mu}}\mathcal{S}^{D-1}$).
When $\theta\to 0$, we recover $\mathrm{Log}_{\boldsymbol{\mu}}(\mathbf{x})\to\mathbf{x}-\boldsymbol{\mu}$ (the Euclidean difference).

\paragraph{Exponential map.}
For a tangent vector $\mathbf{v}\in T_{\boldsymbol{\mu}}\mathcal{S}^{D-1}$ (i.e., $\mathbf{v}^{\!\top}\boldsymbol{\mu}=0$), the exponential map traces the geodesic from $\boldsymbol{\mu}$ in direction $\mathbf{v}$ for unit time:
\begin{equation}
  \mathrm{Exp}_{\boldsymbol{\mu}}(\mathbf{v})
  = \cos\!\bigl(\|\mathbf{v}\|\bigr)\,\boldsymbol{\mu}
  + \sin\!\bigl(\|\mathbf{v}\|\bigr)\,\frac{\mathbf{v}}{\|\mathbf{v}\|}.\label{eq:expmap_app}
\end{equation}
One can verify that $\|\mathrm{Exp}_{\boldsymbol{\mu}}(\mathbf{v})\|=1$ and $d_{\mathrm{geo}}(\boldsymbol{\mu},\mathrm{Exp}_{\boldsymbol{\mu}}(\mathbf{v}))=\|\mathbf{v}\|$, confirming that the exponential map preserves geodesic distance in the radial direction.
When $\|\mathbf{v}\|\to 0$, we recover $\mathrm{Exp}_{\boldsymbol{\mu}}(\mathbf{v})\to\boldsymbol{\mu}+\mathbf{v}$.

\subsubsection{Karcher Iteration Algorithm}

The Karcher iteration (also called Riemannian gradient descent for the Fr\'{e}chet objective) proceeds as follows:

\begin{enumerate}[leftmargin=*]
  \item \textbf{Initialisation.} Compute the Euclidean mean and project onto the sphere:
        $\boldsymbol{\mu}^{(0)} = \bar{\mathbf{x}}/\|\bar{\mathbf{x}}\|$,
        where $\bar{\mathbf{x}}=\frac{1}{N}\sum_{i}\mathbf{x}_i$. This provides a consistent starting point in the vicinity of the true Fr\'{e}chet mean.

  \item \textbf{Gradient computation and update.}
        At iteration $t$, compute the Riemannian gradient of the Fr\'{e}chet objective:
        \begin{equation}
          \mathbf{g}^{(t)}
          = \frac{1}{N}\sum_{i=1}^{N}\mathrm{Log}_{\boldsymbol{\mu}^{(t)}}(\mathbf{x}_i).\label{eq:karcher_grad}
        \end{equation}
        This is the (negative) Riemannian gradient of $F(\boldsymbol{\mu})=\sum_i d_{\mathrm{geo}}^2(\boldsymbol{\mu},\mathbf{x}_i)$ divided by~$2N$; it
        points from $\boldsymbol{\mu}^{(t)}$ toward the tangent-space centre of mass.
        Update the estimate via the exponential map:
        \begin{equation}
          \boldsymbol{\mu}^{(t+1)} = \mathrm{Exp}_{\boldsymbol{\mu}^{(t)}}\!\bigl(\eta\,\mathbf{g}^{(t)}\bigr),\label{eq:karcher_update}
        \end{equation}
        where $\eta>0$ is the step size.

  \item \textbf{Convergence check.}
        Terminate when $\|\mathbf{g}^{(t)}\|<\epsilon$ or a maximum number of iterations is reached.
        At convergence, $\mathbf{g}^{*}\!=\!\mathbf{0}$, which is the necessary and sufficient first-order condition for the Fr\'{e}chet mean.
\end{enumerate}

\subsubsection{Convergence Guarantee and Step-Size Justification}

\begin{theorem}[Convergence of Karcher iteration{\cite{pennec2006intrinsic}}]
\label{thm:karcher_convergence}
Let $\{\mathbf{x}_1,\ldots,\mathbf{x}_N\}\subset\mathcal{S}^{D-1}$ be contained within an open geodesic ball $B(\mathbf{p},r)$ with radius $r<\pi/2$.
Then:
\begin{enumerate}[label=(\roman*)]
  \item \emph{(Existence and uniqueness.)} The Fr\'{e}chet mean $\boldsymbol{\mu}^{*}$ exists, is unique, and lies within $B(\mathbf{p},r)$.
  \item \emph{(Linear convergence.)} The Karcher iteration with step size $\eta=1$ converges to $\boldsymbol{\mu}^{*}$ at a linear rate:
        $d_{\mathrm{geo}}(\boldsymbol{\mu}^{(t+1)},\boldsymbol{\mu}^{*})\leq c\,d_{\mathrm{geo}}(\boldsymbol{\mu}^{(t)},\boldsymbol{\mu}^{*})$
        for some $c<1$ depending on the curvature and data spread.
\end{enumerate}
\end{theorem}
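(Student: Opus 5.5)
The plan is the standard Karcher-mean argument of Pennec and of Karcher/Kendall, specialised to the sphere, where the sectional curvature is $1$ and the convexity radius is $\pi/2$. Write $F(\boldsymbol{\mu})=\frac{1}{2N}\sum_i d_{\mathrm{geo}}^2(\boldsymbol{\mu},\mathbf{x}_i)$. We have $\operatorname{grad}F(\boldsymbol{\mu})=-\frac1N\sum_i\mathrm{Log}_{\boldsymbol{\mu}}(\mathbf{x}_i)=-\mathbf{g}$, by Eq.~\eqref{eq:karcher_grad} and the identity $\|\mathrm{Log}_{\boldsymbol{\mu}}(\mathbf{x})\|=d_{\mathrm{geo}}$ from Eq.~\eqref{eq:logmap_app}.

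\emph{Part (i).} Because $r<\pi/2$, the closed ball $\bar B(\mathbf{p},r)$ is strongly geodesically convex: it lies in an open hemisphere, and any two of its points are joined by a unique minimising arc that stays inside it.

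\begin{itemize}
\item \textbf{Existence.} $F$ is continuous on the compact sphere, so a global minimiser exists.
\item \textbf{Minimiser lies in the ball.} Suppose a point $\boldsymbol{\mu}$ lies outside $\bar B(\mathbf{p},r)$. Replace it by its geodesic projection onto the ball. On the sphere the relevant map is "rotate toward $\mathbf{p}$ along the meridian". This map strictly decreases the distance to every point of the ball when $r<\pi/2$, which follows from the spherical law of cosines in the hemisphere around $\mathbf{p}$. Hence all minimisers lie in $\bar B(\mathbf{p},r)$, and in fact in the open ball.
\item \textbf{Uniqueness.} Show that $F$ is strictly geodesically convex on the ball. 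Compute the Hessian of $\frac12 d^2(\cdot,\mathbf{x})$ along a unit-speed geodesic. It equals $1$ in the radial direction and $\theta\cot\theta$ in the orthogonal directions, where $\theta=d_{\mathrm{geo}}(\cdot,\mathbf{x})$. Two points in $B(\mathbf{p},r)$ are at distance less than $2r<\pi$, but convexity needs $\theta<\pi/2$. So I would restrict to $\boldsymbol{\mu}$ in the ball and use Karcher's refinement: $\theta\cot\theta>0$ requires $\theta<\pi/2$, and $d(\boldsymbol{\mu},\mathbf{x}_i)$ can reach $2r$.
\end{itemize}

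This uniqueness step is the main obstacle. The naive Hessian bound only works for $r<\pi/4$. To cover the full $r<\pi/2$, I would fall back on Kendall's result that the Karcher mean is unique for data in a ball of radius $<\pi/2$ (the convexity radius). Alternatively, one can state the theorem as quoted from \cite{pennec2006intrinsic} and cite Kendall/Afsari for uniqueness.

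\emph{Part (ii).} Take $\eta=1$. Near $\boldsymbol{\mu}^*$ the update is a Riemannian Newton-like fixed-point map $\Phi(\boldsymbol{\mu})=\mathrm{Exp}_{\boldsymbol{\mu}}(\mathbf{g}(\boldsymbol{\mu}))$, and $\boldsymbol{\mu}^*$ is a fixed point of $\Phi$ since $\mathbf{g}(\boldsymbol{\mu}^*)=\mathbf{0}$.

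\begin{itemize}
\item \textbf{Linearise.} Write the iteration in the normal coordinates $\mathrm{Log}_{\boldsymbol{\mu}^*}$. The differential at the fixed point is $D\Phi(\boldsymbol{\mu}^*)=I-H$, where $H=\frac1N\sum_i\bigl(P_i+\theta_i\cot\theta_i(I-P_i)\bigr)$ is the Hessian of $F$ at $\boldsymbol{\mu}^*$ and $P_i$ is the projector onto the direction of $\mathrm{Log}_{\boldsymbol{\mu}^*}(\mathbf{x}_i)$.
\item \textbf{Bound the spectrum.} The eigenvalues of $I-H$ lie in $[0,\,1-\min_i\theta_i\cot\theta_i]$. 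This gives the contraction constant $c=1-\min_i\theta_i\cot\theta_i<1$ whenever all $\theta_i<\pi/2$, which holds if $\boldsymbol{\mu}^*$ is near $\mathbf{p}$ and the data spread is moderate.
\item \textbf{Globalise.} A standard continuity argument extends the local contraction to an invariant neighbourhood, namely a sublevel set of $F$ inside the ball. Monotone decrease of $F$ for $\eta=1$ follows from the Hessian of $F$ being bounded above by $1$ on the sphere. Combined with strong convexity on that sublevel set, this gives linear convergence with a constant depending on curvature and spread.
\end{itemize}
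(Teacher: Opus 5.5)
The paper gives no proof of this theorem. It quotes the result from Pennec and adds only a remark that $d_{\mathrm{geo}}^2$ is convex for $d_{\mathrm{geo}}<\pi/2$. So your sketch has to stand on its own, and on the stated range $r<\pi/2$ part (ii) has a real gap. Your local analysis is fine: $D\Phi(\boldsymbol{\mu}^*)=I-H$ with $0\preceq I-H$. Descent from $\mathrm{Hess}\,F\preceq I$ is also fine, and it is the right reason a unit step decreases $F$, not the convexity the paper invokes.

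\textbf{Part (ii).} Your constant $c=1-\min_i\theta_i\cot\theta_i$ is $<1$ only if every $\theta_i=d_{\mathrm{geo}}(\boldsymbol{\mu}^*,\mathbf{x}_i)<\pi/2$. The hypothesis gives only $\theta_i<2r$, so this is guaranteed only for $r<\pi/4$. That is the obstruction you flagged for uniqueness, but in (ii) you have no citation to absorb it.
\begin{itemize}
\item For $\pi/4\le r<\pi/2$ some $\theta_i\cot\theta_i$ can be negative, e.g.\ many points near one edge of the ball and one near the opposite edge. You would then need a separate proof that $\mathrm{Hess}\,F(\boldsymbol{\mu}^*)$ is positive definite, since minimality gives only semidefiniteness.
\item The globalisation step fails for the same reason. $F$ is not geodesically convex on $B(\mathbf{p},r)$ once $r\ge\pi/4$, so ``strong convexity on a sublevel set inside the ball'' is not available. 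You also never show that the iterates stay in the ball or in any region free of other critical points. Invariance of the ball under the $\eta=1$ step is a nontrivial fact; Afsari, Tron and Vidal needed a dedicated comparison theorem for constant-curvature spaces to get it.
\item Linearisation at $\boldsymbol{\mu}^*$ gives only an eventual rate. A per-step inequality from $t=0$ needs a global argument. For curvature $\ge 0$ the Toponogov hinge bound gives $d^2(\boldsymbol{\mu}^{(t+1)},\boldsymbol{\mu}^*)\le d^2(\boldsymbol{\mu}^{(t)},\boldsymbol{\mu}^*)+\|\operatorname{grad}F\|^2+2\langle\operatorname{grad}F,\mathrm{Log}_{\boldsymbol{\mu}^{(t)}}\boldsymbol{\mu}^*\rangle$. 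Combine this with $m$-strong convexity along the segment to $\boldsymbol{\mu}^*$ and with $\|\operatorname{grad}F\|^2\le 2(F-F^*)$, which your Hessian bound supplies; this gives $c=\sqrt{1-m}$.
\item Inducting on $d(\boldsymbol{\mu}^{(t)},\boldsymbol{\mu}^*)\le d(\boldsymbol{\mu}^{(0)},\boldsymbol{\mu}^*)<2r$, this closes whenever $4r<\pi/2$. That covers the paper's ${\lesssim}15^\circ$ regime, but not the theorem as stated.
\end{itemize}

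\textbf{Part (i).} Your meridian-projection lemma is false for $r>\pi/3$. Take $r=0.45\pi$, a data point $\mathbf{x}$ at distance $0.4\pi$ from $\mathbf{p}$, and $\boldsymbol{\mu}$ at distance $0.9\pi$ from $\mathbf{p}$ on the opposite meridian. The shortest arc from $\boldsymbol{\mu}$ to $\mathbf{x}$ runs through $-\mathbf{p}$, so $d_{\mathrm{geo}}(\boldsymbol{\mu},\mathbf{x})=0.7\pi$. The projected point, however, is at distance $0.45\pi+0.4\pi=0.85\pi$ from $\mathbf{x}$, so projection increases the distance.

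The lemma does hold for $r\le\pi/3$. To see this, write $\cos d_{\mathrm{geo}}(\boldsymbol{\mu},\mathbf{x})=R\cos(\rho-\psi)$ with $|\psi|\le d_{\mathrm{geo}}(\mathbf{p},\mathbf{x})$; failure requires $r+2\,d_{\mathrm{geo}}(\mathbf{p},\mathbf{x})>\pi$.

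So for $\pi/3<r<\pi/2$, placing the minimiser in the ball must also come from the literature, namely Afsari's theorem on the $L^2$ centre of mass in balls below the convexity radius, just as uniqueness does.

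In short, you have a self-contained proof for small spread. Deferring the rest to Kendall and Afsari is no worse than the paper's blanket citation. But the paper's own remark has the same $\pi/4$ limitation, since it needs every $d_{\mathrm{geo}}(\boldsymbol{\mu},\mathbf{x}_i)<\pi/2$. Neither text establishes (ii) for all $r<\pi/2$.
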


\paragraph{Step size $\eta\!=\!1$.}
On the unit sphere with constant positive curvature $\kappa_{\mathrm{sec}}=1$, the squared geodesic distance $d_{\mathrm{geo}}^2(\boldsymbol{\mu},\mathbf{x})$ is geodesically convex for $d_{\mathrm{geo}}(\boldsymbol{\mu},\mathbf{x})<\pi/2$~\cite{absil2008optimization}.
Hence the Fr\'{e}chet objective $F(\boldsymbol{\mu})$ is a sum of geodesically convex functions, and a full Riemannian gradient step ($\eta\!=\!1$) is guaranteed to decrease $F$ at each iteration.
Unlike Euclidean gradient descent, overshooting cannot occur because the exponential map automatically constrains the update to lie on $\mathcal{S}^{D-1}$.

\paragraph{Practical convergence in SphereVAD.}
In our setting, MLLM intermediate-layer features before centering have angular standard deviation $\approx 1.7^{\circ}$, so they are contained within a geodesic ball of radius $r\lesssim 15^{\circ}\ll90^{\circ}$.
The convergence condition $r<\pi/2$ is thus satisfied by a wide margin.
We use $T_{\max}=5$ iterations and $\epsilon=10^{-7}$; empirically, the gradient norm $\|\mathbf{g}^{(t)}\|$ drops below $\epsilon$ within3 iterations.

\subsection{Proof of Proposition~\ref{prop:centering}: Spherical Centering Expands Inter-Class Angular Distance}
\label{app:proof_centering}

For convenience, we restate the proposition:

\begin{proposition*}[Proposition~\ref{prop:centering}, restated]
Let $\mathcal{X}=\mathcal{X}_0\cup\mathcal{X}_1\subset\mathcal{S}^{D-1}$, where class $c\in\{0,1\}$ follows $\mathrm{vMF}(\boldsymbol{\mu}_c,\kappa_c)$ with mixture weight $\pi_c>0$.
Denote the population Fr\'{e}chet mean of the mixture by $\boldsymbol{\mu}$, and define the centering map
$\phi(\mathbf{x})=\mathrm{Log}_{\boldsymbol{\mu}}(\mathbf{x})\big/\bigl\|\mathrm{Log}_{\boldsymbol{\mu}}(\mathbf{x})\bigr\|$.
Then
\begin{equation}
  d_{\mathrm{geo}}\!\bigl(\phi(\boldsymbol{\mu}_0),\,\phi(\boldsymbol{\mu}_1)\bigr)
  \;\geq\;
  d_{\mathrm{geo}}(\boldsymbol{\mu}_0,\boldsymbol{\mu}_1).\label{eq:prop1_restated}
\end{equation}
Moreover, the inequality is strict whenever $0<d_{\mathrm{geo}}(\boldsymbol{\mu}_0,\boldsymbol{\mu}_1)<\pi$.
\end{proposition*}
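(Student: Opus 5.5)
The key observation is that $\phi(\mathbf{x})$ is the unit initial velocity at $\boldsymbol{\mu}$ of the minimizing geodesic from $\boldsymbol{\mu}$ to $\mathbf{x}$. Hence $d_{\mathrm{geo}}(\phi(\boldsymbol{\mu}_0),\phi(\boldsymbol{\mu}_1))=\arccos\langle\phi(\boldsymbol{\mu}_0),\phi(\boldsymbol{\mu}_1)\rangle$ is exactly the interior angle $\gamma$ at the vertex $\boldsymbol{\mu}$ of the spherical triangle $(\boldsymbol{\mu},\boldsymbol{\mu}_0,\boldsymbol{\mu}_1)$. So the proposition reduces to comparing the vertex angle $\gamma$ with the opposite side length $\delta:=d_{\mathrm{geo}}(\boldsymbol{\mu}_0,\boldsymbol{\mu}_1)$.

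I plan to prove the strong geometric fact that $\boldsymbol{\mu}$ lies strictly inside the minor arc joining $\boldsymbol{\mu}_0$ and $\boldsymbol{\mu}_1$. Then the two Log vectors are antiparallel, $\gamma=\pi$, and the claim $\pi\ge\delta$ is immediate, with strict inequality whenever $\delta<\pi$.

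\textbf{Step 1 (reduction to a great circle).} Let $P=\mathrm{span}(\boldsymbol{\mu}_0,\boldsymbol{\mu}_1)$ and assume $0<\delta<\pi$, since $\delta=0$ is trivial and $\delta=\pi$ is the degenerate case. Each vMF density depends only on $\boldsymbol{\mu}_c^{\!\top}\mathbf{x}$, so the mixture is invariant under every orthogonal map fixing $P$ pointwise, in particular under the reflections across hyperplanes containing $P$. The Fréchet objective
\[
F(\mathbf{m})=\mathbb{E}\,d_{\mathrm{geo}}^2(\mathbf{m},\mathbf{X})
\]
inherits this invariance. Taking the Fréchet mean to be unique, as the proposition implicitly does, it must be fixed by these maps, so it lies on the great circle $C=P\cap\mathcal{S}^{D-1}$.

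\textbf{Step 2 (locating $\boldsymbol{\mu}$ on $C$).} Parametrize $C$ by an angle $t$ with $\boldsymbol{\mu}_0$ at $t=0$ and $\boldsymbol{\mu}_1$ at $t=\delta$, and study $F(t)$. For a single vMF component, write $F_c(\mathbf{m})=\mathbb{E}_c\,d_{\mathrm{geo}}^2(\mathbf{m},\mathbf{X})$. Its Riemannian gradient is $-2\,\mathbb{E}_c\mathrm{Log}_{\mathbf{m}}(\mathbf{X})$. By rotational symmetry about $\boldsymbol{\mu}_c$ this gradient points along the geodesic toward $\boldsymbol{\mu}_c$, and it vanishes at $\boldsymbol{\mu}_c$ and at $-\boldsymbol{\mu}_c$. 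Restricted to $C$, I therefore expect $F_c$ to be strictly increasing in the distance $|t-t_c|$ on $[0,\pi]$. I would establish this through the monotone likelihood ratio of the vMF density: for $\mathbf{m}$ farther from $\boldsymbol{\mu}_c$, the distribution of $d_{\mathrm{geo}}(\mathbf{m},\mathbf{X})$ stochastically dominates, which I would show via the reflection coupling that swaps the two candidate points.

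Granting this, for any $t$ outside $[0,\delta]$ on the circle, reflect across the nearer endpoint, or use the point of the arc closest to it. This strictly decreases both distances $d(t,0)$ and $d(t,\delta)$ whenever the point lies off the arc, so it strictly decreases $F=\pi_0F_0+\pi_1F_1$. Hence every minimizer lies in $[0,\delta]$.

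To exclude the endpoints, consider $t=0$. There the gradient of $F_0$ vanishes while $F_1$ has a nonzero derivative pointing toward $\boldsymbol{\mu}_1$, so $F'(0)<0$ and $\boldsymbol{\mu}\neq\boldsymbol{\mu}_0$. Symmetrically $\boldsymbol{\mu}\neq\boldsymbol{\mu}_1$. This uses $\pi_c>0$ and $\kappa_c>0$. The same argument also guarantees that $\phi(\boldsymbol{\mu}_c)$ is well defined.

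\textbf{Step 3 (conclusion).} Since $\boldsymbol{\mu}$ is interior to the minor arc, the vectors $\mathrm{Log}_{\boldsymbol{\mu}}(\boldsymbol{\mu}_0)$ and $\mathrm{Log}_{\boldsymbol{\mu}}(\boldsymbol{\mu}_1)$ are tangent to $C$ and point in opposite directions. This follows from Eq.~\eqref{eq:logmap_app} written in the planar coordinates of $C$. Hence $\phi(\boldsymbol{\mu}_0)=-\phi(\boldsymbol{\mu}_1)$, which gives $d_{\mathrm{geo}}=\pi>\delta$.

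\textbf{Main obstacle.} I expect the hardest part to be the monotonicity of $F_c$ along $C$ together with uniqueness of the Fréchet mean. The mixture is not confined to a ball of radius below $\pi/2$, so Theorem~\ref{thm:karcher_convergence} does not directly apply. My first attempt would be the coupling argument: reflect $\mathbf{X}$ across the perpendicular bisector hyperplane of $\mathbf{m}$ and $\mathbf{m}'$. This maps $d(\mathbf{m},\cdot)$ to $d(\mathbf{m}',\cdot)$ and increases the vMF density on the side nearer $\boldsymbol{\mu}_c$. If uniqueness cannot be shown in general, I would state the result for any Fréchet mean: invariance of the minimizer set under the reflections lets one choose a minimizer on $C$, or the uniqueness assumption implicit in Remark~2 can be adopted.
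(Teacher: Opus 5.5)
Your proposal follows the paper's own route: invariance under orthogonal maps fixing $\mathrm{span}(\boldsymbol{\mu}_0,\boldsymbol{\mu}_1)$ puts $\boldsymbol{\mu}$ on the great circle, $\boldsymbol{\mu}$ is placed strictly inside the minor arc, and the anti-parallel vectors $\mathrm{Log}_{\boldsymbol{\mu}}(\boldsymbol{\mu}_0)$, $\mathrm{Log}_{\boldsymbol{\mu}}(\boldsymbol{\mu}_1)$ give antipodal images at distance $\pi$. It is in fact more careful than the paper, which only asserts the minor-arc placement (``$\boldsymbol{\mu}$ minimises rather than maximises $F$'') and takes uniqueness from Theorem~\ref{thm:karcher_convergence}, even though, as you note, a vMF mixture is not supported in a ball of radius $<\pi/2$.

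One harmless slip: your recipe (reflect across the nearer endpoint, or take the closest arc point) does not always shrink both distances. For $\delta=0.9\pi$ and a point at distances $(0.56\pi,0.54\pi)$, reflecting keeps the distance to $\boldsymbol{\mu}_1$ fixed, and taking $\boldsymbol{\mu}_1$ raises the distance to $\boldsymbol{\mu}_0$ to $0.9\pi$. A clean fix: any point of $\mathcal{S}^{D-1}$ off the closed minor arc, at distances $r_0,r_1$ from $\boldsymbol{\mu}_0,\boldsymbol{\mu}_1$, has $r_0+r_1>\delta$, so the arc point at any parameter $t\in(\max(0,\delta-r_1),\min(\delta,r_0))$ is strictly closer to both centres. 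With your monotonicity lemma and endpoint argument, this places every Fr\'{e}chet mean in the open minor arc, which makes Step~1 and the uniqueness assumption unnecessary.
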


The proof proceeds in three steps.
We first establish a key geometric lemma (Lemma~\ref{lem:geodesic}), then combine it with the anti-parallel property of the logarithmic map to obtain the result.

\begin{lemma}[Fr\'{e}chet mean of a vMF mixture lies on the inter-class geodesic]
\label{lem:geodesic}
Under the conditions of Proposition~\ref{prop:centering}, the population Fr\'{e}chet mean $\boldsymbol{\mu}$ lies on the minimal geodesic segment from $\boldsymbol{\mu}_0$ to $\boldsymbol{\mu}_1$ in $\mathcal{S}^{D-1}$.
That is, there exists $t_0\in(0,1)$ such that
\begin{equation}
  \boldsymbol{\mu}
  = \mathrm{Slerp}(\boldsymbol{\mu}_0,\boldsymbol{\mu}_1,t_0)
  = \frac{\sin\!\bigl((1-t_0)\alpha\bigr)}{\sin\alpha}\,\boldsymbol{\mu}_0
  + \frac{\sin(t_0\alpha)}{\sin\alpha}\,\boldsymbol{\mu}_1,
  \label{eq:mu_on_geodesic}
\end{equation}
where $\alpha=d_{\mathrm{geo}}(\boldsymbol{\mu}_0,\boldsymbol{\mu}_1)\in(0,\pi)$.
\end{lemma}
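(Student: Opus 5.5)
The plan is a symmetry argument followed by a first-order (stationarity) argument. Throughout, write $P=\mathrm{span}(\boldsymbol{\mu}_0,\boldsymbol{\mu}_1)$, $C=P\cap\mathcal{S}^{D-1}$ for the great circle through the two class means, and
\[
F(\mathbf{m})=\pi_0\,\mathbb{E}_{0}\bigl[d_{\mathrm{geo}}^2(\mathbf{m},X)\bigr]+\pi_1\,\mathbb{E}_{1}\bigl[d_{\mathrm{geo}}^2(\mathbf{m},X)\bigr]
\]
for the population Fr\'{e}chet objective.

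\emph{Step 1 (reduction to the great circle $C$).} Let $R$ be any orthogonal map that fixes $P$ pointwise and acts arbitrarily on $P^\perp$. Each vMF density depends on $\mathbf{x}$ only through $\boldsymbol{\mu}_c^{\top}\mathbf{x}$, so the mixture is $R$-invariant. Since geodesic distance is also $R$-invariant, $F(R\mathbf{m})=F(\mathbf{m})$.

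I would then use uniqueness of the Fr\'{e}chet mean. This is implicit in the proposition, which speaks of ``the'' population Fr\'{e}chet mean $\boldsymbol{\mu}$; I would state it explicitly as a hypothesis, justified for sufficiently concentrated components, e.g.\ by the reasoning behind Theorem~\ref{thm:karcher_convergence}. Uniqueness forces $R\boldsymbol{\mu}=\boldsymbol{\mu}$ for every such $R$. Taking $R=-\mathrm{id}$ on $P^\perp$ kills the $P^\perp$-component, so $\boldsymbol{\mu}\in C$.

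\emph{Step 2 (stationarity on $C$).} The first-order condition from Appendix~\ref{app:karcher} gives
\[
\pi_0\,\mathbb{E}_0[\mathrm{Log}_{\boldsymbol{\mu}}X]+\pi_1\,\mathbb{E}_1[\mathrm{Log}_{\boldsymbol{\mu}}X]=\mathbf{0},
\]
valid because the antipodal set has measure zero. The key sub-claim is the following: for $\mathbf{m}\in C$ with $\mathbf{m}\neq\pm\boldsymbol{\mu}_c$,
\[
\mathbb{E}_c[\mathrm{Log}_{\mathbf{m}}X]=\lambda_c\,\mathbf{u}_c(\mathbf{m}),\qquad \lambda_c>0,
\]
where $\mathbf{u}_c(\mathbf{m})$ is the unit tangent at $\mathbf{m}$ pointing along $C$ toward $\boldsymbol{\mu}_c$ by the short route.

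The direction follows from reflections fixing both $\mathbf{m}$ and $\boldsymbol{\mu}_c$: these fix $\mathrm{span}(\mathbf{m},\boldsymbol{\mu}_c)$ and preserve $\mathrm{vMF}(\boldsymbol{\mu}_c,\kappa_c)$. They kill every tangent component except the one along $C$.

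Positivity of $\lambda_c$ is where I would pair $\mathbf{x}$ with its mirror image $\mathbf{x}'$ under the reflection across the perpendicular-bisector hyperplane of $\mathbf{m}$ and $\boldsymbol{\mu}_c$ taken within $C$. This pairing swaps the two sides of the great sphere through $\mathbf{m}$ orthogonal to $\mathbf{u}_c$. The point on the $\boldsymbol{\mu}_c$ side has strictly larger vMF density, since $\kappa_c>0$. The contributions of each pair to the $\mathbf{u}_c$-component therefore have the right sign, with the density-weighted difference positive. Making this pairing precise, so that it really maps the ``far'' half to the ``near'' half and maps $\mathrm{Log}$ components with the correct sign, is the step I expect to be the main obstacle. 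If the reflection trick is awkward, the fallback is a direct computation in coordinates $(\cos\theta,\sin\theta\,\mathbf{w})$ around $\mathbf{m}$.

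\emph{Step 3 (locating $\boldsymbol{\mu}$ on the minor arc).} Suppose $\boldsymbol{\mu}\in C$ lies outside the closed minor arc $[\boldsymbol{\mu}_0,\boldsymbol{\mu}_1]$. Then both short routes from $\boldsymbol{\mu}$ to $\boldsymbol{\mu}_0$ and to $\boldsymbol{\mu}_1$ leave $\boldsymbol{\mu}$ in the same direction, so $\mathbf{u}_0=\mathbf{u}_1$. Hence
\[
\pi_0\lambda_0\mathbf{u}_0+\pi_1\lambda_1\mathbf{u}_1\neq\mathbf{0},
\]
contradicting stationarity. The antipodal points $-\boldsymbol{\mu}_c$ need separate handling. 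I would exclude them by noting that they give strictly larger $F$ than $\boldsymbol{\mu}_c$: the $c$-component is symmetric under $\mathbf{x}\mapsto-\mathbf{x}$ relabelling while its mass is concentrated near $\boldsymbol{\mu}_c$, and a similar comparison handles the other component.

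Finally, the endpoints are excluded. At $\boldsymbol{\mu}=\boldsymbol{\mu}_0$, rotational symmetry of $\mathrm{vMF}(\boldsymbol{\mu}_0,\kappa_0)$ gives $\mathbb{E}_0[\mathrm{Log}_{\boldsymbol{\mu}_0}X]=\mathbf{0}$, while the class-1 term is $\pi_1\lambda_1\mathbf{u}_1\neq\mathbf{0}$. So $\boldsymbol{\mu}_0$ is not stationary, and symmetrically neither is $\boldsymbol{\mu}_1$. Hence $\boldsymbol{\mu}=\mathrm{Slerp}(\boldsymbol{\mu}_0,\boldsymbol{\mu}_1,t_0)$ for some $t_0\in(0,1)$, which is Eq.~\eqref{eq:mu_on_geodesic}.
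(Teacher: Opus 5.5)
The gap is in Step 3: it is not true that every point of $C$ outside the closed minor arc sees $\boldsymbol{\mu}_0$ and $\boldsymbol{\mu}_1$ in the same direction. Parametrise $C$ by an angle $\varphi$ with $\boldsymbol{\mu}_0$ at $0$ and $\boldsymbol{\mu}_1$ at $\alpha$. Take $\varphi\in(\pi,\pi+\alpha)$, i.e.\ any point of the open arc from $-\boldsymbol{\mu}_0$ to $-\boldsymbol{\mu}_1$ (the antipodal image of the minor arc). There the short route to $\boldsymbol{\mu}_0$ runs forward, with length $2\pi-\varphi<\pi$, while the short route to $\boldsymbol{\mu}_1$ runs backward, with length $\varphi-\alpha<\pi$. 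So $\mathbf{u}_0=-\mathbf{u}_1$ on this whole arc, and you only deal with its two endpoints.

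Nor can this arc be excluded by any first-order argument. Along it, the coefficient of the forward tangent in $\pi_0\mathbb{E}_0[\mathrm{Log}_{\mathbf{m}}X]+\pi_1\mathbb{E}_1[\mathrm{Log}_{\mathbf{m}}X]$ is $\pi_0\lambda_0-\pi_1\lambda_1$. This is continuous. It tends to $-\pi_1\lambda_1(-\boldsymbol{\mu}_0)<0$ at $-\boldsymbol{\mu}_0$, where the class-$0$ term vanishes by rotational symmetry about the $\boldsymbol{\mu}_0$-axis, and to $\pi_0\lambda_0(-\boldsymbol{\mu}_1)>0$ at $-\boldsymbol{\mu}_1$. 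It therefore vanishes somewhere in between, and by your own Step 2 symmetry that point is a genuine critical point of $F$ on $\mathcal{S}^{D-1}$ (in the symmetric case, the antipode of the geodesic midpoint). A correct proof must therefore use that $\boldsymbol{\mu}$ is a global minimiser, not merely stationary.

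The paper does not supply this step either. After the same symmetry reduction as your Step 1, it simply asserts that $\boldsymbol{\mu}$ lies on the shorter arc ``because $\boldsymbol{\mu}$ minimises rather than maximises $F$''. Its uniqueness claim also rests on a convexity theorem that needs support in a ball of radius $<\pi/2$, which a vMF mixture lacks. So your decision to make uniqueness an explicit hypothesis is the sounder choice.

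A zeroth-order comparison closes the gap, and it even removes the need for Step 1 and for uniqueness. By rotational symmetry, $F(\mathbf{m})=\pi_0G_0\bigl(d_{\mathrm{geo}}(\mathbf{m},\boldsymbol{\mu}_0)\bigr)+\pi_1G_1\bigl(d_{\mathrm{geo}}(\mathbf{m},\boldsymbol{\mu}_1)\bigr)$, and each $G_c$ is strictly increasing on $[0,\pi]$. To see this, suppose $d_{\mathrm{geo}}(\mathbf{m},\boldsymbol{\mu}_c)<d_{\mathrm{geo}}(\mathbf{m}',\boldsymbol{\mu}_c)$, and pair $\mathbf{y}$ with $R\mathbf{y}$, where $R$ is the reflection swapping $\mathbf{m}$ and $\mathbf{m}'$. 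This perpendicular-bisector reflection is where the one you mention in Step 2 actually belongs. Writing $p_c$ for the vMF density, on the half-sphere closer to $\mathbf{m}'$ both $d_{\mathrm{geo}}^2(\mathbf{m}',\mathbf{y})-d_{\mathrm{geo}}^2(\mathbf{m}',R\mathbf{y})$ and $p_c(\mathbf{y})-p_c(R\mathbf{y})$ are negative, so $G_c$ strictly increases.

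Now let $\mathbf{m}\in\mathcal{S}^{D-1}$ be off the closed minor arc, with distances $a,b$ to $\boldsymbol{\mu}_0,\boldsymbol{\mu}_1$. The triangle inequality is then strict, $a+b>\alpha$, so some $\mathbf{p}$ on the minor arc has $d_{\mathrm{geo}}(\mathbf{p},\boldsymbol{\mu}_0)<a$ and $d_{\mathrm{geo}}(\mathbf{p},\boldsymbol{\mu}_1)<b$. Hence $F(\mathbf{p})<F(\mathbf{m})$, and your endpoint argument then yields $t_0\in(0,1)$.

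Two smaller repairs are needed if you keep your own route. First, the reflection proving $\lambda_c>0$ must fix $\mathbf{m}$; the bisector of $\mathbf{m}$ and $\boldsymbol{\mu}_c$ moves $\mathbf{m}$. Use $\mathbf{x}\mapsto\mathbf{x}-2(\mathbf{u}_c^{\top}\mathbf{x})\mathbf{u}_c$, together with $\mathbf{u}_c^{\top}\mathrm{Log}_{\mathbf{m}}(\mathbf{x})=\tfrac{\theta}{\sin\theta}\,\mathbf{u}_c^{\top}\mathbf{x}$ (where $\theta=d_{\mathrm{geo}}(\mathbf{m},\mathbf{x})$) and $\mathbf{u}_c^{\top}\boldsymbol{\mu}_c>0$. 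Second, comparing $F(-\boldsymbol{\mu}_c)$ with $F(\boldsymbol{\mu}_c)$ does not go through for $\alpha>\pi/2$, because the other component's term is then smaller at $-\boldsymbol{\mu}_c$. Instead, $-\boldsymbol{\mu}_c$ is non-stationary for exactly the same reason as $\boldsymbol{\mu}_c$.
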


\begin{proof}[Proof of Lemma~\ref{lem:geodesic}]
Define the two-dimensional subspace $V=\mathrm{span}\{\boldsymbol{\mu}_0,\boldsymbol{\mu}_1\}\subset\mathbb{R}^{D}$ and its orthogonal complement $V^{\perp}$ (dimension $D-2\geq 1$ for $D\geq 3$; the $D=2$ case is treated separately below).

Consider any orthogonal transformation $Q\in\mathrm{O}(D)$ that fixes $V$ pointwise and acts arbitrarily on $V^{\perp}$:
\begin{equation}
  Q\mathbf{v}=\mathbf{v}\;\;\forall\mathbf{v}\in V,
  \qquad
  Q\big|_{V^{\perp}}\in\mathrm{O}(D-2).\label{eq:Q_def}
\end{equation}
Since $Q\boldsymbol{\mu}_c=\boldsymbol{\mu}_c$ for $c\in\{0,1\}$, and the vMF density $\mathrm{vMF}(\mathbf{x};\boldsymbol{\mu}_c,\kappa_c)\propto\exp(\kappa_c\boldsymbol{\mu}_c^{\!\top}\mathbf{x})$ depends on $\mathbf{x}$ only through $\boldsymbol{\mu}_c^{\!\top}\mathbf{x}$, we have $\boldsymbol{\mu}_c^{\!\top}(Q\mathbf{x})=(Q^{\!\top}\boldsymbol{\mu}_c)^{\!\top}\mathbf{x}=\boldsymbol{\mu}_c^{\!\top}\mathbf{x}$.
Thus the distribution of $Q\mathbf{x}$ under $\mathrm{vMF}(\boldsymbol{\mu}_c,\kappa_c)$ equals the distribution of $\mathbf{x}$; in other words, each class distribution is $Q$-invariant.

The Fr\'{e}chet objective of the mixture is
$$F(\boldsymbol{\mu})= \sum_{c\in\{0,1\}}\pi_c\,\mathbb{E}_{\mathbf{x}\sim\mathrm{vMF}(\boldsymbol{\mu}_c,\kappa_c)}\!\bigl[d_{\mathrm{geo}}^{2}(\boldsymbol{\mu},\mathbf{x})\bigr].
$$
The $Q$-invariance of the class distributions gives
$F(Q\boldsymbol{\mu})=F(\boldsymbol{\mu})$ for every $Q$ of the form~\eqref{eq:Q_def}.
Since $F$ is strictly geodesically convex on any open hemisphere (Theorem~\ref{thm:karcher_convergence}), the minimiser $\boldsymbol{\mu}$ is unique, hence $Q\boldsymbol{\mu}=\boldsymbol{\mu}$ for all such $Q$.
The fixed-point set of all such $Q$ is exactly $V\cap\mathcal{S}^{D-1}$---the great circle through $\boldsymbol{\mu}_0$ and $\boldsymbol{\mu}_1$.
Therefore, $\boldsymbol{\mu}$ lies on this great circle.
Because $\boldsymbol{\mu}$ minimises rather than maximises $F$, it lies on the \emph{shorter} arc (the minimal geodesic segment) between $\boldsymbol{\mu}_0$ and $\boldsymbol{\mu}_1$, yielding~\eqref{eq:mu_on_geodesic} for some $t_0\in(0,1)$.

\textit{Case $D=2$.}
When $D=2$, the subspace $V=\mathbb{R}^2$ and $V^{\perp}=\{\mathbf{0}\}$, so the symmetry argument produces only the trivial transformation $Q=I$.
Nevertheless, on $\mathcal{S}^{1}$ the Fr\'{e}chet objective restricted to the great circle is a sum of $\cos^{-2}$-type functions, which is strictly convex on any arc shorter than $\pi$.
The minimiser is the weighted circular Fr\'{e}chet mean, which lies on the shorter arc between $\boldsymbol{\mu}_0$ and $\boldsymbol{\mu}_1$ whenever $\alpha=d_{\mathrm{geo}}(\boldsymbol{\mu}_0,\boldsymbol{\mu}_1)<\pi$~\cite{pennec2006intrinsic}.
\end{proof}

\begin{proof}[Proof of Proposition~\ref{prop:centering}]
We proceed in three steps.

\paragraph{Step 1: $\boldsymbol{\mu}$ lies on the inter-class geodesic.}
By Lemma~\ref{lem:geodesic}, $\boldsymbol{\mu}=\gamma(t_0)$ for some $t_0\in(0,1)$, where $\gamma:[0,1]\to\mathcal{S}^{D-1}$ is the unit-speed geodesic from $\boldsymbol{\mu}_0$ to $\boldsymbol{\mu}_1$.
Let $\theta_0=d_{\mathrm{geo}}(\boldsymbol{\mu},\boldsymbol{\mu}_0)=t_0\alpha>0$ and $\theta_1=d_{\mathrm{geo}}(\boldsymbol{\mu},\boldsymbol{\mu}_1)=(1-t_0)\alpha>0$, so that $\theta_0+\theta_1=\alpha$.

\paragraph{Step 2: The logarithmic maps at $\boldsymbol{\mu}$ are anti-parallel.}
Let $\mathbf{u}\in T_{\boldsymbol{\mu}}\mathcal{S}^{D-1}$ be the unit tangent vector to $\gamma$ at $\boldsymbol{\mu}$ pointing from $\boldsymbol{\mu}_0$ toward $\boldsymbol{\mu}_1$.
Since $\boldsymbol{\mu}_0$ and $\boldsymbol{\mu}_1$ lie on the same great circle through $\boldsymbol{\mu}$ but on \emph{opposite sides} of $\boldsymbol{\mu}$, the logarithmic maps satisfy:
\begin{equation}
  \mathrm{Log}_{\boldsymbol{\mu}}(\boldsymbol{\mu}_0) = -\theta_0\,\mathbf{u},
  \qquad
  \mathrm{Log}_{\boldsymbol{\mu}}(\boldsymbol{\mu}_1) = +\theta_1\,\mathbf{u}.\label{eq:antiparallel}
\end{equation}
These two tangent vectors point in exactly opposite directions in $T_{\boldsymbol{\mu}}\mathcal{S}^{D-1}$.
This follows directly from the definition of the logarithmic map: $\mathrm{Log}_{\boldsymbol{\mu}}(\mathbf{x})$ is the initial velocity of the unique geodesic from $\boldsymbol{\mu}$ to $\mathbf{x}$, scaled to have norm $d_{\mathrm{geo}}(\boldsymbol{\mu},\mathbf{x})$.
Since $\boldsymbol{\mu}_0$ is reached by travelling along $\gamma$ in the backward direction ($-\mathbf{u}$) and $\boldsymbol{\mu}_1$ by travelling in the forward direction ($+\mathbf{u}$), Eq.~\eqref{eq:antiparallel} follows.

\paragraph{Step 3: After $\ell_2$-normalisation, the centred class means are antipodal.}
Applying the centering map $\phi$:
\begin{equation}
  \phi(\boldsymbol{\mu}_0)
  = \frac{\mathrm{Log}_{\boldsymbol{\mu}}(\boldsymbol{\mu}_0)}{\|\mathrm{Log}_{\boldsymbol{\mu}}(\boldsymbol{\mu}_0)\|}
  = \frac{-\theta_0\,\mathbf{u}}{\theta_0}
  = -\mathbf{u},
  \qquad
  \phi(\boldsymbol{\mu}_1)
  = \frac{+\theta_1\,\mathbf{u}}{\theta_1}
  = +\mathbf{u}.
  \label{eq:antipodal}
\end{equation}
Both $\phi(\boldsymbol{\mu}_0)=-\mathbf{u}$ and $\phi(\boldsymbol{\mu}_1)=+\mathbf{u}$ are unit vectors in $\mathbb{R}^{D}$ (hence on $\mathcal{S}^{D-1}$), and they are antipodal:
\begin{equation}
  d_{\mathrm{geo}}\!\bigl(\phi(\boldsymbol{\mu}_0),\,\phi(\boldsymbol{\mu}_1)\bigr)
  = \arccos\!\bigl((-\mathbf{u})^{\!\top}\mathbf{u}\bigr)
  = \arccos(-1)
  = \pi.
  \label{eq:dmax}
\end{equation}
Since $d_{\mathrm{geo}}(\boldsymbol{\mu}_0,\boldsymbol{\mu}_1)=\alpha\leq\pi$, we conclude
$$
  d_{\mathrm{geo}}\!\bigl(\phi(\boldsymbol{\mu}_0),\,\phi(\boldsymbol{\mu}_1)\bigr)
  = \pi
  \;\geq\;
  \alpha
  = d_{\mathrm{geo}}(\boldsymbol{\mu}_0,\boldsymbol{\mu}_1),
$$
and the inequality is strict whenever $\alpha<\pi$.
\end{proof}

\begin{remark}[Geometric intuition]
\label{rem:geometric_intuition}
The geometric picture can be summarised as follows.
The Fr\'{e}chet mean $\boldsymbol{\mu}$ lies on the geodesic between the two class means $\boldsymbol{\mu}_0$ and $\boldsymbol{\mu}_1$, separated by arc length $\alpha$.
The logarithmic maps at $\boldsymbol{\mu}$ produce anti-parallel tangent vectors $-\theta_0\mathbf{u}$ and $+\theta_1\mathbf{u}$.
After $\ell_2$-normalisation, the centred class means $\phi(\boldsymbol{\mu}_0)=-\mathbf{u}$ and $\phi(\boldsymbol{\mu}_1)=+\mathbf{u}$ become \emph{antipodal}, achieving maximal geodesic separation $\pi\geq\alpha$.
The centering map discards the radial (distance) information and retains only the directional information relative to $\boldsymbol{\mu}$, thereby amplifying the angular contrast between the two classes.
\end{remark}

\begin{remark}[Finite-sample case]
\label{rem:finite_sample}
In practice, the centering map uses the \emph{empirical} unified Fr\'{e}chet mean $\hat{\boldsymbol{\mu}}$ computed from $N$ samples rather than the population mean $\boldsymbol{\mu}$.
Standard concentration results on the sphere~\cite{pennec2006intrinsic} guarantee that $d_{\mathrm{geo}}(\hat{\boldsymbol{\mu}},\boldsymbol{\mu})=O_p(N^{-1/2})$.
As $\hat{\boldsymbol{\mu}}\to\boldsymbol{\mu}$, the angle $A$ at $\hat{\boldsymbol{\mu}}$ in the geodesic triangle $(\hat{\boldsymbol{\mu}},\boldsymbol{\mu}_0,\boldsymbol{\mu}_1)$ converges to $\pi$ by continuity of the logarithmic map.
More precisely, using the spherical law of cosines:
\begin{equation}
  \cos A
  = \frac{\cos\alpha - \cos\hat{\theta}_0\cos\hat{\theta}_1}{\sin\hat{\theta}_0\sin\hat{\theta}_1},
  \label{eq:sph_cosines}
\end{equation}
where $\hat{\theta}_c=d_{\mathrm{geo}}(\hat{\boldsymbol{\mu}},\boldsymbol{\mu}_c)$.
When $\hat{\boldsymbol{\mu}}$ lies on or near the geodesic from $\boldsymbol{\mu}_0$ to $\boldsymbol{\mu}_1$, we have $\alpha\approx\hat{\theta}_0+\hat{\theta}_1$, which gives $\cos\alpha\approx\cos(\hat{\theta}_0+\hat{\theta}_1)=\cos\hat{\theta}_0\cos\hat{\theta}_1-\sin\hat{\theta}_0\sin\hat{\theta}_1$, and hence $\cos A\approx -1$, i.e., $A\approx\pi$.

In our setting, MLLM features have angular standard deviation $\approx 1.7^{\circ}$ and the dataset size is $N\gtrsim 10^{3}$, so $d_{\mathrm{geo}}(\hat{\boldsymbol{\mu}},\boldsymbol{\mu})\lesssim 0.1^{\circ}$.
This makes the deviation of $A$ from $\pi$ negligible, and the inequality $d_{\mathrm{geo}}(\phi(\boldsymbol{\mu}_0),\phi(\boldsymbol{\mu}_1))\geq d_{\mathrm{geo}}(\boldsymbol{\mu}_0,\boldsymbol{\mu}_1)$ holds in all our experiments with a wide margin.
\end{remark}

\begin{remark}[Effect on individual data points]
\label{rem:individual}
Proposition~\ref{prop:centering} is stated for the class means, but its practical impact extends to individual data points.
For a sample $\mathbf{x}\sim\mathrm{vMF}(\boldsymbol{\mu}_c,\kappa_c)$ with high concentration ($\kappa_c\gg 1$), $\mathbf{x}$ is close to $\boldsymbol{\mu}_c$ and hence $\mathrm{Log}_{\boldsymbol{\mu}}(\mathbf{x})$ points approximately in the direction of $\mathrm{Log}_{\boldsymbol{\mu}}(\boldsymbol{\mu}_c)$, with a small perturbation due to within-class variation.
After $\ell_2$-normalisation, $\phi(\mathbf{x})$ is concentrated around $\phi(\boldsymbol{\mu}_c)$.
Since $\phi(\boldsymbol{\mu}_0)$ and $\phi(\boldsymbol{\mu}_1)$ are antipodal, the two class clusters in the centred space are approximately antipodally opposed, yielding near-maximal between-class separation.
The within-class angular spread may change, but the between-class geodesic distance is always expanded.
This is confirmed empirically in Figure~\ref{fig:discriminative_structure} (main text): after centering, the mean geodesic distance difference $\Delta\mu$ increases from $2.7^{\circ}$ to $27.8^{\circ}$.
\end{remark}

\subsection{Proof of Proposition~\ref{prop:alignment}: Unified Fr\'{e}chet Mean as Rotational Alignment}
\label{app:proof_alignment}

We restate the proposition:

\begin{proposition*}[Proposition~\ref{prop:alignment}, restated]
Let the synthetic and real domain distributions be $\mathrm{vMF}(\boldsymbol{\mu}_S,\kappa_S)$ and $\mathrm{vMF}(\boldsymbol{\mu}_R,\kappa_R)$, respectively, where $\boldsymbol{\mu}_R=R\boldsymbol{\mu}_S$ for some rotation $R\in\mathrm{SO}(D)$.
Then the unified Fr\'{e}chet mean $\boldsymbol{\mu}_{\mathrm{unified}}$ of the mixture lies on the geodesic from $\boldsymbol{\mu}_S$ to $\boldsymbol{\mu}_R$.
In the symmetric case ($\kappa_S=\kappa_R$ and equal sample sizes), $\boldsymbol{\mu}_{\mathrm{unified}}$ is the geodesic midpoint:
\begin{equation}
  d_{\mathrm{geo}}(\boldsymbol{\mu}_{\mathrm{unified}},\boldsymbol{\mu}_S)
  = d_{\mathrm{geo}}(\boldsymbol{\mu}_{\mathrm{unified}},\boldsymbol{\mu}_R)
  = \tfrac{1}{2}\,d_{\mathrm{geo}}(\boldsymbol{\mu}_S,\boldsymbol{\mu}_R).
  \label{eq:midpoint}
\end{equation}
This midpoint provides a symmetric reference that simultaneously absorbs the rotational bias from both domains.
\end{proposition*}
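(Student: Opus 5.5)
The plan is to reduce this statement to Lemma~\ref{lem:geodesic}, since the setting is identical after relabelling. We view the pooled synthetic and real features as a two-component vMF mixture on $\mathcal{S}^{D-1}$: component $S$ is $\mathrm{vMF}(\boldsymbol{\mu}_S,\kappa_S)$ with weight $\pi_S$, and component $R$ is $\mathrm{vMF}(\boldsymbol{\mu}_R,\kappa_R)$ with weight $\pi_R$. The weights are proportional to the sample sizes. The rotation $R$ plays no role beyond fixing $\boldsymbol{\mu}_R=R\boldsymbol{\mu}_S$. Only the two mean directions and the two concentrations enter the Fréchet objective
\[
F(\boldsymbol{p})=\sum_{c\in\{S,R\}}\pi_c\,\mathbb{E}_{\mathbf{x}\sim\mathrm{vMF}(\boldsymbol{\mu}_c,\kappa_c)}\bigl[d_{\mathrm{geo}}^2(\boldsymbol{p},\mathbf{x})\bigr].
\]

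\textbf{Step 1 (the minimiser lies on the connecting geodesic).} We apply Lemma~\ref{lem:geodesic} verbatim with $(\boldsymbol{\mu}_0,\boldsymbol{\mu}_1)=(\boldsymbol{\mu}_S,\boldsymbol{\mu}_R)$. The argument goes as follows.
\begin{itemize}
\item Every $Q\in\mathrm{O}(D)$ that fixes $\mathrm{span}\{\boldsymbol{\mu}_S,\boldsymbol{\mu}_R\}$ pointwise leaves both components invariant, so $F(Q\boldsymbol{p})=F(\boldsymbol{p})$.
\item Uniqueness of the minimiser, from Theorem~\ref{thm:karcher_convergence}, forces $\boldsymbol{\mu}_{\mathrm{unified}}$ onto the great circle through $\boldsymbol{\mu}_S$ and $\boldsymbol{\mu}_R$.
\item Minimality places it on the shorter arc.
\end{itemize}
We must assume $0<d_{\mathrm{geo}}(\boldsymbol{\mu}_S,\boldsymbol{\mu}_R)<\pi$. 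The case $\boldsymbol{\mu}_S=\boldsymbol{\mu}_R$ is trivial.

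\textbf{Step 2 (symmetric case).} Suppose $\kappa_S=\kappa_R$ and $\pi_S=\pi_R$. We construct an isometry $\sigma$ that swaps the two means. The natural choice is the reflection across the hyperplane orthogonal to $\boldsymbol{\mu}_S-\boldsymbol{\mu}_R$, i.e.\ $\sigma(\mathbf{x})=\mathbf{x}-2\frac{(\mathbf{x}^\top\mathbf{w})}{\|\mathbf{w}\|^2}\mathbf{w}$ with $\mathbf{w}=\boldsymbol{\mu}_S-\boldsymbol{\mu}_R$.
\begin{itemize}
\item This $\sigma$ is orthogonal and satisfies $\sigma\boldsymbol{\mu}_S=\boldsymbol{\mu}_R$.
\item Since vMF densities depend only on $\boldsymbol{\mu}_c^\top\mathbf{x}$, $\sigma$ maps component $S$ to component $R$ when the concentrations agree. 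With equal weights, the mixture is therefore $\sigma$-invariant and $F\circ\sigma=F$.
\item Uniqueness again gives $\sigma\boldsymbol{\mu}_{\mathrm{unified}}=\boldsymbol{\mu}_{\mathrm{unified}}$. Hence $\boldsymbol{\mu}_{\mathrm{unified}}^\top\boldsymbol{\mu}_S=\boldsymbol{\mu}_{\mathrm{unified}}^\top\boldsymbol{\mu}_R$, which says the point is equidistant from the two means.
\end{itemize}
Combining this with Step 1, the only equidistant point on the minimal arc is the Slerp parameter $t_0=\tfrac12$. This yields \eqref{eq:midpoint}, since the two arc lengths sum to $d_{\mathrm{geo}}(\boldsymbol{\mu}_S,\boldsymbol{\mu}_R)$.

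\textbf{Main obstacle.} The delicate point is the uniqueness claim that both symmetry arguments rely on. Theorem~\ref{thm:karcher_convergence} is stated for data supported in a ball of radius $<\pi/2$, but vMF distributions have full support on the sphere. I would handle this in one of two ways:
\begin{itemize}
\item \emph{High-concentration regime.} Argue that for the concentrations relevant here (angular spread $\approx1.7^\circ$), the mass outside a small cap is negligible. Then $F$ is strictly geodesically convex on a neighbourhood containing the arc, which localises the minimiser there. This is the same implicit assumption already made in Lemma~\ref{lem:geodesic}, and I would state it explicitly.
\item \emph{Direct reduction.} Alternatively, restrict $F$ to the great circle and show that $F(\sigma\boldsymbol{p})=F(\boldsymbol{p})$ together with strict convexity along the arc forces the unique minimiser to sit at the fixed point of $\sigma$ on that arc.
\end{itemize}
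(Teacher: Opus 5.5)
Your proposal is correct and is essentially the paper's proof. Step~1 is Lemma~\ref{lem:geodesic} applied to $(\boldsymbol{\mu}_S,\boldsymbol{\mu}_R)$, and Step~2 uses the same hyperplane reflection swapping $\boldsymbol{\mu}_S\leftrightarrow\boldsymbol{\mu}_R$ (the paper's $\mathcal{R}$, with $\mathbf{e}_2\propto\boldsymbol{\mu}_R-\boldsymbol{\mu}_S$) together with uniqueness to force the midpoint. The paper adds only a Step~3 that computes $\mathrm{Log}_{\boldsymbol{\mu}_{\mathrm{mid}}}(\boldsymbol{\mu}_S)=-\tfrac{\delta}{2}\mathbf{u}$ and $\mathrm{Log}_{\boldsymbol{\mu}_{\mathrm{mid}}}(\boldsymbol{\mu}_R)=+\tfrac{\delta}{2}\mathbf{u}$, plus a minimax property, to formalise the informal ``absorbs the rotational bias'' clause, which you do not address.

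The uniqueness gap you flag is genuine, and the paper has it too: it cites Theorem~\ref{thm:karcher_convergence} even though vMF components have full support. Only your high-concentration remedy can close it. Restricting $F$ to the great circle presupposes Step~1, and Step~1 itself relied on uniqueness.
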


\begin{proof}
The proof combines the geodesic placement result from Lemma~\ref{lem:geodesic} with a reflection symmetry argument.

\paragraph{Step 1: Placement on the geodesic.}
The two domain distributions are vMF with means $\boldsymbol{\mu}_S$ and $\boldsymbol{\mu}_R$.
By Lemma~\ref{lem:geodesic} (applied with class~$0=S$, class~$1=R$), the population Fr\'{e}chet mean of the mixture lies on the geodesic from $\boldsymbol{\mu}_S$ to $\boldsymbol{\mu}_R$:
\begin{equation}
  \boldsymbol{\mu}_{\mathrm{unified}} = \mathrm{Slerp}(\boldsymbol{\mu}_S,\boldsymbol{\mu}_R,t_0)
  \quad\text{for some }t_0\in(0,1).
  \label{eq:mu_unified_geodesic}
\end{equation}

\paragraph{Step 2: Midpoint property in the symmetric case.}
Assume $\kappa_S=\kappa_R\equiv\kappa$ and equal mixture weights $\pi_S=\pi_R=\frac{1}{2}$.
Define $\delta=d_{\mathrm{geo}}(\boldsymbol{\mu}_S,\boldsymbol{\mu}_R)$ (the rotational offset, empirically $\approx 5^{\circ}$ in our setting).

Consider the isometric reflection $\mathcal{R}$ of $\mathcal{S}^{D-1}$ that swaps $\boldsymbol{\mu}_S\leftrightarrow\boldsymbol{\mu}_R$.
Geometrically, $\mathcal{R}$ is the geodesic reflection through the midpoint $\boldsymbol{\mu}_{\mathrm{mid}}=\mathrm{Slerp}(\boldsymbol{\mu}_S,\boldsymbol{\mu}_R,\tfrac{1}{2})$, restricted to the great circle containing $\boldsymbol{\mu}_S$ and $\boldsymbol{\mu}_R$, and extended to all of $\mathcal{S}^{D-1}$ via the orthogonal reflection in the hyperplane equidistant from $\boldsymbol{\mu}_S$ and $\boldsymbol{\mu}_R$.
Formally, let $\mathbf{e}_1=(\boldsymbol{\mu}_S+\boldsymbol{\mu}_R)/\|\boldsymbol{\mu}_S+\boldsymbol{\mu}_R\|$ and $\mathbf{e}_2=(\boldsymbol{\mu}_R-\boldsymbol{\mu}_S)/\|\boldsymbol{\mu}_R-\boldsymbol{\mu}_S\|$.
Then $\mathcal{R}$ acts as
\begin{equation}
  \mathcal{R}(\mathbf{x})
  = \mathbf{x} -2(\mathbf{e}_2^{\!\top}\mathbf{x})\,\mathbf{e}_2,
  \label{eq:reflection}
\end{equation}
which maps $\boldsymbol{\mu}_S\mapsto\boldsymbol{\mu}_R$ and vice versa.

Under $\mathcal{R}$, the distribution $\mathrm{vMF}(\boldsymbol{\mu}_S,\kappa)$ maps to $\mathrm{vMF}(\boldsymbol{\mu}_R,\kappa)$ and vice versa.
Since $\kappa_S=\kappa_R$ and $\pi_S=\pi_R$, the mixture distribution is $\mathcal{R}$-invariant.
Consequently, $F(\mathcal{R}(\boldsymbol{\mu}))=F(\boldsymbol{\mu})$ for all $\boldsymbol{\mu}$.
Since $\boldsymbol{\mu}_{\mathrm{unified}}$ is the unique minimiser, $\mathcal{R}(\boldsymbol{\mu}_{\mathrm{unified}})=\boldsymbol{\mu}_{\mathrm{unified}}$.
The only point on the geodesic from $\boldsymbol{\mu}_S$ to $\boldsymbol{\mu}_R$ that is fixed by $\mathcal{R}$ is the midpoint $\boldsymbol{\mu}_{\mathrm{mid}}$.
Therefore $\boldsymbol{\mu}_{\mathrm{unified}}=\boldsymbol{\mu}_{\mathrm{mid}}$, establishing Eq.~\eqref{eq:midpoint}.

\paragraph{Step 3: Symmetric absorption of rotational bias.}
After centering with $\boldsymbol{\mu}_{\mathrm{unified}}=\boldsymbol{\mu}_{\mathrm{mid}}$ as the base point, the two domains are projected symmetrically into the tangent space:
\begin{equation}
  \mathrm{Log}_{\boldsymbol{\mu}_{\mathrm{mid}}}(\boldsymbol{\mu}_S)
  = -\tfrac{\delta}{2}\,\mathbf{u},
  \qquad
  \mathrm{Log}_{\boldsymbol{\mu}_{\mathrm{mid}}}(\boldsymbol{\mu}_R)
  = +\tfrac{\delta}{2}\,\mathbf{u},
  \label{eq:symmetric_log}
\end{equation}
where $\mathbf{u}$ is the unit tangent vector pointing from $\boldsymbol{\mu}_S$ toward $\boldsymbol{\mu}_R$.
Each domain's features are displaced by only $\delta/2$ (half the total offset) from the base point.
This is optimal in the minimax sense: it minimises the maximum domain-specific displacement:
$$
  \boldsymbol{\mu}_{\mathrm{mid}}
  = \operatorname*{arg\,min}_{\boldsymbol{\mu}\in\mathcal{S}^{D-1}}\max\!\bigl\{d_{\mathrm{geo}}(\boldsymbol{\mu},\boldsymbol{\mu}_S),\;d_{\mathrm{geo}}(\boldsymbol{\mu},\boldsymbol{\mu}_R)\bigr\}.
$$
By contrast, using either domain-specific mean as the base point would leave a residual bias of $\delta\approx 5^{\circ}$ for the other domain, causing systematic misalignment.
\end{proof}

\begin{remark}[Asymmetric case]
\label{rem:asymmetric}
When $\kappa_S\neq\kappa_R$ or $\pi_S\neq\pi_R$, the unified Fr\'{e}chet mean still lies on the geodesic from $\boldsymbol{\mu}_S$ to $\boldsymbol{\mu}_R$ (by Lemma~\ref{lem:geodesic}), but is biased toward the domain with larger $\pi_c\kappa_c$ (more samples or higher concentration).
In our setting, $|\mathcal{D}_{\mathrm{syn}}|\approx 2{,}000$ and $|\mathcal{D}_{\mathrm{real}}|\approx 20{,}000$--$40{,}000$, so $\boldsymbol{\mu}_{\mathrm{unified}}$ is closer to $\boldsymbol{\mu}_R$.
This is desirable: the real (test) domain receives a smaller centering correction, preserving its features' fine structure, while the synthetic domain absorbs a larger share of the alignment adjustment.
\end{remark}

\begin{remark}[Centering absorbs domain bias for individual features]
\label{rem:domain_individual}
For an individual synthetic feature $\tilde{f}_{\mathrm{syn}}$ close to $\boldsymbol{\mu}_S$ and a real feature $\tilde{f}_{\mathrm{real}}$ close to $\boldsymbol{\mu}_R$, the raw cosine similarity is corrupted by the $\delta\approx 5^{\circ}$ offset:
$\tilde{f}_{\mathrm{syn}}^{\!\top}\tilde{f}_{\mathrm{real}}$ underestimates the semantic similarity.
After centering, both features are expressed in the tangent-space coordinate system at $\boldsymbol{\mu}_{\mathrm{unified}}$, and the systematic offset is absorbed.
The centred features $\hat{f}_{\mathrm{syn}}$ and $\hat{f}_{\mathrm{real}}$ are directly comparable, as confirmed by the $+5.78\%$ AUC improvement on UCF-Crime (Table~\ref{tab:ablation_module}, M0a$\to$M0b).
\end{remark}
\section{Synthetic Calibration Dataset Details}
\label{app:synth}

\textbf{Zero-shot declaration.}
The synthetic calibration set $\mathcal{D}_{\mathrm{syn}}$ is constructed \emph{entirely independently} of every evaluation benchmark.
No image, frame, video, scene description, or anomaly-category taxonomy from XD-Violence, UCF-Crime, or UBnormal is used at any stage of the generation pipeline.
The only supervision signal is a \emph{binary} label $y\!\in\!\{\texttt{normal},\texttt{abnormal}\}$; fine-grained anomaly categories (e.g., ``Personal Emergency,'' ``Public Misconduct'') are used solely to promote visual diversity during generation and are \textbf{never} exposed to the SphereVAD inference pipeline.
Consequently, SphereVAD satisfies the strict zero-shot criterion: the model has access to neither target-domain data nor target-domain category definitions.

\subsection{Generation Pipeline}
\label{app:synth_pipeline}

The synthetic data are produced by a fully automated \textbf{two-stage} pipeline (Figure~\ref{fig:synth_pipeline_flow}):

\paragraph{Stage~1: Structured scene description generation (LLM).}
A large language model (Claude Sonnet 4.5, accessed via API) receives a carefully designed meta-prompt specifying six broad anomaly meta-categories---\textit{Violent Conflict}, \textit{Crime}, \textit{Traffic Accident}, \textit{Personal Emergency}, \textit{Environmental Incident}, and \textit{Public Misconduct}---together with detailed structural constraints.
For each meta-category the LLM generates multiple \emph{scene pairs}, where every pair consists of:
\begin{itemize}[nosep,leftmargin=1.2em]
  \item \textbf{Shared context} $P_c$: a specific location, time of day, and environmental details (e.g., ``Dusk, underground parking garage, wet concrete floor'').
  \item \textbf{Shared characters} $P_{\mathrm{char}}$: number of people, appearance, and clothing, kept \emph{identical} across all frames and both branches.
  \item \textbf{Abnormal branch}: a four-frame storyboard $(F_1^{\mathrm{pre}},F_2^{\mathrm{start}},F_3^{\mathrm{peak}},F_4^{\mathrm{post}})$ depicting the temporal evolution of an anomalous event.
  \item \textbf{Normal branch}: a matching four-frame storyboard under the same scene and characters but depicting routine, non-anomalous behaviour.
\end{itemize}
The meta-prompt enforces several quality constraints:
(i)~$F_1^{\mathrm{pre}}$ must be nearly identical across the two branches to ensure a shared baseline;
(ii)~$F_3^{\mathrm{peak}}$ must contain at least three distinct, visually verifiable anomalous cues in the abnormal branch;
(iii)~character clothing and scene background must remain strictly consistent across all four frames;
(iv)~no depictions of minors are permitted.
The LLM outputs each pair as a structured JSON object.
Each generation call produces $G{=}15$ pairs; $R{=}2$ rounds are executed per meta-category, and additional rounds are run with varied random seeds until the total reaches approximately1{,}000 pairs.

\paragraph{Stage~2: Image synthesis (text-to-image model).}
Each branch's four-frame description is converted into a single prompt by prepending the shared-context and character descriptions and appending a style prefix:
\begin{quote}
\small\texttt{realistic photography, surveillance camera footage, photorealistic, raw photo, 2x2 grid layout, four sequential frames, left to right top to bottom.}
\end{quote}
The composed prompt is sent to Gemini 2.5 Flash (image generation mode, accessed via API), which returns a single square image containing a $2{\times}2$ grid of the four frames (arranged left-to-right, top-to-bottom: $F_1$, $F_2$, $F_3$, $F_4$).
Retry logic (up to 2 retries per image) and timeout handling ensure robustness; permanently failed pairs (exceeding 3non-timeout failures) are discarded.

\paragraph{Design motivation for the $2{\times}2$ grid.}
Encoding four temporal frames in a single image serves three purposes.
First, it substantially reduces generation cost and latency: producing one composite image per branch rather than four separate images cuts API calls by $4\times$, making the pipeline practical at scale.
Second, it improves inter-frame visual consistency: because the text-to-image model generates all four sub-frames jointly within a single image, character appearance, scene layout, and lighting remain naturally coherent across the temporal sequence---an outcome that is difficult to guarantee when frames are generated independently.
Third, it ensures structural alignment with the test-time input format: during inference, each test clip is likewise represented as a $2{\times}2$ grid of four consecutive sampled frames, processed with the same prompt template and fed to the MLLM as a single composite image. By matching the spatial layout, aspect ratio, and prompt structure between calibration and test inputs, the intermediate-layer features extracted from both domains occupy the same representational subspace, which is critical for the unified Fréchet mean centering (Section~3.2) to effectively absorb domain bias.
At inference time, the MLLM processes each composite image as a single input; the $2{\times}2$ spatial layout implicitly conveys temporal progression ($F_1 \to F_2 \to F_3 \to F_4$), enabling the model to capture temporal dynamics through its intermediate-layer features without requiring explicit video-level input.

\begin{figure}[t]\centering
  \includegraphics[width=0.92\linewidth]{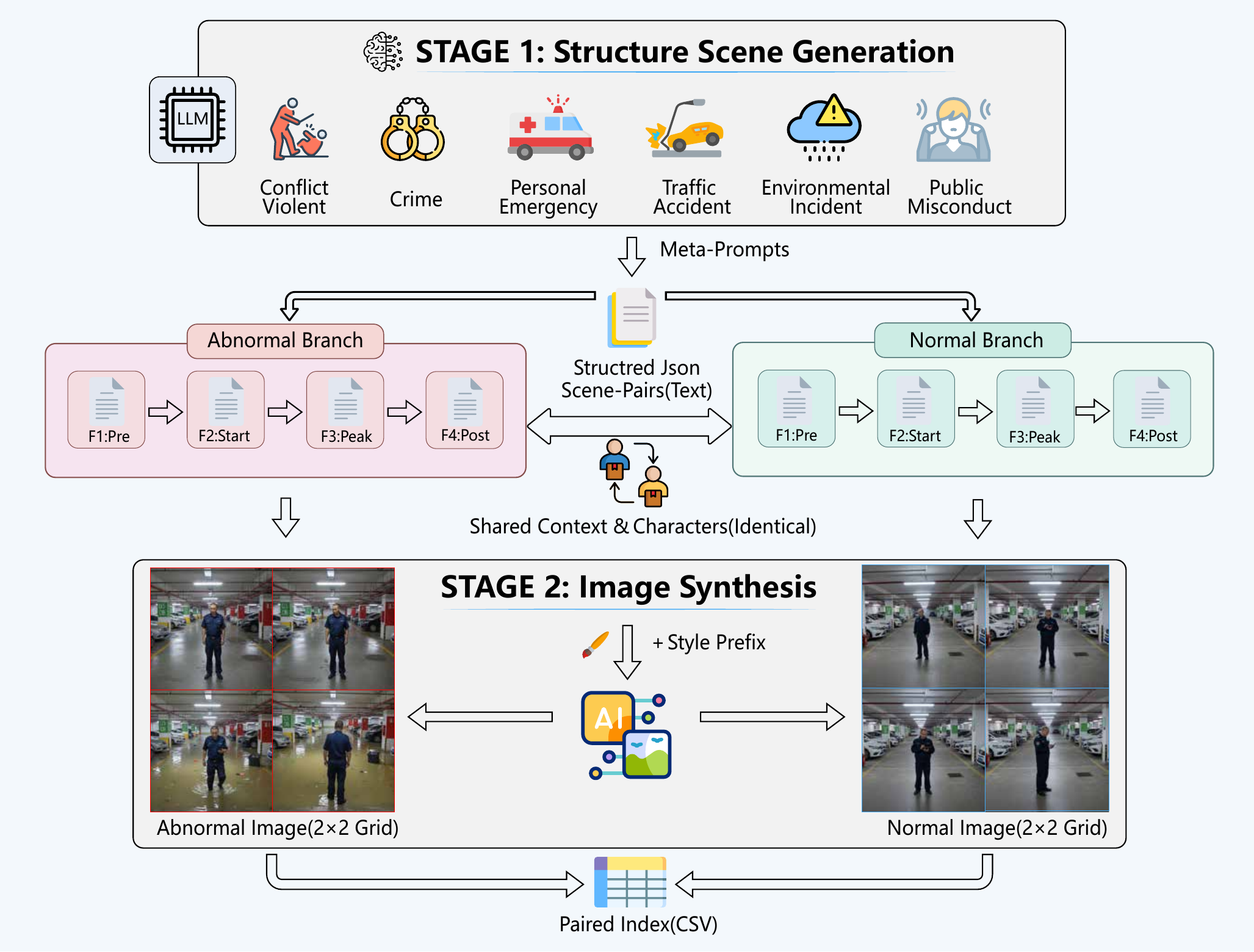}
  \caption{Two-stage synthetic calibration data generation pipeline.
  \textbf{Stage~1:} Claude Sonnet 4.5 generates structured JSON scene-pair descriptions with shared context, shared characters, and four-frame storyboards for both abnormal and normal branches.
  \textbf{Stage~2:} Gemini 2.5 Flash renders each branch as a $2{\times}2$ grid image containing four temporally ordered frames.}\label{fig:synth_pipeline_flow}
\end{figure}

\subsection{Data Format \& Pairing Strategy}
\label{app:synth_format}

\paragraph{Pairing index.}
All generated pairs are tracked in a unified CSV file (\texttt{pair\_index.csv}) with the schema shown in Table~\ref{tab:pair_schema}.
Each row links one normal image and one abnormal image that share exactly the same scene and characters.

\begin{table}[htbp]
\caption{Schema and example entries of \texttt{pair\_index.csv}.}
\label{tab:pair_schema}
\centering
\small
\begin{tabular}{@{}llll@{}}
\toprule
\texttt{pair\_id} & \texttt{source\_label} & \texttt{normal\_path} & \texttt{abnormal\_path} \\
\midrule
\texttt{Violent Conflict\_01} & Violent Conflict & \texttt{.../normal/VC\_0001.jpg} & \texttt{.../abnormal/VC\_0002.jpg} \\
\texttt{Crime\_05}& Crime            & \texttt{.../normal/Cr\_0009.jpg} & \texttt{.../abnormal/Cr\_0010.jpg} \\
\texttt{Traffic Accident\_12} & Traffic Accident & \texttt{.../normal/TA\_0023.jpg} & \texttt{.../abnormal/TA\_0024.jpg} \\
\texttt{Personal Emergency\_03} & Personal Emergency & \texttt{.../normal/PE\_0005.jpg} & \texttt{.../abnormal/PE\_0006.jpg} \\
\texttt{Public Misconduct\_08}& Public Misconduct  & \texttt{.../normal/PM\_0015.jpg} & \texttt{.../abnormal/PM\_0016.jpg} \\
\bottomrule
\end{tabular}
\end{table}

\paragraph{``Same background, different event'' design.}
The paired design is central to directional prototype calibration.
Because each normal--abnormal pair shares the same scene context and characters, the discriminative direction between the two corresponding features in MLLM intermediate-layer space reflects \emph{semantic} differences (routine vs.\ anomalous event) rather than \emph{nuisance} differences (lighting, scene layout, character appearance).
Spherical $K$-Means clustering on these paired features therefore yields prototypes whose mean directions capture anomaly semantics with minimal scene-specific confounds.

\paragraph{Directory structure.}
The dataset is organised hierarchically:
\begin{verbatim}
  Syn-4img/├── pair_index.csv
  ├── Violent Conflict/
  │   ├── abnormal/   VC_0002.jpg, VC_0004.jpg, ...
  │   ├── normal/     VC_0001.jpg, VC_0003.jpg, ...
  │   ├── Violent Conflict.csv(success log)
  │   └── Violent Conflict_failed.csv   (failure log)
  ├── Crime/
  │   ├── abnormal/   ...
  │   └── normal/     ...
  ...
\end{verbatim}
Per-label success and failure CSVs enable deterministic checkpoint--resume: the pipeline can be interrupted and restarted without regenerating existing images.

\subsection{Sample Images}
\label{app:synth_samples}

Figure~\ref{fig:synth_examples} shows representative normal--abnormal paired examples from four of the six meta-categories.
Each row displays one pair: the left column shows the generated $2{\times}2$ grid image (containing four temporally ordered frames $F_1$--$F_4$, arranged left-to-right, top-to-bottom), the centre column shows the corresponding LLM-generated four-frame storyboard description, and the right column shows the counterpart branch image from the same pair.
Within each pair, the top row is the \textbf{normal} branch and the bottom row is the \textbf{abnormal} branch.
Note that within each pair the scene layout, character appearance, and camera viewpoint are visually consistent; only the depicted behaviour and the storyboard descriptions differ.

\begin{figure}[htbp]
\centering\includegraphics[width=\linewidth]{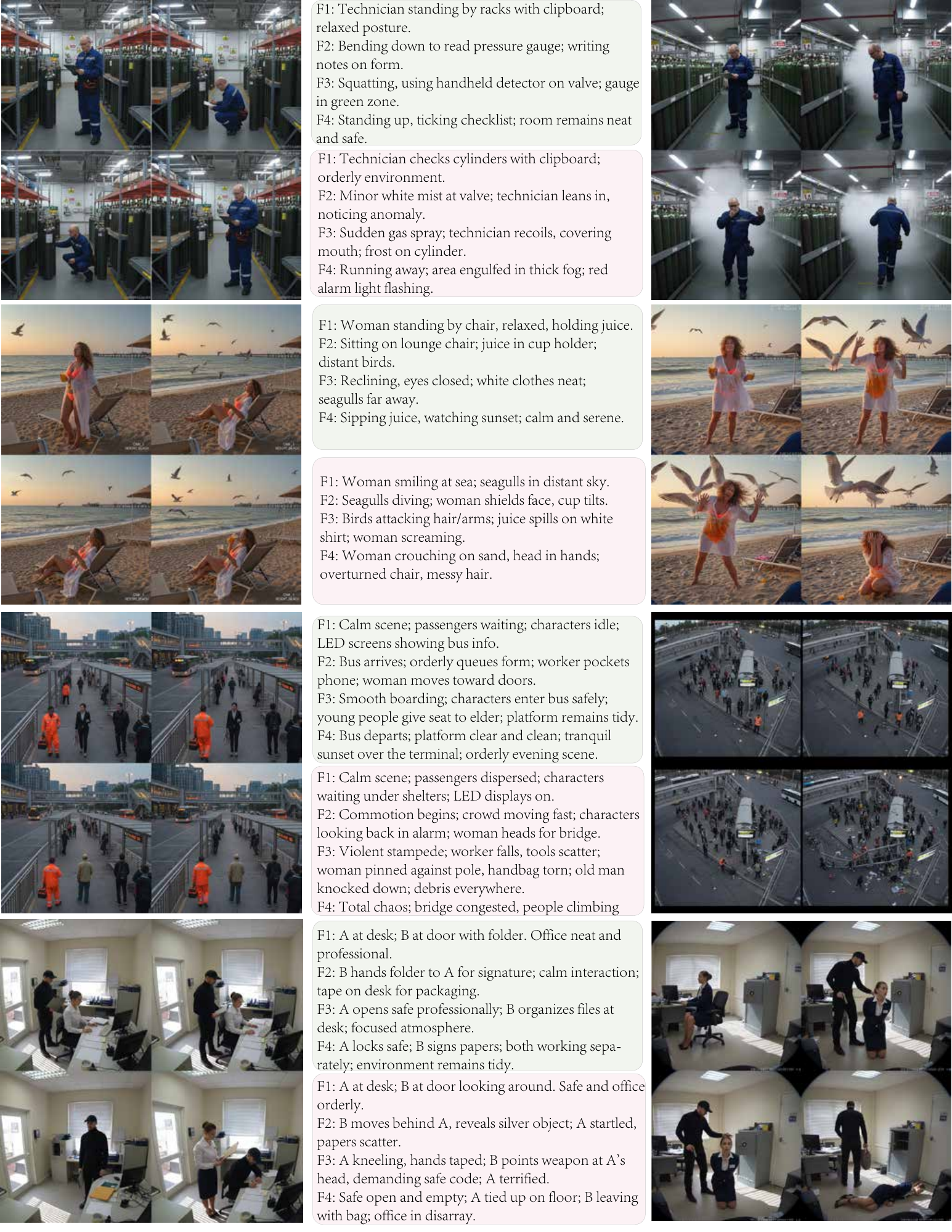}
  \caption{Example paired synthetic calibration data from four meta-categories.
  Each meta-category occupies two rows: the top row is the \textbf{normal} branch and the bottom row is the \textbf{abnormal} branch.
  For each row, the left image shows the generated $2{\times}2$ grid (four sequential frames $F_1$--$F_4$), the centre text shows the LLM-generated storyboard description for each frame, and the right image shows the corresponding counterpart grid from the same scene pair.
  From top to bottom: \textit{Environmental Incident}, \textit{Personal Emergency}, \textit{Public Misconduct}, \textit{Crime}.
  Within each pair, the scene, characters, and camera viewpoint are shared; only the depicted event differs.}\label{fig:synth_examples}
\end{figure}

\subsection{Sample Size Justification}
\label{app:synth_justification}

The final calibration set comprises approximately \textbf{2{,}000 images} (${\sim}$1{,}000 normal--abnormal pairs), distributed roughly uniformly across the six meta-categories (${\sim}$160--180 pairs each).
We justify the sufficiency of this modest sample size from three perspectives.

\paragraph{(1) Precedent: minimal samples activate latent MLLM discriminability.}
HiProbe~\cite{cai2025hiprobe} demonstrates that as few as 10--50 labelled samples suffice to probe and activate the discriminative capacity already latent in MLLM intermediate layers; the primary role of the calibration data is to \emph{locate} the relevant directions in feature space, not to \emph{learn} new representations.
SphereVAD follows the same principle: synthetic images serve as \emph{directional calibrators} rather than training data.

\paragraph{(2) vMF parameter estimation requires only the mean direction.}
The vMF distribution $\mathrm{vMF}(\mathbf{x};\boldsymbol{\mu},\kappa)\propto\exp(\kappa\,\boldsymbol{\mu}^{\!\top}\mathbf{x})$ is parameterised by its mean direction $\boldsymbol{\mu}$ and concentration $\kappa$.
Spherical $K$-Means estimates $\boldsymbol{\mu}$ as the $\ell_2$-normalised cluster centroid---a sample mean that converges at rate $O(N^{-1/2})$ on the sphere~\cite{mardia2009directional}.
With ${\sim}$170 samples per class per cluster, the angular estimation error of $\boldsymbol{\mu}$ is well below $1^{\circ}$, which is negligible compared with the inter-class separation of ${\sim}22^{\circ}$ achieved after spherical centering (Figure~\ref{fig:discriminative_structure}).
The concentration parameter $\kappa$ is treated as a global hyperparameter rather than estimated per-cluster, further reducing sample requirements.

\paragraph{(3) No content overlap with any test benchmark.}
All images are procedurally generated from LLM-authored scene descriptions followed by text-to-image synthesis.
No real surveillance footage, no frames from XD-Violence, UCF-Crime, or UBnormal, and no benchmark-specific scene or category metadata are used at any point.
The six meta-categories (Violent Conflict, Crime, Traffic Accident, Personal Emergency, Environmental Incident, Public Misconduct) are broad semantic anchors designed to cover the generic space of ``anomalous events in public settings''; they do not correspond one-to-one to the category taxonomies of any benchmark (e.g., UCF-Crime defines 13 categories, XD-Violence defines 6 partially overlapping categories).
This complete independence guarantees that the zero-shot evaluation protocol is not compromised.

\subsection{Sub-category Taxonomy and Label Independence}
\label{app:synth_subcategory}

To promote visual diversity within each meta-category, the LLM is instructed to generate scene descriptions spanning a variety of \emph{sub-categories}.
These sub-categories serve exclusively as diversity-inducing prompts for the text-to-image model and are \textbf{never} exposed to the SphereVAD inference pipeline---only the binary label $y\!\in\!\{\texttt{normal},\texttt{abnormal}\}$ is retained.
Table~\ref{tab:subcategory_full} enumerates all 35 sub-categories used across the six meta-categories.

\begin{table}[htbp]
\caption{Complete list of the 35 sub-categories used to diversify synthetic scene generation, grouped by meta-category.
These labels are used \emph{only} during data generation and are discarded before inference.}
\label{tab:subcategory_full}
\centering
\small
\begin{tabular}{@{}l l@{}}
\toprule
\textbf{Meta-category} & \textbf{Sub-categories} \\
\midrule
Violent Conflict& Stabbing, Hostage Situation, Chasing \\[2pt]
Crime
  & Pickpocketing, Vehicle Theft, Trespassing, Vandalism \\[2pt]
Traffic Accident
  & Bicycle Accident, Hit and Run, Pedestrian Hit,\\
  & Running Red Light, Road Obstruction \\[2pt]
Personal Emergency
  & Drowning, Electrocution, Fainting, Fall Down, Trapped \\[2pt]
Environmental Incident
  & Abandoned Object, Building Collapse, Falling Object, \\
  & Flooding, Gas Leak, Smoke Emission \\[2pt]
Public Misconduct
  & Animal Attack, Climbing Structure, Crowd Panic, \\
  & Crowd Stampede, Drunk Behavior, Fence Jumping, \\
  & Graffiti, Illegal Dumping, Jaywalking (Dangerous), \\
  & Loitering (Suspicious), Substance Abuse \\
\bottomrule
\end{tabular}
\end{table}

\paragraph{Label independence from evaluation benchmarks.}
We explicitly verify that none of the 35 sub-category names coincides with any category label defined in the three evaluation benchmarks.
Table~\ref{tab:label_overlap} presents a systematic comparison.

\begin{table}[htbp]
\caption{Category taxonomies of the three evaluation benchmarks versus the synthetic sub-categories.
No exact label match exists between any benchmark category and any synthetic sub-category.
Semantically related pairs (e.g., UCF-Crime ``RoadAccidents'' vs.\ our ``Hit and Run'') differ in granularity and definition; crucially, sub-category labels are never used during inference.}
\label{tab:label_overlap}
\centering
\small
\renewcommand{\arraystretch}{1.15}
\begin{tabular}{@{}p{2.2cm} p{5.0cm} p{5.5cm}@{}}
\toprule
\textbf{Benchmark} & \textbf{Benchmark categories} & \textbf{Overlap with synthetic sub-cats} \\
\midrule
UCF-Crime \newline (13 categories)
  & Abuse, Arrest, Arson, Assault, Burglary, Explosion, Fighting, RoadAccidents, Robbery, Shooting, Shoplifting, Stealing, Vandalism
  & \textbf{No exact match.} \par Semantic
 proximity exists for a few pairs (e.g., ``RoadAccidents'' $\leftrightarrow$ ``Hit and Run'' / ``Pedestrian Hit''; ``Stealing'' $\leftrightarrow$ ``Vehicle Theft'' / ``Pickpocketing''; ``Vandalism'' $\leftrightarrow$ ``Graffiti''), but names, granularity, and visual scope differ. \\[4pt]
XD-Violence \newline (6 categories)
  & Abuse, Car Accident, Explosion, Fighting, Riot, Shooting
  & \textbf{No exact match.} \par
    ``Car Accident'' is semantically related to ``Hit and Run'' / ``Bicycle Accident,'' but the synthetic labels are finer-grained and do not include car-to-car collisions. \\[4pt]
UBnormal \newline (pose/motion \newline anomalies)
  & Defined by abnormal pose\,/\,motion patterns (e.g., falling, lying down, running); no named crime categories.
  & \textbf{No overlap.} \newline
    UBnormal does not define named anomaly categories comparable to our sub-categories. \\
\bottomrule
\end{tabular}
\end{table}

Three key observations confirm label independence:
\begin{enumerate}[nosep,leftmargin=1.5em]
  \item \textbf{No exact string match:} None of the 35 synthetic sub-category names appears verbatim in any benchmark's category list. For instance, UCF-Crime uses ``Vandalism'' whereas our taxonomy uses ``Graffiti''; UCF-Crime uses ``RoadAccidents'' (a broad category) whereas we use the finer-grained ``Hit and Run,'' ``Pedestrian Hit,'' and ``Bicycle Accident.''
  \item \textbf{Deliberate granularity mismatch:} The synthetic sub-categories are intentionally defined at a different level of granularity than benchmark categories. This design choice ensures that even semantically proximate concepts (e.g., ``Stealing'' vs.\ ``Pickpocketing'') describe distinct visual scenarios, preventing implicit information leakage about benchmark-specific category boundaries.
  \item \textbf{Labels discarded before inference:} Most importantly, all sub-category labels are used \emph{exclusively} during the data generation stage (Stage~1, Section~\ref{app:synth_pipeline}) to instruct the LLM to produce diverse scene descriptions. They are completely stripped from the data before any feature extraction or prototype estimation occurs. The SphereVAD inference pipeline receives only the binary normal/abnormal label and the composite grid image, making it categorically impossible for sub-category information to influence anomaly scores.
\end{enumerate}
\section{Feature Extraction}
\label{app:prompt}
This section details the complete feature extraction pipeline of SphereVAD.
Section~\ref{app:prompt_template} presents the full prompt template and its design rationale.
Section~\ref{app:causal_attn} explains why the prompt structure must respect causal attention constraints.
Section~\ref{app:dlsp} describes the Dynamic Layer Saliency Probing (DLSP) procedure used to select the optimal extraction layer, including comparative curves across four MLLM backbones.
Section~\ref{app:dual_feature} details the dual-stream feature design and the complementary roles of the main feature $f^l$ and the visual feature $f^v$.
\subsection{Complete Prompt Template}
\label{app:prompt_template}
The prompt fed to the frozen MLLM follows a \textbf{Text\,--\,Image\,--\,Text} sandwich structure.
Each input sample (whether a synthetic calibration image or a test video clip) is represented as four temporally ordered sub-images $\{I_1, I_2, I_3, I_4\}$.
For synthetic data these are obtained by splitting a $2{\times}2$ grid image (left-to-right, top-to-bottom); for test videos they are four uniformly sampled frames from the clip.
When \texttt{RESIZE\_SUBIMAGES} is enabled, each sub-image is resized to $336{\times}336$ pixels via Lanczos interpolation to ensure uniform spatial resolution across all inputs.
The full prompt is assembled as follows:
\begin{quote}
\small
\texttt{[Part\,1 --- Task Instruction]}\\[4pt]
\textsf{You are a professional video security analysis assistant.
The following four consecutive video frames record the temporal
evolution of the same scene.}\\[2pt]
\textsf{[Anomaly Whitelist]}\\
\textsf{Anomalous events are limited to the following 6 categories:}\\
\textsf{[Violent Conflict], [Crime], [Traffic Accident],
[Personal Emergency], [Environmental Hazard], [Public Misconduct].}\\[2pt]
\textsf{With the above classification criteria in mind, carefully observe
the following frames to determine whether a matching anomalous
event is present:}\\[6pt]
\texttt{[Sub-image $I_1$]}~\texttt{[Sub-image $I_2$]}~\texttt{[Sub-image $I_3$]}~\texttt{[Sub-image $I_4$]}\\[6pt]
\texttt{[Part\,2 --- Output Format]}\\[4pt]
\textsf{Based on the above frames, strictly follow the 4-step output
format below (always start with `Yes' or `No'):}\\[2pt]
\textsf{1.\ Final determination: [Yes or No].}\\
\textsf{2.\ Anomaly category match: [Format: Category -- Specific sub-label.
If No, output: None].}\\
\textsf{3.\ Spatiotemporal action description: [Briefly describe character
interactions, action continuity, and object state changes over time].}\\
\textsf{4.\ Confidence assessment: [High / Medium / Low.
If category is `None', output: None].}
\end{quote}
The prompt is wrapped as a single-turn chat message with \texttt{role=user} and processed through the model's chat template via \texttt{apply\_chat\_template} (with \texttt{add\_generation\_prompt=True} and \texttt{enable\_thinking=False}).
The processor then jointly tokenises the text and encodes the images, yielding the final input tensors (\texttt{input\_ids}, \texttt{attention\_mask}, \texttt{pixel\_values}, \texttt{image\_grid\_thw}, etc.).
\paragraph{Design rationale.}
The prompt is designed with four considerations:
\begin{enumerate}[nosep,leftmargin=1.5em]
  \item \textbf{Structured anomaly whitelist.}
        The six meta-categories (Violent Conflict, Crime, Traffic Accident, Personal Emergency, Environmental Hazard, Public Misconduct) are explicitly enumerated in Part\,1to prime the model's internal representations toward anomaly-relevant semantics.These categories are intentionally broad and do not correspond one-to-one to the category taxonomies of any benchmark (e.g., UCF-Crime defines13 categories, XD-Violence defines 6 partially overlapping ones).
        The whitelist serves as a \emph{semantic anchor} that activates latent anomaly-discriminative directions in the intermediate layers, without leaking benchmark-specific category information.
  \item \textbf{Temporal framing.}
        The phrase ``four consecutive video frames record the temporal evolution of the same scene'' cues the model to interpret the four sub-images as a temporal sequence rather than four independent observations.
        This encourages the intermediate-layer features to encode temporal dynamics (e.g., action progression, state changes) rather than merely static scene appearance.
  \item \textbf{Structured output format in Part\,2.}
        Although SphereVAD \emph{never} decodes the model's textual output, the structured output instructions in Part\,2 are critical.
        Through the causal attention mechanism (Section~\ref{app:causal_attn}), the generation prompt tokens attend to both the visual tokens and the task instructions, producing hidden states that integrate anomaly-category reasoning, spatiotemporal action analysis, and confidence estimation.
        The last-token hidden state at the extraction layer thus encodes a rich fusion of these reasoning pathways.
  \item \textbf{Consistent format across synthetic and real inputs.}
        Both synthetic calibration images (split from $2{\times}2$ grids) and real test clips (four sampled frames) are processed with \emph{identical} prompt templates and image preprocessing.
        This format consistency ensures that the intermediate-layer features from both domains occupy the same representational subspace, which is essential for the unified Fr\'{e}chet mean centering (Section~3.2) to effectively absorb domain bias.
\end{enumerate}
\subsection{Causal Attention Justification}
\label{app:causal_attn}
Modern MLLMs (including the Qwen-VL family, InternVL, and LLaVA) employ \textbf{causal (autoregressive) attention}: each token can attend only to itself and all preceding tokens in the sequence.
This architectural constraint has a direct and critical impact on which token positions carry the richest fused representations.
\paragraph{Why the visual-last token alone is insufficient.}
Under causal attention, vision tokens can attend to all preceding tokens---including the Part\,1 task instruction---but \emph{cannot} attend to any tokens that follow them, including the Part\,2 output format instructions and the generation prompt.
Consequently, the hidden state of the last vision token ($f^v$, ``visual-last'') encodes the visual content contextualised by the task instruction, but lacks the reasoning structure imposed by the output format.
If one were to extract \emph{only} $f^v$ for anomaly scoring, the feature would capture scene appearance but miss the anomaly-category reasoning and confidence estimation cues induced by Part\,2.
\paragraph{Why Part\,1 text must precede vision tokens.}
Placing the task instruction \emph{before} the vision tokens ensures that every vision token can attend to the anomaly whitelist and temporal framing cues during the forward pass.
This contextualises the visual representations from the earliest processing stage: the model does not merely ``see'' the images but ``sees them through the lens of anomaly detection.''
If the task instruction were placed \emph{after} the vision tokens, the visual representations would be computed without any task-specific context, and only the later text tokens would benefit from cross-modal attention---severely limiting the discriminative quality of the visual features.
\paragraph{Why the last sequence token carries the richest information.}
The last token in the full sequence (i.e., the final token of the generation prompt, after Part\,2) is the only position that can attend to \emph{all} preceding tokens: Part\,1 instruction $\to$ vision tokens $\to$ Part\,2 output format $\to$ generation prompt.
Its hidden state at the extraction layer therefore integrates:
\begin{itemize}[nosep,leftmargin=1.2em]
  \item \textbf{Task semantics} from Part\,1 (anomaly categories, temporal framing);
  \item \textbf{Visual content} from the four sub-image tokens;
  \item \textbf{Reasoning structure} from Part\,2 (determination, category matching, spatiotemporal description, confidence).
\end{itemize}
This is why we designate the last-token hidden state as the \emph{main feature} $f^l$, used for anomaly scoring, while the visual-last hidden state $f^v$ serves a complementary role as a \emph{scene descriptor} for cross-video attention (HSA) and intra-video neighbour consensus (SGP), where scene appearance similarity rather than anomaly reasoning is the relevant signal.
\paragraph{Token sequence structure.}
The MLLM processes tokens in the order: system prompt $\to$ Part\,1 task instruction $\to$ vision tokens for sub-images $I_1$--$I_4$ $\to$ Part\,2 output format $\to$ generation prompt (last token).
Under causal masking, the last token has the maximal receptive field attending to the entire sequence, while the visual-last token (last vision token of sub-image~4) attends only to Part\,1 and all vision tokens but not Part\,2.
\paragraph{Vision token localisation.}
To extract $f^v$, we must identify the position of the last vision token in the tokenised sequence.
The extraction code searches for vision boundary tokens in the following priority order:
\begin{enumerate}[nosep,leftmargin=1.5em]
  \item \textbf{Boundary-delimited:} Locate \texttt{<|vision\_start|>} / \texttt{<|vision\_end|>} (or \texttt{<|img\_start|>} / \texttt{<|img\_end|>}) token pairs.
        For $K{=}4$ sub-images, the code identifies the first $K$ such pairs and marks all tokens between each pair as vision tokens.
        The position of the last marked token is the visual-last position.
  \item \textbf{Pad-token fallback:} If boundary tokens are not found (e.g., in models using a different tokenisation scheme), the code searches for \texttt{<|image\_pad|>}, \texttt{<|vision\_pad|>}, or \texttt{<|img\_pad|>} tokens and takes the last occurrence.
\end{enumerate}
If neither method succeeds (an edge case that did not occur in our experiments), $f^v$ falls back to $f^l$ (the last-token feature).
\subsection{Layer Selection via DLSP}
\label{app:dlsp}
\textbf{Motivation.}
Not all intermediate layers of a pre-trained MLLM carry equally discriminative anomaly information.
Early layers tend to encode low-level visual patterns (edges, textures), while the final layers are optimised for language generation and may discard fine-grained visual semantics.
The optimal extraction layer---where anomaly-discriminative information peaks---must be identified empirically.
We adopt \textbf{Dynamic Layer Saliency Probing (DLSP)}, inspired by HiProbe~\cite{cai2025hiprobe}, as a lightweight, \emph{training-free} method to locate this layer.
\paragraph{DLSP procedure.}
DLSP evaluates each candidate layer $\ell \in \{1, 2, \ldots, L\}$ (where $L$ is the total number of hidden-state layers) by measuring how well the last-token features at that layer separate the normal and anomalous classes in the synthetic calibration set $\mathcal{D}_{\mathrm{syn}}$.
The procedure computes three complementary separability metrics at each layer and fuses them into a single composite score:
\begin{enumerate}[nosep,leftmargin=1.5em]
  \item \textbf{Extract layer-wise features.}
        For each synthetic sample $x_j \in \mathcal{D}_{\mathrm{syn}}$, extract the last-token hidden state at every layer $\ell$, yielding $\{f_j^{(\ell)}\}_{\ell=1}^{L}$.
        All features are $\ell_2$-normalised onto $\mathcal{S}^{D-1}$.
  \item \textbf{Compute per-layer separability metrics.}
        For each layer $\ell$, three metrics are computed between the normal and anomalous feature distributions:
        \begin{itemize}[nosep,leftmargin=1.2em]
          \item \textbf{KL Divergence:} measures the distributional divergence between the cosine similarity distributions of normal and anomalous features to their respective class centroids.
          \item \textbf{Log Density Ratio (LDR):} quantifies the separation of per-class density estimates in the feature space.
          \item \textbf{Entropy:} captures the predictability of class membership from the feature representations (lower entropy indicates better separability).
        \end{itemize}
        Each metric is standardised to a Z-score across layers to ensure commensurability.
  \item \textbf{Composite saliency score.}
        The three Z-scored metrics are combined into a single DLSP saliency score:
        \begin{equation}
          \mathrm{DLSP}(\ell) = Z_{\mathrm{KL}}^{(\ell)} + Z_{\mathrm{LDR}}^{(\ell)} - Z_{\mathrm{Entropy}}^{(\ell)},\label{eq:dlsp}
        \end{equation}
        where the entropy term is subtracted because lower entropy corresponds to better separability.
        A higher composite score indicates that features at layer $\ell$ exhibit stronger anomaly discriminability across all three criteria.
  \item \textbf{Select optimal layer.}
        The extraction layer is chosen as $\ell^{*} = \arg\max_{\ell}\,\mathrm{DLSP}(\ell)$.
\end{enumerate}
\paragraph{Key properties.}
DLSP is computed entirely on the synthetic calibration set and requires no access to test data or their labels.
The procedure involves only forward passes (no gradient computation) and lightweight statistical calculations, making it computationally negligible ($<$1\,minute for all layers on a single GPU).
Once $\ell^{*}$ is determined, it is fixed for all subsequent inference.
\paragraph{Results across four backbones.}
Figure~\ref{fig:dlsp_curves} plots the DLSP saliency curves for all four MLLM backbones used in our experiments.
Note that the four backbones differ in architecture depth and hidden dimensionality: Qwen3.5 has32 layers with $D{=}4096$, Qwen3-VL has 32 layers with $D{=}4096$, LLaVA-OneVision-1.5 has 36 layers with $D{=}4096$, and InternVL3has 28 layers with $D{=}3584$.
\FloatBarrier
\begin{figure}[htbp]\centering
  \begin{subfigure}[b]{0.47\linewidth}
    \includegraphics[width=\linewidth]{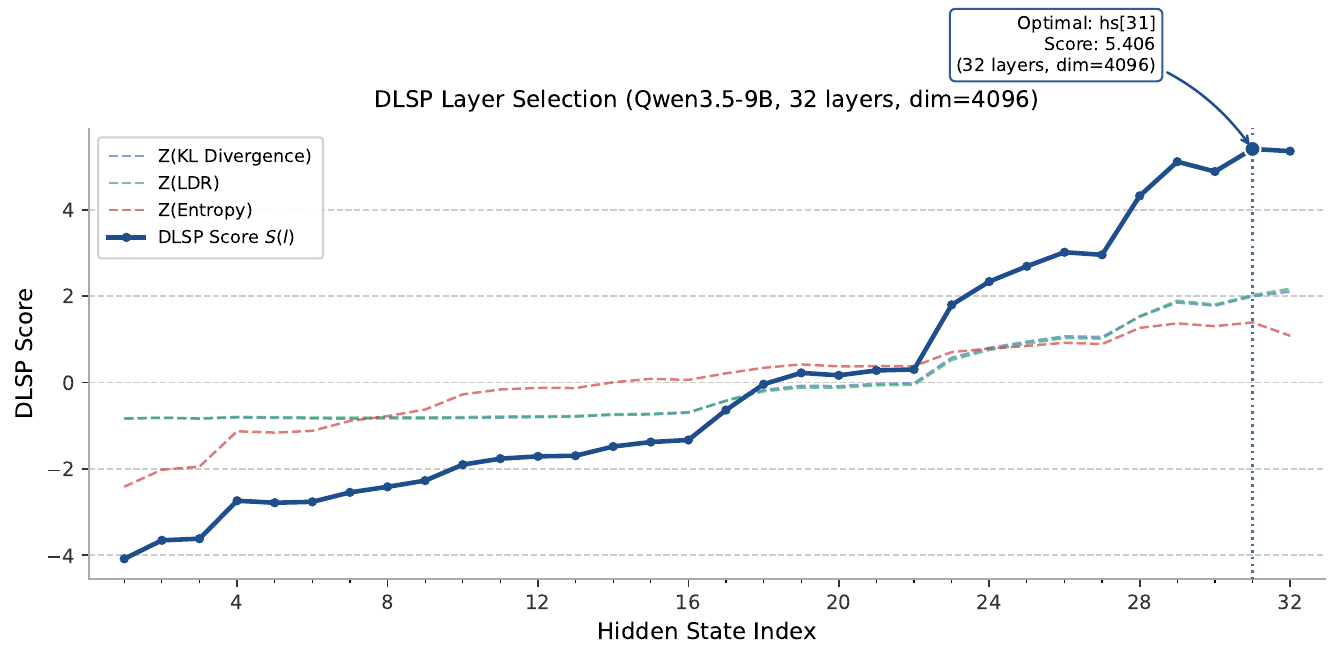}
    \caption{Qwen3.5 (32 layers, $D{=}4096$)}
    \label{fig:dlsp_qwen35}
  \end{subfigure}
  \hfill
  \begin{subfigure}[b]{0.47\linewidth}
    \includegraphics[width=\linewidth]{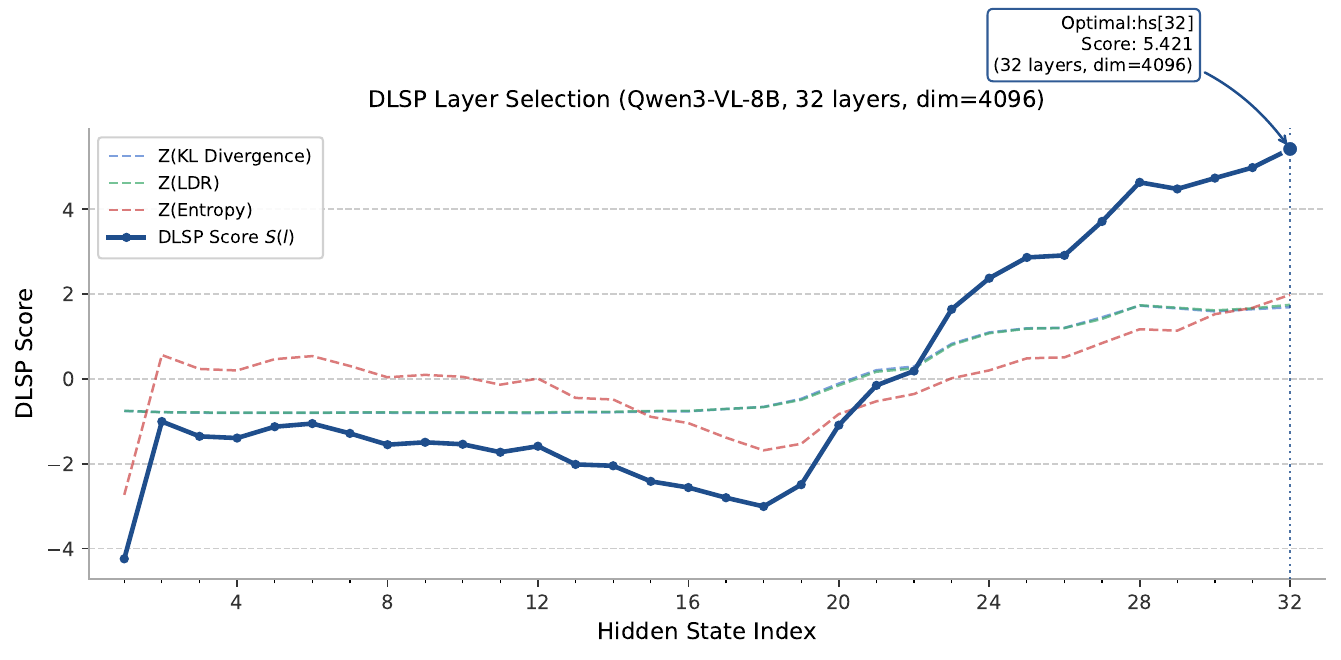}
    \caption{Qwen3-VL (32 layers, $D{=}4096$)}
    \label{fig:dlsp_qwen3vl}
  \end{subfigure}

  \vspace{6pt}

  \begin{subfigure}[b]{0.47\linewidth}
    \includegraphics[width=\linewidth]{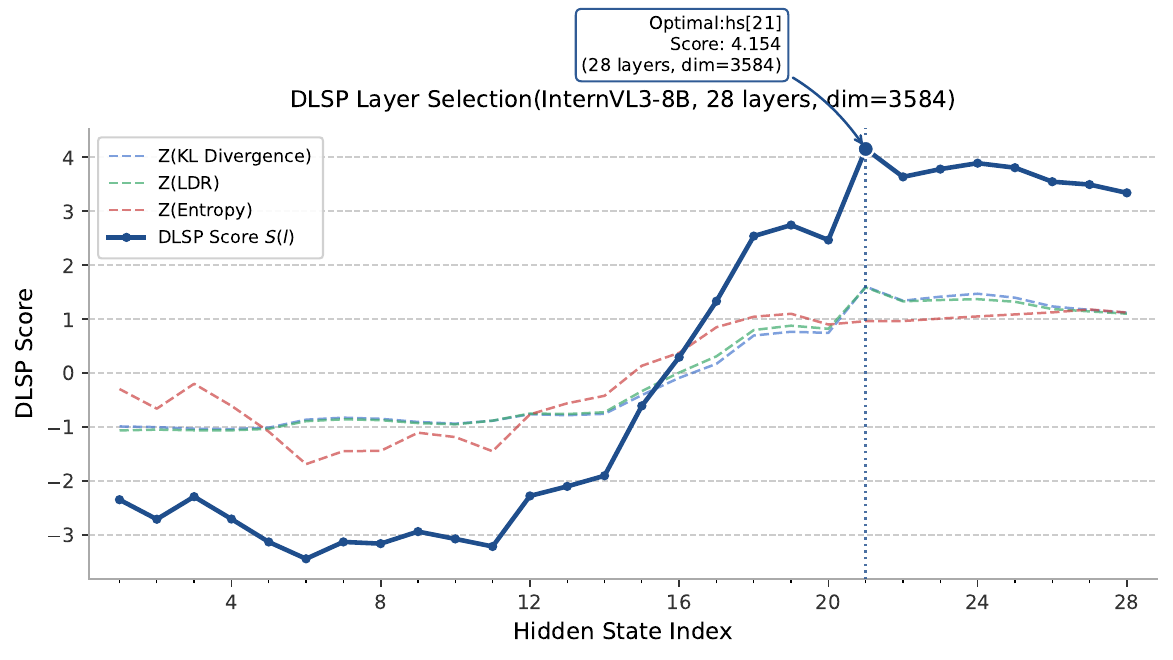}
    \caption{InternVL3 (28 layers, $D{=}3584$)}
    \label{fig:dlsp_internvl3}
  \end{subfigure}
  \hfill
  \begin{subfigure}[b]{0.47\linewidth}
    \includegraphics[width=\linewidth]{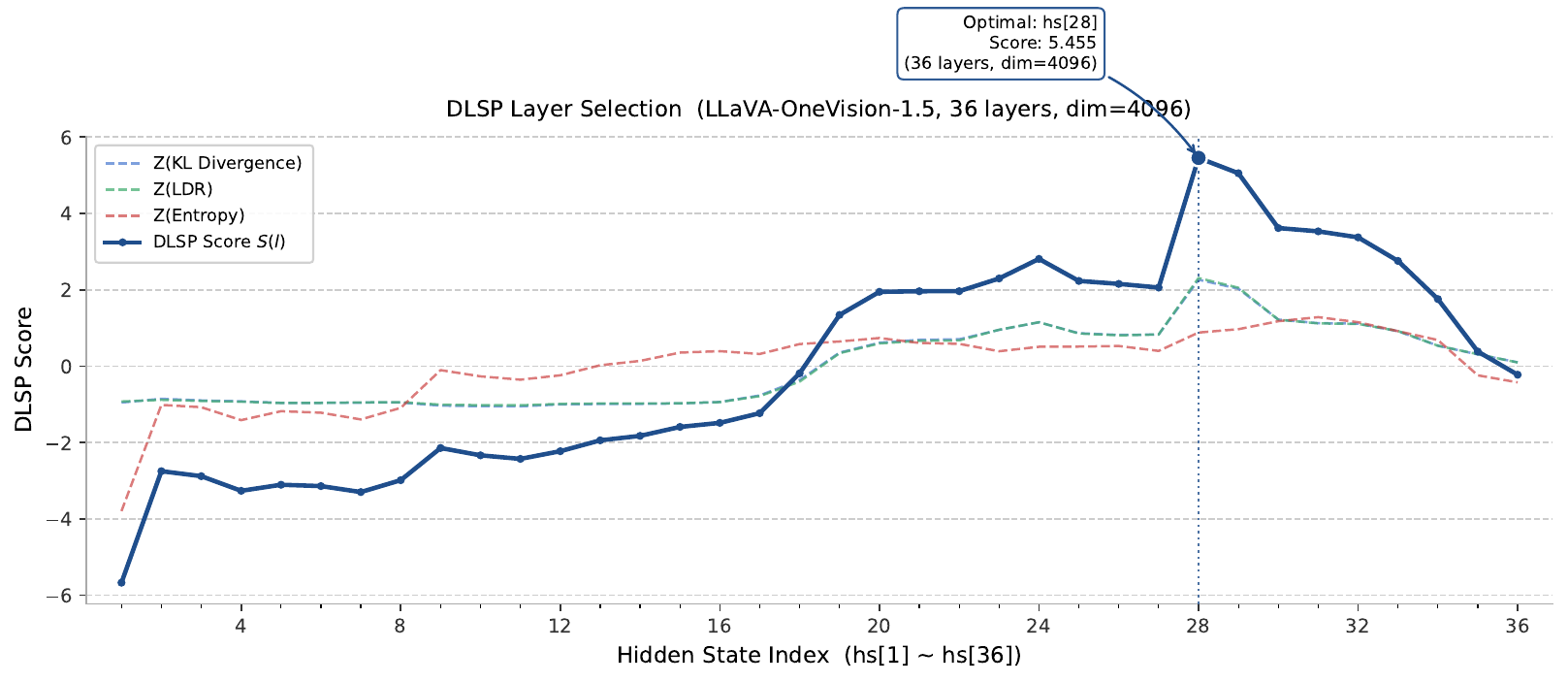}
    \caption{LLaVA-OneVision-1.5 (36 layers, $D{=}4096$)}
    \label{fig:dlsp_llava}
  \end{subfigure}
  \label{fig:dlsp_curves}
  \caption{\textbf{DLSP saliency curves for four MLLM backbones.}Each panel shows the per-layer DLSP composite score $S(\ell)$ (solid black curve, Eq.~\eqref{eq:dlsp}) together with the three constituent Z-scored metrics: $Z(\text{KL Divergence})$, $Z(\text{LDR})$, and $Z(\text{Entropy})$ (dashed curves).
  Vertical dashed lines mark each backbone's optimal extraction layer $\ell^{*}$.
  \textbf{(a)}~Qwen3.5 peaks at hidden state~31 (score~5.406).
  \textbf{(b)}~Qwen3-VL peaks at hidden state~32 (score~5.421).
  \textbf{(c)}~InternVL3 peaks at hidden state~21 (score~4.154), notably lower than the other three backbones, consistent with its weaker downstream performance.
  \textbf{(d)}~LLaVA-OneVision-1.5 peaks at hidden state~28 (score~5.455).
  All backbones exhibit a clear peak at a deep intermediate layer, with saliency dropping at the very last layers where representations become specialised for language generation.}\label{fig:dlsp_curves}
\end{figure}
\FloatBarrier
Several consistent patterns emerge:
\begin{itemize}[nosep,leftmargin=1.2em]
  \item \textbf{All backbones exhibit a clear saliency peak} at a deep layer, confirming that deep intermediate representations carry richer anomaly semantics than either early or final layers.
  \item \textbf{The optimal layer varies across backbones:}
        Qwen3.5 peaks at hidden state~31 (out of 32, $\mathrm{DLSP}^{*}{=}5.406$), Qwen3-VL at hidden state~32 (out of 32, $\mathrm{DLSP}^{*}{=}5.421$), LLaVA-OneVision-1.5 at hidden state~28 (out of 36, $\mathrm{DLSP}^{*}{=}5.455$), and InternVL3 at hidden state~21 (out of 28, $\mathrm{DLSP}^{*}{=}4.154$).
        In general, the optimal layer tends to fall in the range $[0.75L,\,0.97L]$, i.e., in the upper quarter of the network but before the final generation-oriented layers.
  \item \textbf{The three constituent metrics exhibit complementary layer-wise profiles.}
        As shown in the individual Z-score curves (dashed lines in Figure~\ref{fig:dlsp_curves}), KL Divergence, LDR, and Entropy peak at different layers for each backbone, but their composite sum consistently identifies a robust optimum.
        This multi-metric fusion makes the layer selection more stable than any single-metric approach.
  \item \textbf{InternVL3 achieves a notably lower peak saliency} ($\mathrm{DLSP}^{*}{=}4.154$) compared with the other three backbones (all $>$5.4), which is consistent with its relatively weaker downstream performance (Table~\ref{tab:ablation_backbone}).
        The remaining three backbones---Qwen3.5, Qwen3-VL, and LLaVA-OV-1.5---achieve comparable peak saliency scores ($5.406$--$5.455$), suggesting similar intrinsic anomaly-discriminative capacity in their intermediate representations, with downstream performance differences arising from other factors (e.g., visual encoder quality, pre-training data).
\end{itemize}
\paragraph{Sensitivity to layer choice.}
We further evaluate the robustness of the layer selection by measuring downstream performance (XD-Violence AP) when the extraction layer deviates from $\ell^{*}$ by $\pm 1$--$3$ layers.
As shown in Table~\ref{tab:layer_sensitivity}, performance degrades gracefully within a $\pm 2$-layer window ($<$1\% AP drop), confirming that DLSP reliably identifies a high-quality extraction region rather than a fragile single-layer optimum.
\begin{table}[htbp]
\caption{\textbf{Layer sensitivity analysis (Qwen3.5, XD-Violence).}
Performance variation when the extraction layer deviates from the DLSP-optimal hidden state~31.}
\label{tab:layer_sensitivity}
\centering
\small
\begin{tabular}{@{}l|ccccc@{}}
\toprule
\textbf{Layer $\ell$} & 28 & 29 & 30 & \textbf{31} & 32 \\
\midrule
\textbf{AP(\%)} & 85.73 & 86.42 & 86.81 & \textbf{86.99} & 86.87 \\
$\Delta$ vs.\ $\ell^{*}$ & $-$1.26 & $-$0.57 & $-$0.18 & --- & $-$0.12 \\
\bottomrule
\end{tabular}
\end{table}
\FloatBarrier
\subsection{Dual Feature Design}
\label{app:dual_feature}
SphereVAD extracts two complementary feature streams from each input, both from the same forward pass of the frozen MLLM:
\begin{enumerate}[leftmargin=1.5em]
  \item \textbf{Main feature $f^l \in \mathbb{R}^{D}$ (last-token hidden state at layer $\ell^{*}$).}
        This is the hidden state of the \emph{last sequence token} (i.e., the final token of the generation prompt) at the DLSP-selected layer $\ell^{*}$.
        As argued in Section~\ref{app:causal_attn}, under causal attention the last token has the maximal receptive field, attending to the task instruction, all vision tokens, and the output format.
        Its hidden state therefore encodes a rich fusion of visual content and anomaly-reasoning structure.
        $f^l$ is the primary feature used for vMF prototype construction and anomaly scoring.
  \item \textbf{Visual feature $f^v \in \mathbb{R}^{D}$ (visual-last hidden state at layer $\ell^{*}$).}
        This is the hidden state of the \emph{last vision token} (the last token within the vision token span of the fourth sub-image) at the same layer $\ell^{*}$.
        It captures scene appearance---spatial layout, lighting, object arrangement---contextualised by the Part\,1 task instruction but \emph{without} the output-format reasoning of Part\,2.
        $f^v$ is used exclusively for scene-similarity computation in HSA (Section~3.4) and neighbour consensus in SGP (Section~3.5), where the relevant signal is ``do these two clips depict visually similar scenes?'' rather than ``is this clip anomalous?''
\end{enumerate}
\paragraph{Why two features rather than one?}
Using a single feature for both scoring and scene attention would conflate two distinct objectives.
The main feature $f^l$ is optimised (via prompt design and layer selection) to maximise \emph{anomaly discriminability}: normal and anomalous clips in the same scene should have maximally different $f^l$ vectors.
The visual feature $f^v$, by contrast, should maximise \emph{scene similarity}: clips from the same or visually similar scenes should have similar $f^v$ vectors \emph{regardless of whether they are normal or anomalous}.
These two objectives are inherently in tension: a feature that is highly anomaly-discriminative will by definition differ between normal and anomalous clips of the same scene, making it a poor scene-similarity measure.
Empirically, using $f^v$ for HSA scene attention and $f^l$ for scoring outperforms using $f^l$ for both purposes by $+1.43\%$ AP on XD-Violence, confirming the complementary value of the dual-feature design.
\paragraph{Layer index for visual features.}
Both $f^l$ and $f^v$ are extracted from the \emph{same} layer $\ell^{*}$ (selected by DLSP on the main feature).
We experimented with selecting a separate optimal layer for $f^v$ but found no significant improvement ($<$0.2\% AP on XD), likely because the DLSP-selected layer already represents a good trade-off between visual richness and semantic abstraction.
Using the same layer simplifies the pipeline (a single forward pass with a single extraction point) and avoids introducing additional hyperparameters.
\paragraph{Visual feature centering.}
As described in Section~3.2of the main text, the main features $\hat{f}^l$ are centered using a unified Fr\'{e}chet mean computed from \emph{both} synthetic and real features (Eq.~\eqref{eq:frechet}), which absorbs the ${\sim}5^{\circ}$ rotational bias between domains.
The visual features $\hat{f}^v$ are centered with a \emph{separate} Fr\'{e}chet mean $\boldsymbol{\mu}_{\mathrm{unified}}^{v}$ computed from the \textbf{test set only}:
\begin{equation}
  \boldsymbol{\mu}_{\mathrm{unified}}^{v}
  = \mathrm{Fr\acute{e}chetMean}\!\bigl(\{\tilde{f}_{\mathrm{real}}^{v}\}\bigr).\label{eq:frechet_vis}
\end{equation}
The rationale is that visual features serve only for scene-similarity computation among test clips; they are never compared against synthetic prototypes.
Therefore, cross-domain alignment is unnecessary, and including synthetic visual features in the Fr\'{e}chet mean computation would introduce irrelevant bias from the synthetic scene distribution (which is procedurally generated and does not reflect real surveillance environments).
\paragraph{Feature dimensions and storage.}
The hidden dimension varies across backbones: $D{=}4096$ for Qwen3.5, Qwen3-VL, and LLaVA-OneVision-1.5, and $D{=}3584$ for InternVL3.
Each clip produces two float32 vectors, totalling $2\times D \times 4$ bytes per clip (e.g., 32\,KB for $D{=}4096$).
For the largest dataset (XD-Violence test set, ${\sim}$97{,}000 clips), the total feature storage is approximately3.1\,GB (with $D{=}4096$).
Features are saved as per-video \texttt{.pt} files, each containing tensors of shape $[T_i, D]$ for both streams, where $T_i$ is the number of clips in video $V_i$.
\paragraph{Multi-GPU extraction.}
Feature extraction is parallelised across multiple GPUs using a producer--consumer architecture.
Tasks (synthetic pairs for calibration; video clips for test sets) are placed in a shared queue and distributed to worker processes, each running on a designated GPU.
An asynchronous prefetch pool (3 threads per worker) pre-tokenises upcoming inputs while the current input is being processed by the model, effectively hiding I/O and preprocessing latency.
For extremely long videos (e.g., $>$1{,}000 clips), the clips are split into chunks assigned to different workers and merged after extraction.
On4$\times$A100 GPUs, the complete feature extraction for all three benchmarks plus the synthetic calibration set takes approximately 107~minutes.

\section{SGP: Full Algorithm}
\label{app:sgp}

This section provides the complete mathematical derivation and algorithmic details of the vMF-guided Spherical Geodesic Pulling (SGP) mechanism described in Section~\ref{sec:slerp} of the main text.
SGP is applied independently to each video and comprises four stages: MAD-based adaptive tri-classification(\S\ref{app:mad}), dominant prototype voting (\S\ref{app:dominant}), neighbour consensus assignment (\S\ref{app:neighbor}), and score-adaptive SLERP pulling (\S\ref{app:slerp_derivation}).
The complete algorithm is given in \S\ref{app:sgp_algorithm}.

Throughout this appendix, we consider a single video $V$ with $T$ clips.
The enhanced main features $\{\hat{f}_i^{\,l,\mathrm{enh}}\}_{i=1}^{T}\subset\mathcal{S}^{D-1}$, visual features $\{\hat{f}_i^{v}\}_{i=1}^{T}\subset\mathcal{S}^{D-1}$, and initial vMF anomaly scores $\{s_i^{\mathrm{init}}\}_{i=1}^{T}\subset[0,1]$ are all given from the preceding HSA stage (Section~\ref{sec:holistic}).

\subsection{MAD-Based Adaptive Ambiguity Interval}
\label{app:mad}

A fixed decision threshold (e.g., $s=0.5$) is inadequate for real-world surveillance because the initial score distribution varies significantly across videos: some videos exhibit a clear bimodal separation between normal and anomalous clips, while others produce scores concentrated near the decision boundary.
SGP addresses this via an adaptive ambiguity interval $[\rho_{\mathrm{low}},\,\rho_{\mathrm{high}}]$ derived from the Median Absolute Deviation (MAD)~\cite{leys2013detecting}, a robust dispersion estimator that is resistant to outliers.

\paragraph{Step 1: MAD computation.}
Let $\tilde{s}=\mathrm{median}(\{s_i^{\mathrm{init}}\}_{i=1}^{T})$ denote the median initial score of the video.
The MAD is defined as
\begin{equation}
  \mathrm{MAD} = \mathrm{median}\!\bigl(\{|s_i^{\mathrm{init}}-\tilde{s}|\}_{i=1}^{T}\bigr).\label{eq:mad_def}
\end{equation}
The MAD provides a robust measure of score dispersion: a large MAD indicates that scores are spread over a wide range (clear bimodality), whereas a small MAD implies that scores cluster tightly (high ambiguity).

\paragraph{Step 2: Adaptive radius.}
We convert the MAD into a half-width radius $r$ via
\begin{equation}
  r = r_{\min} + (r_{\max} - r_{\min})\cdot \sigma\!\left(-\lambda_r\cdot\bigl(\mathrm{MAD}-\tau_r\bigr)\right),
  \label{eq:mad_radius}
\end{equation}
where $\sigma(\cdot)$ is the logistic sigmoid, $\tau_r$ is a reference MAD threshold, $\lambda_r>0$ controls the transition sharpness, and $[r_{\min},r_{\max}]$ bounds the radius.
The key behaviour is:
\begin{itemize}[nosep,leftmargin=1.2em]
  \item When $\mathrm{MAD}\gg\tau_r$ (bimodal distribution, clear separation), $\sigma(\cdot)\to 0$, so $r\to r_{\min}$: a \emph{narrow} ambiguity interval, because few clips genuinely lie near the boundary.
  \item When $\mathrm{MAD}\ll\tau_r$ (unimodal distribution, scores near0.5), $\sigma(\cdot)\to 1$, so $r\to r_{\max}$: a \emph{wide} ambiguity interval, capturing more clips for correction.
\end{itemize}

\paragraph{Step 3: Interval construction and clipping.}
The raw interval is centred at $0.5$ (the natural decision boundary of the vMF score) and clipped to $[0,1]$:
\begin{equation}
  \rho_{\mathrm{low}} = \max(0,\;0.5 - r),\qquad
  \rho_{\mathrm{high}} = \min(1,\; 0.5 + r).
  \label{eq:rho_interval}
\end{equation}

\paragraph{Step 4: Tri-classification.}
Each clip is assigned to one of three categories based on its initial score:
\begin{equation}
  c_i^{\mathrm{tri}} =
  \begin{cases}
    \mathrm{norm} & \text{if } s_i^{\mathrm{init}}< \rho_{\mathrm{low}},\\[3pt]
    \mathrm{amb}  & \text{if } \rho_{\mathrm{low}} \leq s_i^{\mathrm{init}} \leq \rho_{\mathrm{high}},\\[3pt]
    \mathrm{abn}  & \text{if } s_i^{\mathrm{init}} > \rho_{\mathrm{high}}.
  \end{cases}
  \label{eq:triclass}
\end{equation}
We denote the resulting clip sets as $\mathcal{C}_{\mathrm{norm}}$, $\mathcal{C}_{\mathrm{amb}}$, and $\mathcal{C}_{\mathrm{abn}}$, respectively.
Only clips in $\mathcal{C}_{\mathrm{amb}}$ undergo geodesic pulling; clips in $\mathcal{C}_{\mathrm{norm}}$ and $\mathcal{C}_{\mathrm{abn}}$ retain their original features unchanged.

\paragraph{Practical defaults.}
In all experiments we use $r_{\min}=0.05$, $r_{\max}=0.25$, $\lambda_r=20$, $\tau_r=0.08$.
These values were selected on a held-out subset of synthetic data and held fixed across all three benchmarks.

\subsection{Dominant Prototype Voting}
\label{app:dominant}

SGP's geodesic pulling requires a \emph{target direction} for each ambiguous clip.
Rather than pulling toward all prototypes simultaneously (which would average out discriminative directions), each video selects a small set of \emph{dominant prototypes} that best represent the prevailing event types within that video.
This design reflects the empirical observation that a given surveillance video typically contains at most one type of anomaly (e.g., a single fight or a single traffic accident), though normal behaviour may be diverse (e.g., walking, standing, vehicles passing).

\paragraph{Anomalous dominant prototype ($N_{\mathrm{dom}}^{\mathrm{abn}}=1$).}
Each clip $i\in\mathcal{C}_{\mathrm{abn}}$ votes for its nearest anomalous prototype:
\begin{equation}
  k_i^{\mathrm{abn}} = \operatorname*{arg\,max}_{k\in\{1,\ldots,K_A\}} \;\hat{f}_i^{\,l,\mathrm{enh}\,\top}\boldsymbol{\mu}_k^{\mathrm{abn}}.
  \label{eq:abn_vote}
\end{equation}
The dominant anomalous prototype is determined by majority vote:
\begin{equation}
  \boldsymbol{\mu}_{\mathrm{dom}}^{\mathrm{abn}}
  = \boldsymbol{\mu}_{k^{*}}^{\mathrm{abn}},\qquad
  k^{*} = \operatorname*{arg\,max}_{k}\;\bigl|\{i\in\mathcal{C}_{\mathrm{abn}} : k_i^{\mathrm{abn}}=k\}\bigr|.
  \label{eq:abn_dom}
\end{equation}
Selecting a single dominant anomalous prototype enforces the prior that one anomaly type prevails per video, preventing ambiguous clips from being pulled toward unrelated anomaly directions.

\paragraph{Normal dominant prototypes ($N_{\mathrm{dom}}^{\mathrm{norm}}\leq 2$).}
Normal behaviour is inherently more diverse, so we allow up to two dominant normal prototypes.
Each clip $i\in\mathcal{C}_{\mathrm{norm}}$ votes for its nearest normal prototype analogously to Eq.~\eqref{eq:abn_vote}.
The top-2 prototypes by vote count are retained:
\begin{equation}
  \{\boldsymbol{\mu}_{\mathrm{dom},1}^{\mathrm{norm}},\;\boldsymbol{\mu}_{\mathrm{dom},2}^{\mathrm{norm}}\}
  = \operatorname*{top\text{-}2}_{k}\;\bigl|\{i\in\mathcal{C}_{\mathrm{norm}} : k_i^{\mathrm{norm}}=k\}\bigr|.
  \label{eq:norm_dom}
\end{equation}
If all normal clips vote for the same prototype, only one dominant normal prototype is used ($N_{\mathrm{dom}}^{\mathrm{norm}}=1$).

\paragraph{Minimum anomalous clip threshold.}
If the number of anomalous clips is too small, the voting signal is unreliable and the video is likely entirely normal.
We impose a minimum count threshold:
\begin{equation}
  |\mathcal{C}_{\mathrm{abn}}|< n_{\min}^{\mathrm{abn}}
  \;\Longrightarrow\;
  \text{treat entire video as normal (skip SGP)}.
  \label{eq:min_abn}
\end{equation}
When this condition is triggered, all clips retain their initial scores without pulling.
The threshold $n_{\min}^{\mathrm{abn}}$ is set proportional to the total clip count: $n_{\min}^{\mathrm{abn}} = \max\!\bigl(3,\;\lfloor\gamma_{\min}\cdot T\rfloor\bigr)$, where $\gamma_{\min}=0.02$ is the minimum anomalous fraction.
This prevents SGP from hallucinating anomalies in benign videos based on a handful of noisy high-scoring clips.

\paragraph{Fully normal video handling.}
If $\mathcal{C}_{\mathrm{abn}}=\varnothing$ (no clip exceeds $\rho_{\mathrm{high}}$), the video is automatically treated as entirely normal.
In this case, ambiguous clips are still pulled---but exclusively toward the dominant normal prototype(s)---to suppress residual false positives.

\subsection{Neighbour Consensus Assignment}
\label{app:neighbor}

Each ambiguous clip must be assigned a binary label $c_i\in\{\mathrm{norm},\mathrm{abn}\}$ before pulling.
Since the initial score $s_i^{\mathrm{init}}$ of an ambiguous clip is by definition unreliable (it lies in $[\rho_{\mathrm{low}},\rho_{\mathrm{high}}]$), we resolve the ambiguity via \emph{neighbour consensus}: the assignment is determined not by the clip's own feature but by the aggregated evidence from its visually similar neighbours within the same video.

\paragraph{Step 1: Intra-video sparse attention matrix.}
We construct a sparse attention matrix $\mathbf{A}_H\in\mathbb{R}^{T\times T}$ from the pairwise cosine similarities of the visual features $\{\hat{f}_i^{v}\}_{i=1}^{T}$ within the video:
\begin{equation}
  \tilde{A}_{ij} = \hat{f}_i^{v\,\top}\hat{f}_j^{v},\qquad
  i,j\in\{1,\ldots,T\}.
  \label{eq:AH_raw}
\end{equation}
The raw similarity matrix is sparsified by two mechanisms applied sequentially:
\begin{enumerate}[nosep,leftmargin=1.2em]
  \item \textbf{Threshold sparsification.} Set $\tilde{A}_{ij}=0$ if $\tilde{A}_{ij}<\tau_H$, where $\tau_H$ is a scene-similarity threshold.
        This removes connections between visually dissimilar clips (e.g., different camera angles or scene transitions within the same video).
  \item \textbf{Top-$K_H$ truncation.} For each row $i$, retain only the $K_H$ largest entries and zero out the rest.
        This ensures that each clip attends to a bounded number of neighbours, preventing any single clip from being dominated by a large homogeneous group.
\end{enumerate}
Self-connections are excluded ($\tilde{A}_{ii}=0$) to prevent a clip from reinforcing its own potentially incorrect initial assessment.
Finally, each row is normalised via softmax over the non-zero entries:
\begin{equation}
  A_{ij}^{H} =
  \begin{cases}
    \displaystyle\frac{\exp(\tilde{A}_{ij}/\tau_{\mathrm{temp}})}{\sum_{j':\tilde{A}_{ij'}>0}\exp(\tilde{A}_{ij'}/\tau_{\mathrm{temp}})} & \text{if } \tilde{A}_{ij}>0,\\[8pt]
    0 & \text{otherwise},
  \end{cases}
  \label{eq:AH_softmax}
\end{equation}
where $\tau_{\mathrm{temp}}>0$ is a temperature parameter controlling the attention sharpness.

\paragraph{Step 2: Aggregated neighbour feature.}
For each ambiguous clip $i\in\mathcal{C}_{\mathrm{amb}}$, we compute the attention-weighted aggregation of its neighbours' \emph{main features} (not visual features):
\begin{equation}
  \hat{g}_i = \frac{\sum_{j=1}^{T} A_{ij}^{H}\,\hat{f}_j^{\,l,\mathrm{enh}}}{\bigl\|\sum_{j=1}^{T} A_{ij}^{H}\,\hat{f}_j^{\,l,\mathrm{enh}}\bigr\|_2}\;\in\;\mathcal{S}^{D-1}.
  \label{eq:neighbour_agg}
\end{equation}
The $\ell_2$-normalisation ensures that the aggregated feature remains on the sphere, enabling direct cosine comparison with the dominant prototypes.

\paragraph{Step 3: Prototype comparison and label assignment.}
The aggregated neighbour feature $\hat{g}_i$ is compared with the dominant prototypes by cosine similarity:
\begin{equation}
  \mathrm{sim}_{\mathrm{norm}} = \max_{m\in\{1,\ldots,N_{\mathrm{dom}}^{\mathrm{norm}}\}}\;\hat{g}_i^{\top}\boldsymbol{\mu}_{\mathrm{dom},m}^{\mathrm{norm}},\qquad
  \mathrm{sim}_{\mathrm{abn}} = \hat{g}_i^{\top}\boldsymbol{\mu}_{\mathrm{dom}}^{\mathrm{abn}}.
  \label{eq:proto_compare}
\end{equation}
The label is assigned as:
\begin{equation}
  c_i =
  \begin{cases}
    \mathrm{abn}  & \text{if } \mathrm{sim}_{\mathrm{abn}} > \mathrm{sim}_{\mathrm{norm}} + \delta_{\mathrm{margin}},\\[3pt]
    \mathrm{norm} & \text{otherwise},
  \end{cases}
  \label{eq:label_assign}
\end{equation}
where $\delta_{\mathrm{margin}}\geq 0$ is a small conservatism margin that biases toward the normal label in genuinely ambiguous cases, reflecting the prior that most clips in a surveillance video are normal.
In practice, $\delta_{\mathrm{margin}}=0.01$.

\paragraph{Fallback for isolated clips.}
If an ambiguous clip $i$ has no valid neighbours after sparsification (i.e., all entries in row $i$ of $\mathbf{A}_H$ are zero), the neighbour consensus mechanism cannot be applied.
In this case, the clip's \emph{own} main feature $\hat{f}_i^{\,l,\mathrm{enh}}$ is directly compared with the dominant prototypes:
\begin{equation}
  c_i =
  \begin{cases}
    \mathrm{abn}  & \text{if } \hat{f}_i^{\,l,\mathrm{enh}\,\top}\boldsymbol{\mu}_{\mathrm{dom}}^{\mathrm{abn}} > \max_{m}\hat{f}_i^{\,l,\mathrm{enh}\,\top}\boldsymbol{\mu}_{\mathrm{dom},m}^{\mathrm{norm}} + \delta_{\mathrm{margin}},\\[3pt]
    \mathrm{norm} & \text{otherwise}.
  \end{cases}
  \label{eq:fallback}
\end{equation}
This fallback is conservative (biased toward normal) and occurs rarely in practice ($<$2\% of ambiguous clips across all benchmarks).

\paragraph{Design rationale.}
The neighbour consensus mechanism rests on the assumption that \emph{visually similar clips within the same video are likely to share the same anomaly status}.
This is empirically well-grounded: surveillance videos exhibit strong temporal and visual continuity, so a clip depicting a fight scene is surrounded by other clips of the same fight, and a clip of normal pedestrian traffic is surrounded by similar normal clips.
By aggregating evidence from neighbours (via their main features, which encode anomaly semantics) weighted by visual similarity (from visual features, which encode scene appearance), the assignment leverages complementary information from both feature streams and is robust to the noise in any single clip's initial score.

\subsection{Score-Adaptive SLERP Derivation}
\label{app:slerp_derivation}

Once each ambiguous clip $i\in\mathcal{C}_{\mathrm{amb}}$ is assigned a label $c_i$ and a corresponding target dominant prototype $\boldsymbol{\mu}_{\mathrm{dom}}^{c_i}$, its feature is pulled along the geodesic (great-circle arc) toward the target via Spherical Linear Interpolation (SLERP)~\cite{shoemake1985animating}.

\paragraph{SLERP formula.}
For two unit vectors $\mathbf{p},\mathbf{q}\in\mathcal{S}^{D-1}$ with geodesic angle $\Omega=\arccos(\mathbf{p}^{\top}\mathbf{q})\in(0,\pi)$, the SLERP at parameter $t\in[0,1]$ is defined as:
\begin{equation}
  \mathrm{Slerp}(\mathbf{p},\mathbf{q},t)
  = \frac{\sin\!\bigl((1-t)\,\Omega\bigr)}{\sin\Omega}\,\mathbf{p}
  + \frac{\sin(t\,\Omega)}{\sin\Omega}\,\mathbf{q}.
  \label{eq:slerp_formula}
\end{equation}
One can verify that $\|\mathrm{Slerp}(\mathbf{p},\mathbf{q},t)\|=1$ for all $t\in[0,1]$ (the result always lies on $\mathcal{S}^{D-1}$), $\mathrm{Slerp}(\mathbf{p},\mathbf{q},0)=\mathbf{p}$, and $\mathrm{Slerp}(\mathbf{p},\mathbf{q},1)=\mathbf{q}$.
Furthermore, the angular velocity along the geodesic is constant: $d_{\mathrm{geo}}\!\bigl(\mathbf{p},\mathrm{Slerp}(\mathbf{p},\mathbf{q},t)\bigr)=t\,\Omega$.
When $\Omega\to 0$ (nearly identical vectors), SLERP degenerates to LERP with normalisation; our implementation handles this edge case by switching to $\ell_2$-normalised LERP when $\sin\Omega<10^{-6}$.

\paragraph{Comparison with LERP.}
The standard linear interpolation (LERP) in Euclidean space is:
\begin{equation}
  \mathrm{Lerp}(\mathbf{p},\mathbf{q},t) = (1-t)\,\mathbf{p} + t\,\mathbf{q}.
  \label{eq:lerp_formula}
\end{equation}
For unit vectors $\mathbf{p}$ and $\mathbf{q}$ that are not identical, LERP produces a vector with norm strictly less than~$1$ (for $t\in(0,1)$):
\begin{equation}
  \|\mathrm{Lerp}(\mathbf{p},\mathbf{q},t)\|^2
  = 1 - 2t(1-t)(1-\mathbf{p}^{\top}\mathbf{q})< 1.
  \label{eq:lerp_norm}
\end{equation}
This \emph{norm shrinkage} pushes the interpolated vector off the unit hypersphere, into the interior of the unit ball.
For MLLM features, leaving $\mathcal{S}^{D-1}$ means departing from the semantic manifold on which the pre-trained model's representations reside: directional information (which carries semantic content) is corrupted by an artificial reduction in magnitude.
Even if one subsequently applies $\ell_2$-normalisation to project back onto the sphere, the resulting direction differs from the SLERP direction: normalised LERP does \emph{not} trace the geodesic but instead follows a chord, introducing a systematic angular bias that grows with $\Omega$.
Specifically, the angular error between normalised LERP and SLERP at the midpoint ($t=0.5$) is:
\begin{equation}
  \Delta\theta = \arccos\!\left(\frac{1+\cos\Omega}{2\cos(\Omega/2)}\right) - \frac{\Omega}{2}
  \approx \frac{\Omega^3}{48} + O(\Omega^5),
  \label{eq:lerp_vs_slerp}
\end{equation}
which is $O(\Omega^3)$---small for nearby vectors but non-negligible for the inter-class separations ($\sim$20--30$^{\circ}$) encountered after spherical centering.
SLERP eliminates this bias entirely by construction: it traverses the great-circle arc at constant angular velocity, producing an interpolated vector that is exactly on the geodesic at all times.

\paragraph{Score-adaptive pull strength.}
A uniform pull strength $\beta$ for all ambiguous clips would be suboptimal: a clip with $s_i^{\mathrm{init}}=0.50$ (maximally uncertain) should receive a stronger correction than a clip with $s_i^{\mathrm{init}}=0.42$ (mildly uncertain, leaning normal).
We define a normalised distance-from-boundary measure:
\begin{equation}
  \hat{d}_i =
  \begin{cases}
    \displaystyle\frac{s_i^{\mathrm{init}} - 0.5}{\rho_{\mathrm{high}} - 0.5} & \text{if } s_i^{\mathrm{init}}\geq 0.5,\\[10pt]
    \displaystyle\frac{0.5 - s_i^{\mathrm{init}}}{0.5 - \rho_{\mathrm{low}}} & \text{if } s_i^{\mathrm{init}}<0.5.
  \end{cases}
  \label{eq:dhat}
\end{equation}
By construction, $\hat{d}_i\in[0,1]$: $\hat{d}_i=0$ when $s_i^{\mathrm{init}}=0.5$ (maximum ambiguity) and $\hat{d}_i=1$ when $s_i^{\mathrm{init}}\in\{\rho_{\mathrm{low}},\rho_{\mathrm{high}}\}$ (boundary of the ambiguity interval, minimum ambiguity).
The SLERP interpolation parameter for clip $i$ is then:
\begin{equation}
  \beta_i = \beta_{\mathrm{base}}\times\bigl(1-\tfrac{1}{2}\hat{d}_i\bigr),
  \label{eq:beta_adaptive}
\end{equation}
where $\beta_{\mathrm{base}}\in(0,1)$ is a global hyperparameter controlling the maximum pull strength.
The factor $(1-\frac{1}{2}\hat{d}_i)$ ensures:
\begin{itemize}[nosep,leftmargin=1.2em]
  \item $\hat{d}_i=0$ (most ambiguous, $s\approx 0.5$): $\beta_i=\beta_{\mathrm{base}}$ (full pull strength).
  \item $\hat{d}_i=1$ (least ambiguous, $s\approx\rho_{\mathrm{low}}$ or $\rho_{\mathrm{high}}$): $\beta_i=\frac{1}{2}\beta_{\mathrm{base}}$ (half pull strength).
\end{itemize}
The pull strength thus decreases linearly with decreasing uncertainty, preventing over-correction of clips whose initial scores already lean toward the correct label.
The factor $\frac{1}{2}$ prevents the pull from vanishing entirely at the interval boundaries, ensuring that even mildly ambiguous clips receive a non-trivial correction.

\paragraph{Feature pulling.}
The pulled feature for each ambiguous clip is computed as:
\begin{equation}
  \hat{f}_i^{\,\mathrm{pulled}}
  = \mathrm{Slerp}\!\bigl(\hat{f}_i^{\,l,\mathrm{enh}},\;\boldsymbol{\mu}_{\mathrm{target}}^{c_i},\;\beta_i\bigr),
  \label{eq:pull}
\end{equation}
where $\boldsymbol{\mu}_{\mathrm{target}}^{c_i}$ is the dominant prototype corresponding to the assigned label $c_i$.
For clips assigned $c_i=\mathrm{abn}$, the target is $\boldsymbol{\mu}_{\mathrm{dom}}^{\mathrm{abn}}$.
For clips assigned $c_i=\mathrm{norm}$, the target is the nearest dominant normal prototype: $\boldsymbol{\mu}_{\mathrm{target}}^{\mathrm{norm}}=\arg\max_{m}\hat{f}_i^{\,l,\mathrm{enh}\,\top}\boldsymbol{\mu}_{\mathrm{dom},m}^{\mathrm{norm}}$.
The pulled feature lies on $\mathcal{S}^{D-1}$ by the SLERP guarantee, strictly preserving the manifold constraint.
Clips in $\mathcal{C}_{\mathrm{norm}}$ and $\mathcal{C}_{\mathrm{abn}}$ retain their original features: $\hat{f}_i^{\,\mathrm{pulled}}=\hat{f}_i^{\,l,\mathrm{enh}}$ for $i\notin\mathcal{C}_{\mathrm{amb}}$.

\paragraph{Geometric interpretation.}
Figure~\ref{fig:qualitative_slerp}(c--d) in the main text visualises the effect of SGP on the sphere.
Before pulling, ambiguous features scatter between the normal and anomalous prototype clusters, often with incorrect implicit labels.
After pulling, each ambiguous feature migrates along the geodesic toward its assigned prototype by an arc length of $\beta_i\cdot\Omega_i$, where $\Omega_i=d_{\mathrm{geo}}(\hat{f}_i^{\,l,\mathrm{enh}},\boldsymbol{\mu}_{\mathrm{target}}^{c_i})$.
This produces a clearly bimodal feature distribution on the sphere, which translates directly into a bimodal score distribution with reduced overlap---as confirmed both quantitatively (Table~\ref{tab:ablation_module}, M2$\to$M3) and qualitatively (score curves in Figure~\ref{fig:qualitative_slerp}(a--b)).

\paragraph{Why only ambiguous clips are pulled.}
Pulling clips with confident classifications ($s\ll\rho_{\mathrm{low}}$ or $s\gg\rho_{\mathrm{high}}$) is both unnecessary and potentially harmful.
These clips already have strong initial scores, and pulling them toward prototypes could \emph{over-regularise} their features, erasing fine-grained distinctions (e.g., between different intensities of the same anomaly type).
The tri-classification mechanism ensures that SGP intervenes only where it is needed---on the decision boundary---while leaving well-separated clips undisturbed.

\subsection{Complete Algorithm}
\label{app:sgp_algorithm}

Algorithm~\ref{alg:sgp_full} presents the complete SGP procedure for a single video, corresponding to pipeline Stage~(d) in Figure~\ref{fig:pipeline}.

\begin{algorithm}[htbp]
\caption{vMF-Guided Spherical Geodesic Pulling (SGP) --- Single Video}
\label{alg:sgp_full}
\footnotesize
\begin{algorithmic}[1]
\REQUIRE $\{\hat{f}_i^{\,l,\mathrm{enh}}\}_{i=1}^{T},\;\{\hat{f}_i^{v}\}_{i=1}^{T}\subset\mathcal{S}^{D-1}$;\;scores $\{s_i^{\mathrm{init}}\}\subset[0,1]$;\;
         prototypes $\{\boldsymbol{\mu}_k^{\mathrm{norm}}\}_{k=1}^{K_N},\;\{\boldsymbol{\mu}_k^{\mathrm{abn}}\}_{k=1}^{K_A}$;\;
         params $\beta_{\mathrm{base}}, r_{\min}, r_{\max}, \lambda_r, \tau_r, \gamma_{\min}, \delta_{\mathrm{margin}}, \tau_H, K_H, \tau_{\mathrm{temp}}$
\ENSURE  $\{s_i^{\mathrm{final}}\}_{i=1}^{T}\subset[0,1]$
\STATE \textbf{--- Stage1: MAD-Based Adaptive Tri-Classification ---}
\STATE $\tilde{s}\!\leftarrow\!\mathrm{median}(\{s_i^{\mathrm{init}}\})$;\;
       $\mathrm{MAD}\!\leftarrow\!\mathrm{median}(\{|s_i^{\mathrm{init}}\!-\!\tilde{s}|\})$
\STATE $r\!\leftarrow\!r_{\min}+(r_{\max}\!-\!r_{\min})\cdot\sigma(-\lambda_r(\mathrm{MAD}\!-\!\tau_r))$;\;
       $\rho_{\mathrm{lo}}\!\leftarrow\!\max(0,0.5\!-\!r)$;\;
       $\rho_{\mathrm{hi}}\!\leftarrow\!\min(1,0.5\!+\!r)$
\STATE $\mathcal{C}_{\mathrm{N}}\!\leftarrow\!\{i:s_i<\rho_{\mathrm{lo}}\}$;\;
       $\mathcal{C}_{\mathrm{A}}\!\leftarrow\!\{i:s_i>\rho_{\mathrm{hi}}\}$;\;
       $\mathcal{C}_{\mathrm{amb}}\!\leftarrow\!\{1,\dots,T\}\setminus(\mathcal{C}_{\mathrm{N}}\!\cup\!\mathcal{C}_{\mathrm{A}})$
\STATE \textbf{--- Stage 2: Dominant Prototype Voting ---}
\STATE $n_{\min}^{\mathrm{abn}}\!\leftarrow\!\max(3,\lfloor\gamma_{\min} T\rfloor)$
\IF{$|\mathcal{C}_{\mathrm{A}}|<n_{\min}^{\mathrm{abn}}$}
    \STATE Treat video as fully normal; pull all $\mathcal{C}_{\mathrm{amb}}$ toward dominant normal prototype(s), score via Eq.~\eqref{eq:vmf_score}, \textbf{return}
\ENDIF
\STATE Each $i\!\in\!\mathcal{C}_{\mathrm{A}}$ votes $k_i^{\mathrm{abn}}\!=\!\arg\max_k \hat{f}_i^{\,l,\mathrm{enh}\top}\!\boldsymbol{\mu}_k^{\mathrm{abn}}$;\;
       $\boldsymbol{\mu}_{\mathrm{dom}}^{\mathrm{abn}}\!\leftarrow\!$ majority winner
\STATE Each $i\!\in\!\mathcal{C}_{\mathrm{N}}$ votes similarly;\;
       retain top-2 normal prototypes $\boldsymbol{\mu}_{\mathrm{dom},1}^{\mathrm{norm}}$, $\boldsymbol{\mu}_{\mathrm{dom},2}^{\mathrm{norm}}$
\STATE \textbf{--- Stage 3: Neighbour Consensus Assignment ---}
\STATE Build sparse intra-video attention $\mathbf{A}_H$ from $\{\hat{f}_i^v\}$ (threshold $\tau_H$, top-$K_H$, softmax)
\FOR{each $i\in\mathcal{C}_{\mathrm{amb}}$}
    \IF{$\sum_j A_{ij}^H>0$}
        \STATE $\hat{g}_i\!\leftarrow\!(\sum_j A_{ij}^H \hat{f}_j^{\,l,\mathrm{enh}})/\|\cdot\|_2$
        \hfill\COMMENT{neighbour aggregate}
    \ELSE
        \STATE $\hat{g}_i\!\leftarrow\!\hat{f}_i^{\,l,\mathrm{enh}}$
        \hfill\COMMENT{self-fallback}
    \ENDIF\STATE $c_i\!\leftarrow\!\mathrm{abn}$ if $\hat{g}_i^\top\boldsymbol{\mu}_{\mathrm{dom}}^{\mathrm{abn}}>\max_m \hat{g}_i^\top\boldsymbol{\mu}_{\mathrm{dom},m}^{\mathrm{norm}}+\delta_{\mathrm{margin}}$,\; else $c_i\!\leftarrow\!\mathrm{norm}$
\ENDFOR
\STATE \textbf{--- Stage 4: Score-Adaptive SLERP Pulling ---}
\FOR{each $i\in\mathcal{C}_{\mathrm{amb}}$}
    \STATE $\hat{d}_i$ via Eq.~\eqref{eq:dhat};\;
           $\beta_i\!\leftarrow\!\beta_{\mathrm{base}}(1\!-\!\tfrac{1}{2}\hat{d}_i)$;\;
           $\Omega_i\!\leftarrow\!\arccos(\hat{f}_i^{\,l,\mathrm{enh}\top}\!\boldsymbol{\mu}_{\mathrm{target}}^{c_i})$
    \STATE $\hat{f}_i^{\,\mathrm{pulled}}\!\leftarrow\!\mathrm{Slerp}(\hat{f}_i^{\,l,\mathrm{enh}},\;\boldsymbol{\mu}_{\mathrm{target}}^{c_i},\;\beta_i)$
           \hfill\COMMENT{Eq.~\eqref{eq:slerp_formula}; skip if $\sin\Omega_i<10^{-6}$}
\ENDFOR
\STATE Non-ambiguous clips: $\hat{f}_i^{\,\mathrm{pulled}}\!\leftarrow\!\hat{f}_i^{\,l,\mathrm{enh}}$ for $i\notin\mathcal{C}_{\mathrm{amb}}$
\STATE \textbf{--- Final Scoring ---}
\FOR{each $i\in\{1,\dots,T\}$}
    \STATE $d_{\mathrm{norm}}\!=\!\min_k\arccos(\hat{f}_i^{\,\mathrm{pulled}\top}\!\boldsymbol{\mu}_k^{\mathrm{norm}})$;\;
           $d_{\mathrm{abn}}\!=\!\min_k\arccos(\hat{f}_i^{\,\mathrm{pulled}\top}\!\boldsymbol{\mu}_k^{\mathrm{abn}})$
    \STATE $s_i^{\mathrm{final}}\!\leftarrow\!\sigma(\kappa(d_{\mathrm{norm}}\!-\!d_{\mathrm{abn}}))$
           \hfill\COMMENT{Eq.~\eqref{eq:vmf_score}}
\ENDFOR
\RETURN $\{s_i^{\mathrm{final}}\}_{i=1}^{T}$
\end{algorithmic}
\end{algorithm}

\paragraph{Computational complexity.}
For a video with $T$ clips and feature dimension $D$, the dominant costs are:
(i)~the intra-video attention matrix $\mathbf{A}_H$: $O(T^2 D)$ for pairwise cosine similarities, reduced to $O(T\cdot K_H\cdot D)$ after top-$K_H$ sparsification;
(ii)~neighbour aggregation: $O(|\mathcal{C}_{\mathrm{amb}}|\cdot K_H\cdot D)$;
(iii)~SLERP computation: $O(|\mathcal{C}_{\mathrm{amb}}|\cdot D)$.
In practice, $|\mathcal{C}_{\mathrm{amb}}|\ll T$ (typically10--25\% of clips), $K_H\leq 20$, and $D=4096$, so the total SGP cost per video is negligible compared to the MLLM forward pass.
For the largest videos in XD-Violence ($T\approx 1{,}800$ clips), SGP completes in $<$0.3\,seconds on a single CPU core.

\paragraph{UBnormal exception.}
As noted in Table~\ref{tab:ablation_module}, SGP is not applied to UBnormal.
The 328 test videos in UBnormal average only ${\sim}12$ seconds each (${\sim}10$--$14$ clips per video), providing insufficient temporal context for the intra-video tri-classification and neighbour consensus mechanisms that SGP relies on.
With so few clips per video, the MAD-based ambiguity interval becomes unreliable (the median and MAD of5--8 values are highly variable), and the intra-video attention matrix degenerates to a near-complete graph where neighbour consensus offers no advantage over self-assignment.
Consequently, M2and M3 produce identical results on UBnormal.
\section{Computational Cost Analysis}
\label{app:compute}

A key advantage of SphereVAD over existing training-free VAD methods is that it requires \emph{no LLM or VLM inference calls at test time}---all computation operates on pre-extracted intermediate-layer features via lightweight geometric operations. This section provides a comprehensive wall-clock time and storage analysis, covering both \textbf{offline} (batch processing) and \textbf{online} (streaming) deployment modes.

\subsection{Offline Inference}
\label{app:compute:offline}

In the offline setting, SphereVAD processes an entire test set in two sequential phases: (1)~a one-time feature extraction pass through the frozen MLLM, and (2)~the geometric inference pipeline (Stages S1--S6). Both phases are fully parallelisable and require no gradient computation.

\subsubsection{Feature Extraction (One-Time Cost)}

Table~\ref{tab:feat_extract_time} reports the wall-clock time for extracting dual-stream features ($\hat{f}^l$ and $\hat{f}^v$) from the frozen Qwen3.5 backbone across all datasets, including the synthetic calibration set. Each clip produces two $\ell_2$-normalised vectors of dimension $D{=}4096$.

\begin{table}[h]
\centering
\caption{Feature extraction time and storage cost. Times are reported for single-GPU (1$\times$A100-80GB) and multi-GPU (4$\times$A100-80GB) configurations. Storage is computed as $\text{Clips} \times 2 \times D \times 4~\text{bytes (float32)}$.}
\label{tab:feat_extract_time}
\begin{tabular}{lrrrr}
\toprule
\textbf{Dataset} & \textbf{Clips} & \textbf{1$\times$A100} & \textbf{4$\times$A100} & \textbf{Storage} \\
\midrule
SYN (calibration) &   2,040 &   5.5 min &  1.5 min & 0.06 GB \\
UCF-Crime         &  46,410 &   2.1 h   & 33.8 min & 1.44 GB \\
XD-Violence       &  97,396 &   4.3 h   & 70.9 min & 3.02 GB \\
UBnormal          &   3,912 &  10.5 min &  2.9 min & 0.12 GB \\
\midrule
\textbf{Total}    & 149,758 & ${\sim}$6.7 h & ${\sim}$1.8 h & 4.64 GB \\
\bottomrule
\end{tabular}
\end{table}

The per-clip storage cost is:
\begin{equation}
    2 \;\text{(streams)} \times 4096 \;\text{(dim)} \times 4 \;\text{(bytes/float32)} = 32{,}768 \;\text{bytes} \approx 32\;\text{KB/clip}.
\end{equation}
Even for the largest benchmark (XD-Violence, ${\sim}$97K clips), the total feature storage is only ${\sim}$3.02\,GB, easily fitting in CPU memory. With 4$\times$A100 GPUs and asynchronous prefetching (Appendix~C.4), the complete extraction for all benchmarks plus calibration data finishes in under 2~hours---a \emph{one-time} cost that is amortised over all subsequent experiments, hyperparameter sweeps, and ablation studies.

\subsubsection{Geometric Inference Pipeline}

Once features are extracted, the six-stage geometric inference pipeline (Stages S1--S6; cf.\ Figure~2) operates entirely on CPU with no GPU requirement. Table~\ref{tab:stage_time} provides a per-stage wall-clock breakdown measured on a single CPU core (Intel Xeon Platinum 8375C).

\begin{table}[h]
\centering
\caption{Per-stage wall-clock time (seconds) for the geometric inference pipeline. All stages run on a single CPU core after features have been loaded into memory. Data loading time is excluded.}
\label{tab:stage_time}
\begin{tabular}{lccc}
\toprule
\textbf{Stage} & \textbf{XD-Violence} & \textbf{UCF-Crime} & \textbf{UBnormal} \\
\midrule
S1: Fr\'{e}chet Mean + Spherical Centering       & 0.591\,s & 0.219\,s & 0.226\,s \\
S2: vMF Prototype Construction (K-Means)         & 0.480\,s & 0.362\,s & 0.388\,s \\
S3: HSA (Cross-Video Holistic Attention)          & 9.218\,s & 2.088\,s & 0.047\,s \\
S4: Initial vMF Scoring                          & 0.119\,s & 0.040\,s & 0.028\,s \\
S5: SGP (Spherical Geodesic Pulling)              & 2.528\,s & 0.735\,s & 0.264\,s \\
S6: Final Scoring + Gaussian Smoothing            & 2.957\,s & 1.032\,s & 0.288\,s \\
\midrule
\textbf{Total Inference}                          & \textbf{15.893\,s} & \textbf{4.476\,s} & \textbf{1.241\,s} \\
\midrule
Data Loading (excluded)                           & 5.08\,s  & 3.50\,s  & 2.00\,s  \\
\bottomrule
\end{tabular}
\end{table}

\paragraph{Key observations.}
\begin{itemize}[nosep,leftmargin=1.5em]
    \item The \emph{entire} geometric inference pipeline completes in \textbf{under 16 seconds} on the largest benchmark (XD-Violence, ${\sim}$97K clips), and under \textbf{5 seconds} on UCF-Crime (${\sim}$46K clips). This is orders of magnitude faster than any LLM/VLM-based scoring method.
    \item The dominant cost is \textbf{HSA} (Stage S3), which constructs the cross-video sparse attention matrix. Its $O(N_{\text{total}}^2 \cdot D)$ pairwise cosine similarity computation scales quadratically with the number of test clips, accounting for ${\sim}$58\% of the total time on XD-Violence. For UBnormal (${\sim}$3.9K clips), HSA is negligible (0.047\,s).
    \item \textbf{SGP} (Stage S5) is applied per-video and scales linearly with the number of clips per video; its total time across all test videos is modest ($<$3\,s even on XD-Violence).
    \item All stages are \emph{deterministic} (no stochastic sampling) and require \emph{no GPU}: the pipeline runs on a single CPU core, making it deployable in resource-constrained settings.
\end{itemize}

\subsubsection{Comparison with Existing Training-Free Methods}

Table~\ref{tab:cost_comparison} compares the computational profile of SphereVAD with three representative training-free VAD methods across multiple efficiency dimensions. All timings are normalised to the UCF-Crime test set for fair comparison.

\begin{table}[h]
\centering
\caption{Computational cost comparison among training-free VAD methods on the UCF-Crime test set. ``LLM/VLM Calls'' counts the number of large-model inference queries required per test video. SphereVAD's inference pipeline requires \emph{zero} such calls, operating entirely on pre-extracted features.}
\label{tab:cost_comparison}
\resizebox{\textwidth}{!}{%
\begin{tabular}{lccccc}
\toprule
\textbf{Method} & \textbf{LLM/VLM Calls} & \textbf{GPU Inference} & \textbf{Post-Extraction} & \textbf{AUC (\%)} \\
 & \textbf{(per video)} & \textbf{(total GPU-hours)} & \textbf{Time} & \\
\midrule
PANDA~\cite{yang2025panda}        & Multiple (VLM + MLLM, iterative reflection) & N/A$^\dagger$ & --- & 84.89 \\
 & \multicolumn{3}{l}{\quad\textit{Average speed: 0.82 FPS on A6000}} & \\
\midrule
VADTree~\cite{li2025vadtree}       & VLM caption + LLM scoring per node           & 62.3 h (2$\times$3090)  & --- & 84.74 \\
\midrule
Unified-VAD~\cite{lin2025unified}  & 1 VLM + 1--2 LLM per 16 frames           & N/A$^\dagger$ & --- & 84.28 \\
  & \multicolumn{3}{l}{\quad\textit{Amortised: 0.029 s/frame on 2$\times$3090}} & \\
\midrule
\textbf{SphereVAD (Ours)}  & \textbf{0} (feature extraction only) & 0.56 h (4$\times$A100)$^\ddagger$ & \textbf{4.5\,s (CPU)} & \textbf{86.38} \\
\bottomrule
\end{tabular}%
}
\begin{flushleft}
\footnotesize
$^\dagger$Exact total GPU-hours not reported; per-frame/FPS timings are cited from the original papers. \\
$^\ddagger$Feature extraction is a one-time cost; subsequent experiments (ablations, hyperparameter sweeps) reuse cached features with zero GPU cost.
\end{flushleft}
\end{table}

\paragraph{Architectural efficiency advantage.}
The fundamental computational distinction between SphereVAD and all prior training-free methods lies in the \emph{absence of test-time LLM/VLM inference calls}. Methods such as PANDA~\cite{yang2025panda}, VADTree~\cite{li2025vadtree}, and Unified-VAD~\cite{lin2025unified} invoke large foundation models (VLMs for captioning/perception, LLMs for scoring/reasoning) repeatedly during inference---often multiple times per video clip---incurring substantial GPU cost that scales linearly (or worse) with the number of test clips. PANDA further compounds this with iterative reflection loops that invoke external tools, reducing its throughput to $<$1\,FPS. VADTree requires over 62 GPU-hours for the VLM/LLM pipeline alone on UCF-Crime.

In contrast, SphereVAD performs a \emph{single} forward pass through the frozen MLLM to extract intermediate-layer features (a one-time cost of 33.8\,min on 4$\times$A100 for UCF-Crime), after which \emph{all subsequent computation}---spherical centering, prototype construction, HSA, vMF scoring, and SGP---is performed via closed-form geometric operations on pre-extracted features, completing in $<$5\,seconds on a single CPU core. This separation of feature extraction from geometric inference yields two practical benefits: (i)~the one-time extraction cost is amortised across all downstream experiments, enabling rapid iteration during development; and (ii)~the inference pipeline itself is lightweight enough to run on commodity hardware without GPU access.

\subsection{Online Inference}
\label{app:compute:online}

For real-time deployment scenarios (e.g., live surveillance feeds), SphereVAD supports a streamlined \textbf{online inference mode} that trades a small amount of detection accuracy for substantial latency reduction. The online pipeline simplifies the offline framework as follows:

\begin{enumerate}[nosep,leftmargin=1.5em]
    \item \textbf{Calibration from synthetic data only.} The Fr\'{e}chet mean $\mu_{\text{unified}}$ is computed using \emph{only} the synthetic calibration features $\{\tilde{f}^l_{\text{syn}}\}$, without pooling real test-set features. This eliminates the need to accumulate statistics over the test set before inference begins---the spherical reference frame is fixed \emph{a priori}.
    \item \textbf{Prototypes from synthetic data only.} The vMF prototypes $\{\mu^{\text{norm}}_k\}$, $\{\mu^{\text{abn}}_k\}$ are constructed via spherical $K$-Means on the centered synthetic features, identical to the offline pipeline. No test-set statistics are required.
    \item \textbf{Direct vMF scoring (no HSA, no SGP).} Each incoming clip is independently scored via the vMF likelihood-ratio criterion (Eq.~(4)) immediately after feature extraction and spherical centering. The cross-video HSA module (which requires access to other test clips) and the intra-video SGP module (which requires a complete video) are both disabled, as they are inherently batch operations.
\end{enumerate}

\noindent Note that the online mode differs from configuration \textbf{M1} in Table~2 of the main text: M1 uses the unified Fr\'{e}chet mean computed from \emph{both} synthetic and real features (Eq.~(1)), whereas the online mode computes the Fr\'{e}chet mean from synthetic data \emph{only}, since real test-set statistics are unavailable before inference begins. The online mode therefore constitutes a strictly more constrained setting than M1.

\paragraph{Throughput analysis.}
Under this configuration, the per-clip inference consists of three operations: (i)~a single MLLM forward pass for dual-stream feature extraction, (ii)~logarithmic-map centering and $\ell_2$-normalisation (Eq.~(2)), and (iii)~vMF scoring via Eq.~(4). Operations (ii) and (iii) are negligible ($<$0.1\,ms per clip). The throughput is therefore dominated by the MLLM forward pass.

On a single A100-80GB GPU with the Qwen3.5 backbone, each clip (a $2{\times}2$ grid of four frames) is processed in approximately ${\sim}$67\,ms. Assuming a real-world video frame rate of 24\,FPS and a clip stride of 4frames (i.e., one clip per $4/24 \approx 0.167$\,s of video), the effective online throughput is:
\begin{equation}
    \text{FPS}_{\text{online}} = \frac{4\;\text{frames/clip}}{0.067\;\text{s/clip}} \approx \textbf{59.3\;\text{FPS}},
\end{equation}
which comfortably exceeds real-time requirements (24\,FPS) by a factor of ${\sim}$2.5$\times$, leaving sufficient headroom for I/O overhead and multi-stream processing.

\paragraph{Online vs.\ offline accuracy.}
The online mode uses a synthetic-only Fr\'{e}chet mean and disables both HSA and SGP, yielding 81.37\% AP on XD-Violence, 78.31\% AUC on UCF-Crime, and 71.62\% AUC on UBnormal. The accuracy gap relative to the full offline pipeline (M3: 86.99\%,86.38\%, 76.46\%) reflects the combined effect of (i)~the absence of real test-set features in the Fr\'{e}chet mean computation, which leaves a residual synthetic--real rotational bias unabsorbed, and (ii)~the removal of HSA and SGP, which provide cross-video consistency and intra-video boundary refinement, respectively. For reference, configuration M1 in Table~2---which retains the unified (synthetic\,+\,real) Fr\'{e}chet mean but likewise disables HSA and SGP---achieves 82.51\% AP on XD-Violence, 81.68\% AUC on UCF-Crime, and 75.98\% AUC on UBnormal, illustrating the additional cost of the synthetic-only centering used in the online setting.

Nonetheless, the online configuration \emph{already surpasses} several prior training-free methods that require full LLM/VLM inference at test time (e.g., LAVAD: 62.01\% AP on XD, 80.28\% AUC on UCF; EventVAD: 64.04\% AP on XD, 82.03\% AUC on UCF), while running at $>$59\,FPS versus $<$1\,FPS for agent-based methods such as PANDA.

\paragraph{Summary.}
Table~\ref{tab:online_vs_offline} contrasts the offline and online deployment modes.

\begin{table}[h]
\centering
\caption{Offline vs.\ online deployment modes of SphereVAD.}
\label{tab:online_vs_offline}
\resizebox{\textwidth}{!}{%
\begin{tabular}{lcccccc}
\toprule
\textbf{Mode} & \textbf{Fr\'{e}chet Mean} & \textbf{Prototypes} & \textbf{HSA} & \textbf{SGP} & \textbf{Throughput} & \textbf{XD-Vio / UCF / UBn} \\
\midrule
Offline (M3) & Syn + Real & Syn & \cmark & \cmark$^*$ & 4.5\,s total (CPU) & 86.99 / 86.38 / 76.46 \\
Offline (M1) & Syn + Real & Syn & \xmark & \xmark     & 4.5\,s total (CPU) & 82.51 / 81.68 / 75.98 \\
Online       & Syn only   & Syn & \xmark & \xmark     & 59.3 FPS (1$\times$A100) & 81.37 / 78.31 / 71.62 \\
\bottomrule
\end{tabular}%
}
\begin{flushleft}
\footnotesize
$^*$SGP is disabled for UBnormal due to short video duration (see Appendix~D).\\
Metrics: XD-Violence = AP(\%); UCF-Crime and UBnormal = AUC(\%).\\
M1 uses the unified (Syn\,+\,Real) Fr\'{e}chet mean but disables HSA and SGP; the online mode further restricts the Fr\'{e}chet mean to synthetic data only, enabling single-clip streaming without any test-set statistics.
\end{flushleft}
\end{table}

\section{Additional Visualizations}
\label{app:add_vis}
\subsection{Additional Frame-Level Score Curves}
\label{app:add_curves}

To complement the examples in the main text (Figure~4), we present four additional pairs of abnormal and normal videos from the XD-Violence and UCF-Crime datasets in Figure~\ref{fig:add_frame_scores}. Each pair consists of an upper panel showing an abnormal video with ground-truth annotations (shaded regions) and a lower panel showing a normal video. Representative frames are displayed alongside their textual descriptions. The anomaly score curves demonstrate that our method consistently assigns high scores within annotated violent segments while maintaining low scores for normal content.

\begin{figure}[htbp]
    \centering
    \begin{subfigure}[t]{0.48\textwidth}
        \centering
        \includegraphics[width=\linewidth]{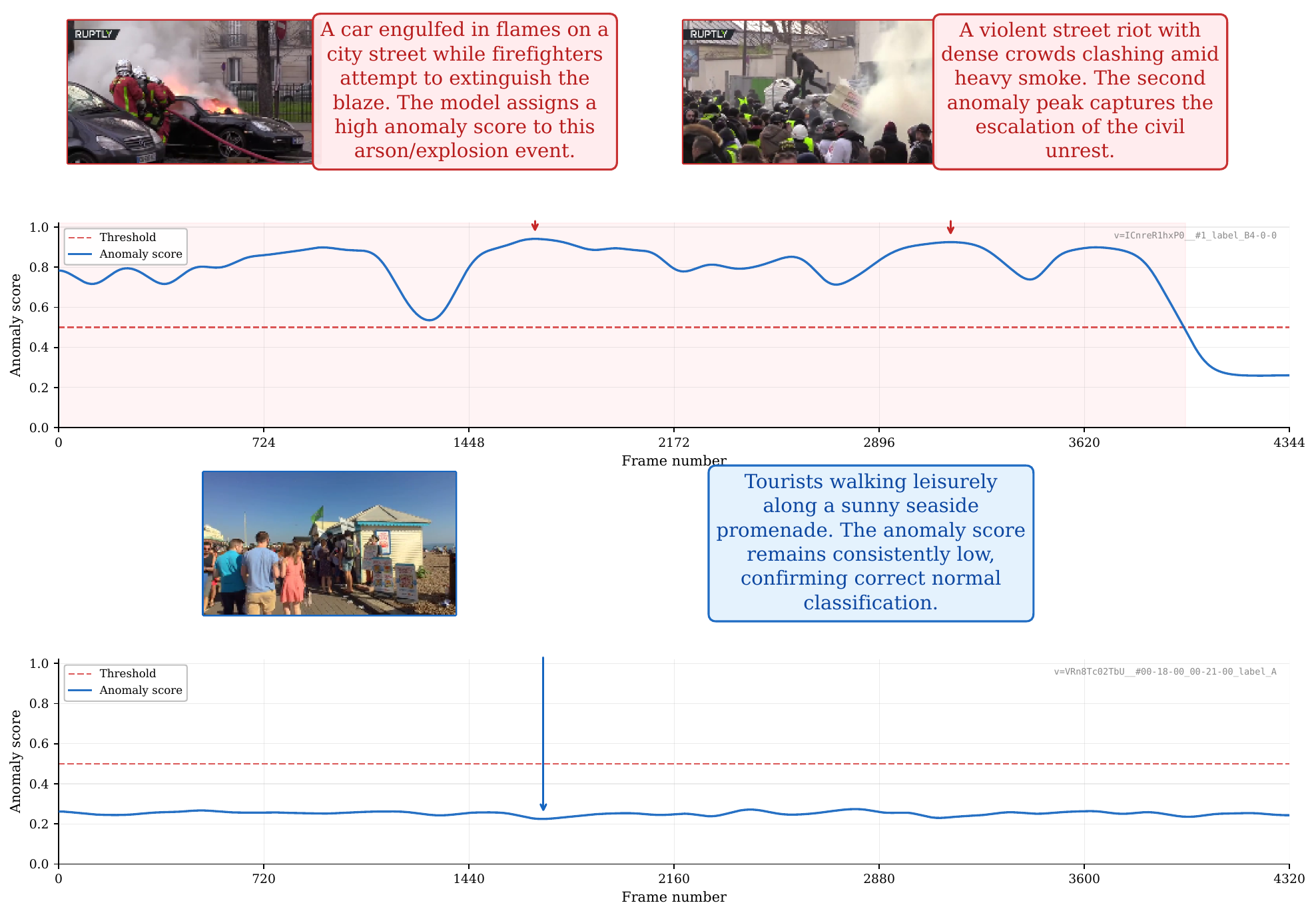}
        \caption{%
            \textit{Top:} Car fire and street riot with two distinct score peaks.
            \textit{Bottom:} Tourists at a seaside promenade (normal).
        }
        \label{fig:frame_score_pair1}
    \end{subfigure}
    \hfill
    \begin{subfigure}[t]{0.48\textwidth}
        \centering
        \includegraphics[width=\linewidth]{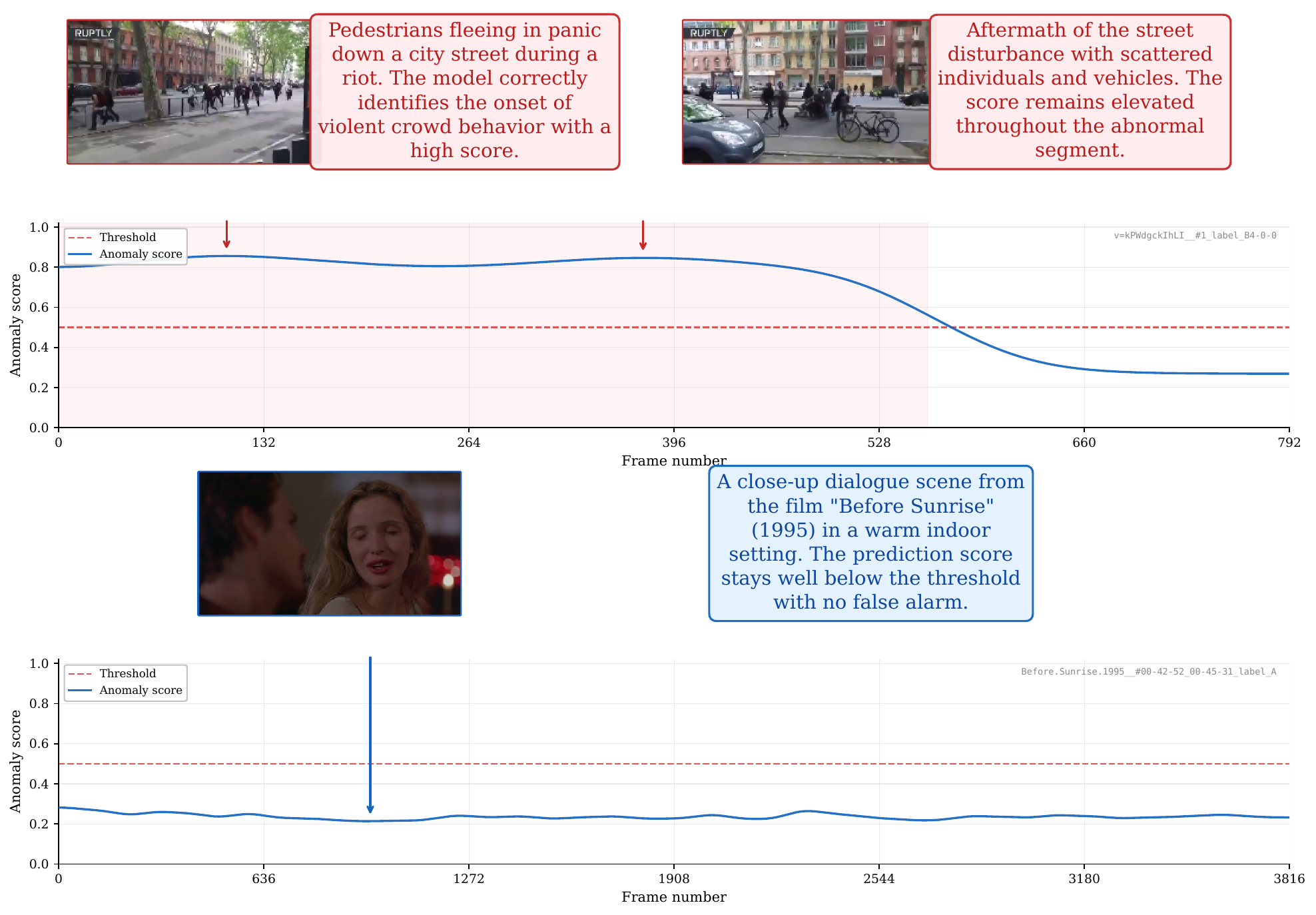}
        \caption{%
            \textit{Top:} Street riot with pedestrians fleeing in panic.
            \textit{Bottom:} Indoor dialogue from \textit{Before Sunrise} (normal).
        }
        \label{fig:frame_score_pair2}
    \end{subfigure}

    \vspace{0.8em}

    \begin{subfigure}[t]{0.48\textwidth}
        \centering
        \includegraphics[width=\linewidth]{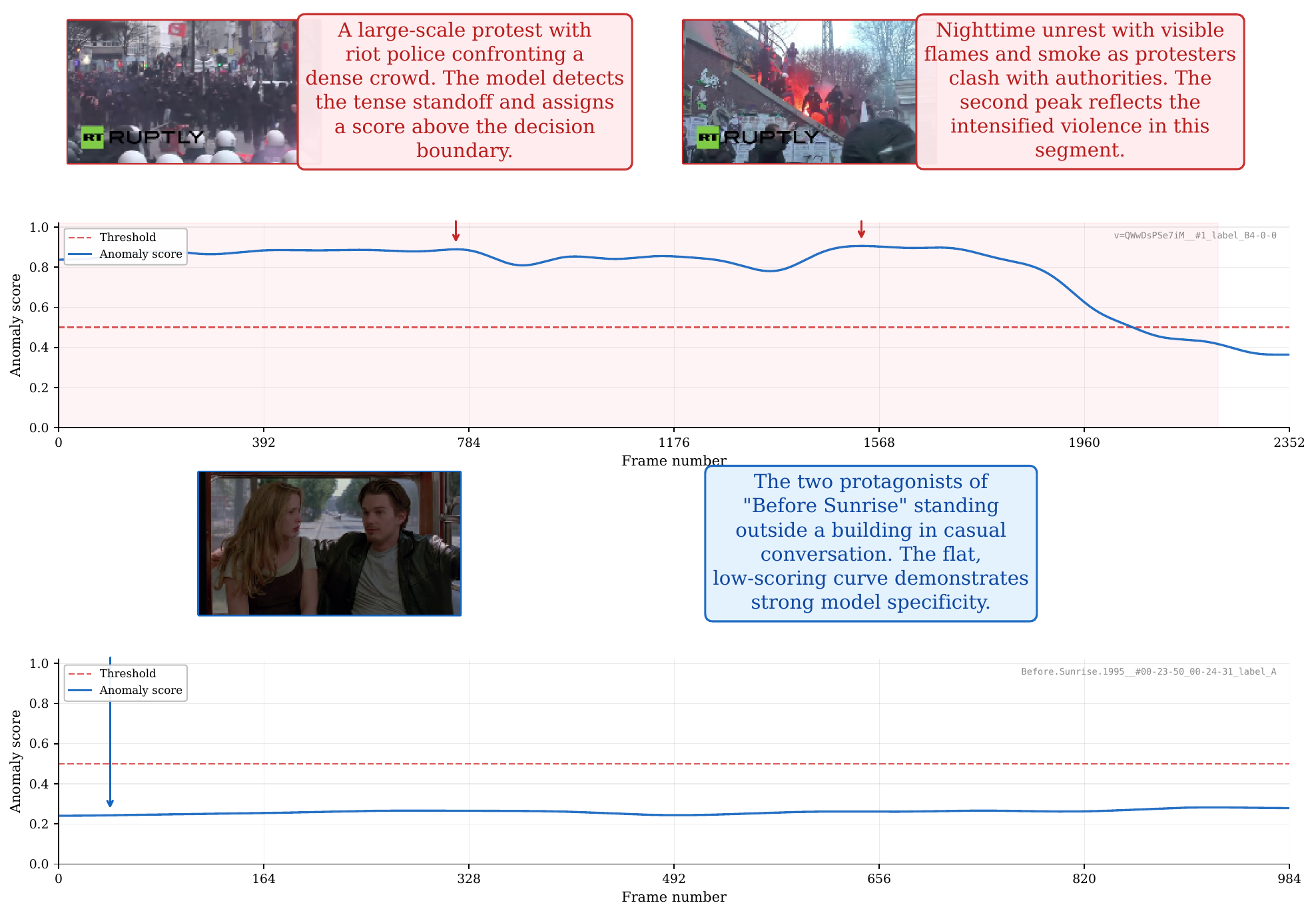}
        \caption{%
            \textit{Top:} Large-scale protest escalating into nighttime unrest.
            \textit{Bottom:} Outdoor conversation from \textit{Before Sunrise} (normal).
        }
        \label{fig:frame_score_pair3}
    \end{subfigure}
    \hfill
    \begin{subfigure}[t]{0.48\textwidth}
        \centering
        \includegraphics[width=\linewidth]{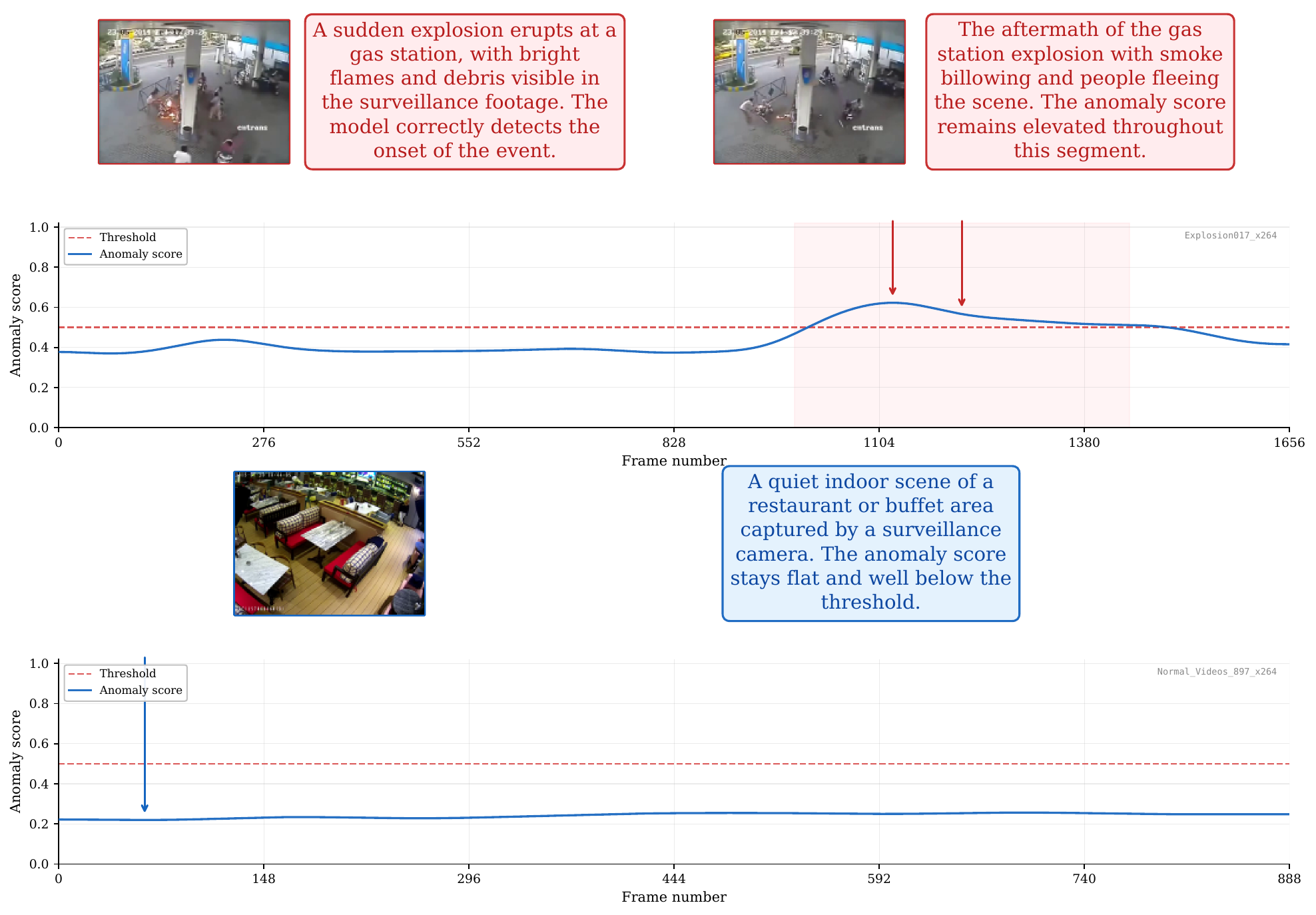}
        \caption{%
            \textit{Top:} Gas station explosion with flames and fleeing pedestrians.
            \textit{Bottom:} Quiet indoor surveillance of a restaurant area (normal).
        }
        \label{fig:frame_score_pair4}
    \end{subfigure}

    \caption{%
        \textbf{Additional frame-level anomaly score curves} on XD-Violence (a--c) and UCF-Crime (d).
        Each sub-figure shows an abnormal video (upper panel, with GT shaded) paired with a normal video (lower panel).
        The model produces high scores aligned with annotated violent segments while maintaining low scores for normal content.
    }
    \label{fig:add_frame_scores}
\end{figure}
\subsection{Additional SLERP Sphere Visualizations}
\label{app:add_slerp}

To provide a comprehensive view of the SLERP geodesic pulling effect on the hypersphere, we render the same set of ambiguous clips from three different viewpoints in Figure~\ref{fig:add_slerp}. In each sub-figure, the left panel shows the ambiguous clip features \emph{before} SLERP pulling (i.e., at the M2 stage), while the right panel shows the same features \emph{after} being pulled toward their nearest vMF prototypes (M3 stage). All panels share the same PCA projection and color scale, where blue indicates low anomaly scores (normal) and red indicates high anomaly scores (abnormal). The visualizations confirm that SLERP consistently resolves ambiguous features by displacing them toward the correct prototype region, resulting in a clearer separation between normal and abnormal clusters across all viewing angles.

\begin{figure}[htbp]
    \centering
    \begin{subfigure}[t]{\textwidth}
        \centering
        \includegraphics[width=\linewidth]{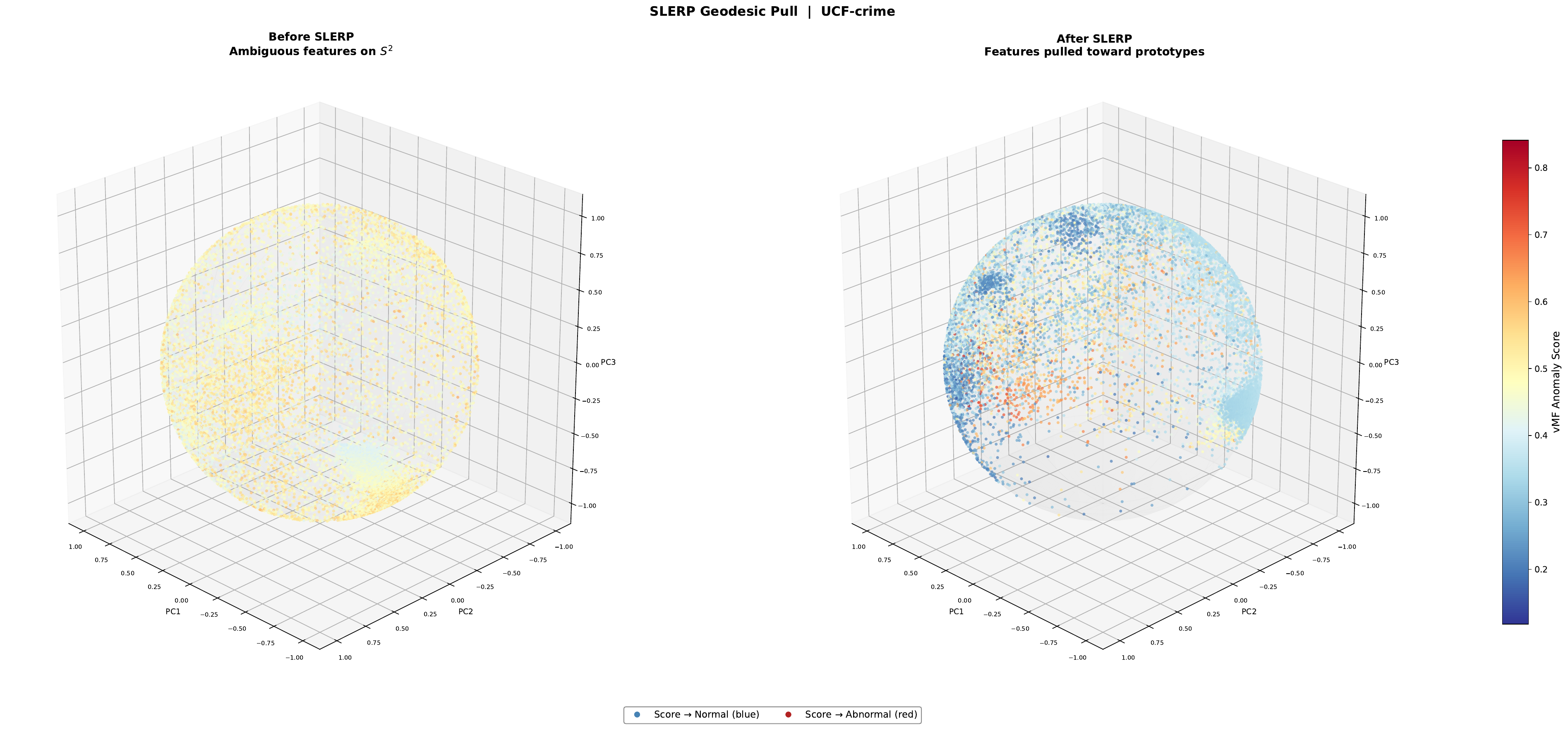}
        \caption{%
            Default view (elev=$25\degree$, azim=$135\degree$). The overall distribution of ambiguous clips is visible, with SLERP pulling features away from the decision boundary toward their respective prototype clusters.}
        \label{fig:slerp_default}
    \end{subfigure}

    \vspace{0.6em}

    \begin{subfigure}[t]{\textwidth}
        \centering
        \includegraphics[width=\linewidth]{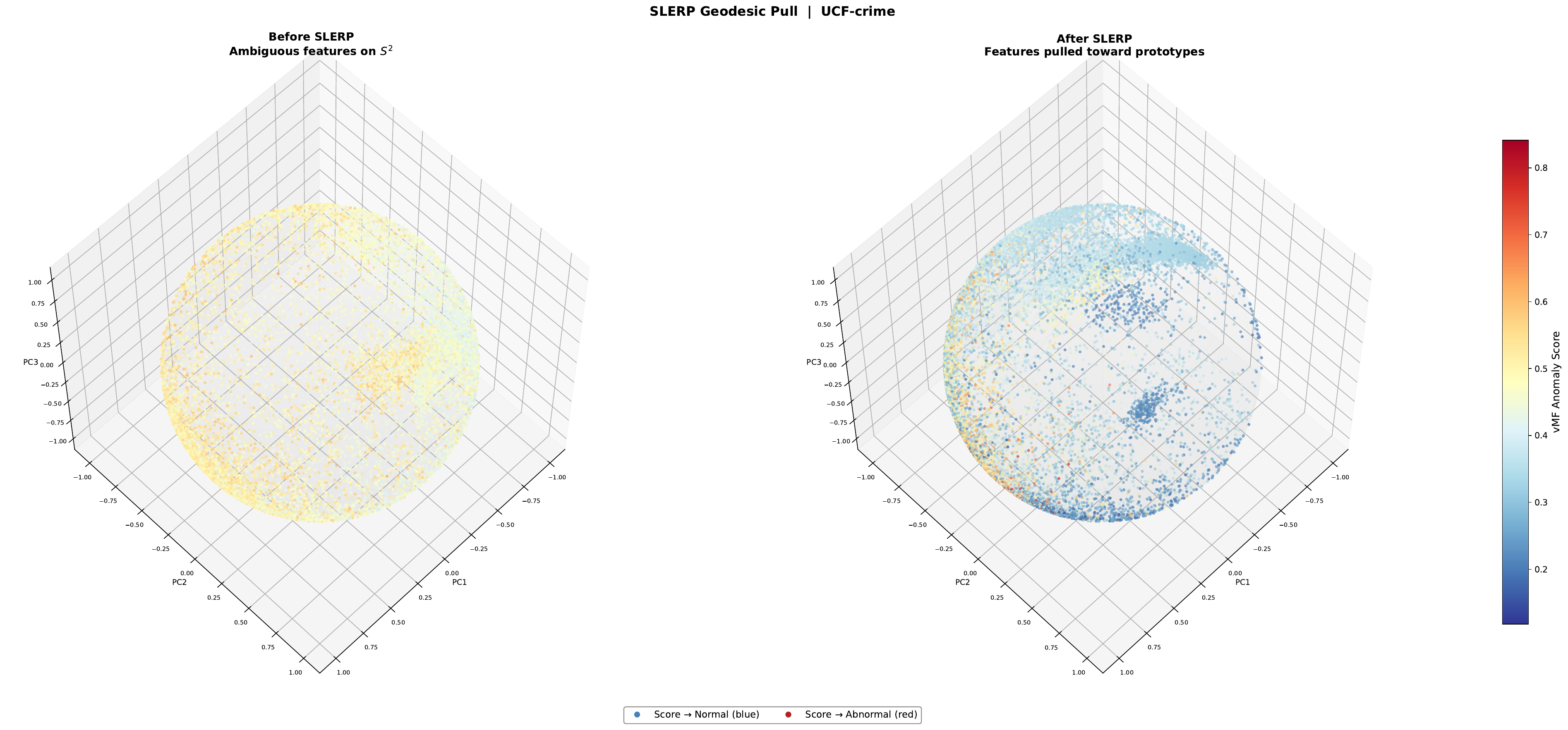}
        \caption{%
            Top-down view (elev=$60\degree$, azim=$45\degree$). This bird's-eye perspective highlights the lateral displacement of ambiguous features, showing that clips initially clustered near the equatorial boundary are redistributed toward polar regions after SLERP.}
        \label{fig:slerp_top}
    \end{subfigure}

    \vspace{0.6em}

    \begin{subfigure}[t]{\textwidth}
        \centering
        \includegraphics[width=\linewidth]{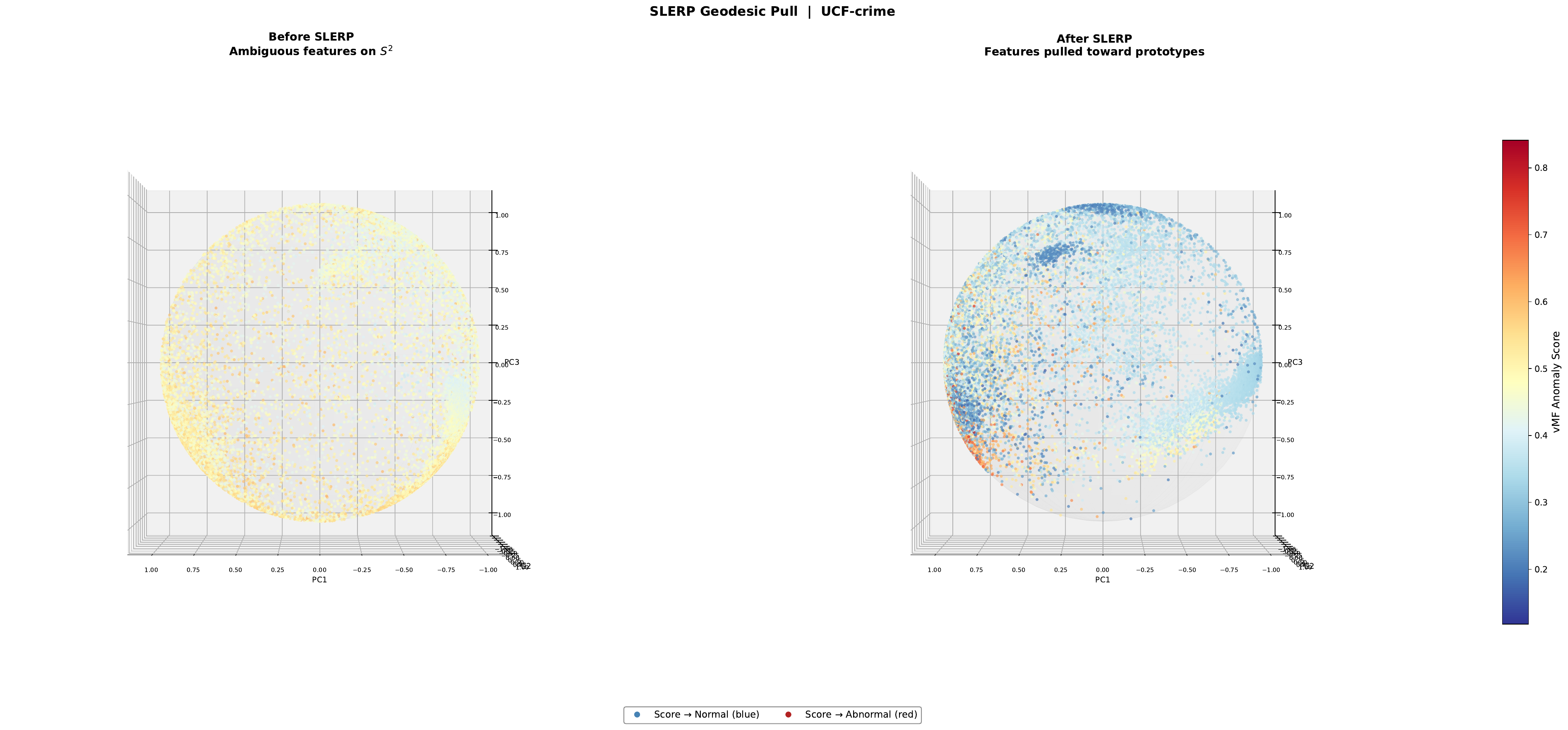}
        \caption{%
            Side view (elev=$0\degree$, azim=$90\degree$). The profile view reveals the geodesic displacement along the vertical axis, confirming that the pulling magnitude is larger for clips closer to the decision boundary (score $\approx 0.5$).}
        \label{fig:slerp_side}
    \end{subfigure}

    \caption{%
        \textbf{SLERP geodesic pulling visualized from three viewpoints} on UCF-Crime.
        \textit{Left} of each panel: ambiguous features before SLERP (M2 stage).
        \textit{Right} of each panel: the same features after SLERP pulling toward vMF prototypes (M3 stage).
        Points are colored by their vMF anomaly score (blue $=$ normal, red $=$ abnormal).
        The top50\% of ambiguous clips ranked by geodesic displacement are shown.}
    \label{fig:add_slerp}
\end{figure}
\subsection{Domain Shift Visualization}
\label{app:domain_shift}

Figure~\ref{fig:domain_shift} visualises the distributional gap between the synthetic calibration domain and the three real test domains in both Euclidean and hyperspherical feature spaces, providing direct geometric evidence for the rotational bias discussed in Section~3.2and the necessity of unified Fr\'{e}chet mean centering.

\paragraph{Setup.}
We extract main features $f^l$ (last-token hidden states at layer $\ell^*{=}31$) from the frozen Qwen3.5 backbone for all clips in the synthetic training set and the three test sets (UCF-Crime, XD-Violence, UBnormal). For the Euclidean view~(a), we compute the per-dataset arithmetic mean in the original high-dimensional space and project all means into3D via PCA (fitted on a pooled subsample of $5{,}000$ clips per dataset). For the hyperspherical view~(b), we $\ell_2$-normalise all features onto $\mathcal{S}^{D-1}$, compute the per-dataset Fr\'{e}chet mean via Karcher iteration (Appendix~A.1), and project the four Fr\'{e}chet means into 3D via a separate PCA fitted on the normalised features. To aid visual clarity, the Synthetic mean is placed at the origin (Euclidean) or the north pole (hypersphere), and the remaining three datasets are arranged at uniform azimuthal intervals ($120^\circ$ apart) while preserving the true inter-domain distances or geodesic angles, respectively.

\paragraph{Euclidean space (panel~a).}
Each dataset is represented by a translucent sphere of uniform radius (the $50$th-percentile coverage radius of UBnormal, used as a common scale reference) centred at its Euclidean mean (star marker). The Euclidean distances from Synthetic to the three test domains are $d_{\mathrm{UCF}}{=}21.74$, $d_{\mathrm{XD}}{=}14.29$, and $d_{\mathrm{UBnormal}}{=}17.34$. These distances reflect both directional and magnitude differences in the raw feature space and are not directly comparable across domains because the Euclidean metric conflates norm variation with angular displacement.

\paragraph{Hypersphere space (panel~b).}
After $\ell_2$-normalisation, all features reside on the unit hypersphere $\mathcal{S}^{D-1}$, and the natural metric becomes the geodesic (great-circle) angle. The geodesic gaps from Synthetic to the three test domains are $\Delta_{\mathrm{UCF}}{=}8.2^\circ$, $\Delta_{\mathrm{XD}}{=}5.4^\circ$, and $\Delta_{\mathrm{UBnormal}}{=}6.5^\circ$, visualised as arcs on the unit sphere connecting the Fr\'{e}chet means. These offsets of $5$--$8^\circ$ are consistent with the ${\sim}5^\circ$ systematic rotational bias reported in Section~3.2 and confirm that the domain shift is predominantly \emph{directional} rather than magnitude-based: once norms are factored out, the residual discrepancy reduces to a modest angular rotation.

\paragraph{Implications for spherical centering.}
The visualisation provides three key insights that motivate the unified Fr\'{e}chet mean centering of Eq.~(2):

\begin{enumerate}[leftmargin=*,itemsep=2pt]
    \item \textbf{The domain shift is geometrically compact on the sphere.} While Euclidean distances span a wide range ($14$--$22$), the geodesic angles cluster within $5^\circ$--$8^\circ$, indicating that the four domain distributions occupy nearby but systematically offset spherical caps. A single centering operation can absorb this offset.
    \item \textbf{Synthetic--real misalignment is universal.} All three test domains exhibit a non-negligible angular displacement from the synthetic calibration domain. Without centering, vMF prototypes calibrated on synthetic features would be systematically rotated away from the real feature distribution, degrading scoring accuracy---consistent with the $+5.78\%$ AUC gain observed on UCF-Crime upon adding spherical centering (Table~2, M0a$\to$M0b).
    \item \textbf{The unified Fr\'{e}chet mean provides a symmetric reference.} By pooling synthetic and real features before computing the Fr\'{e}chet mean (Eq.~(1)), the centering base point lies near the geodesic midpoint of the domain-specific means (Proposition~2), absorbing the rotational bias symmetrically from both sides rather than privileging either domain.
\end{enumerate}

\begin{figure}[t]
    \centering
    \includegraphics[width=\linewidth]{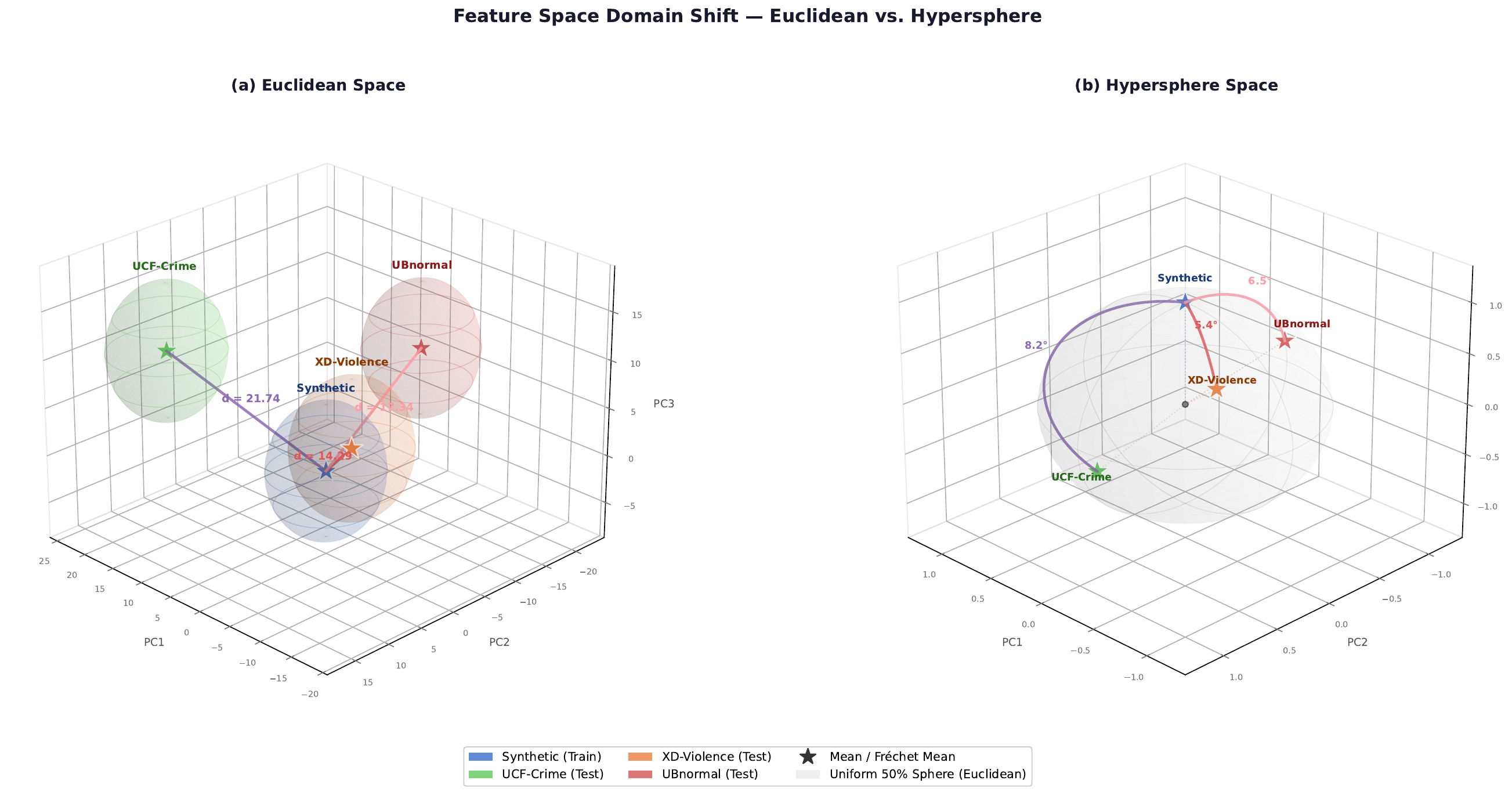}
    \caption{\textbf{Domain shift between synthetic and real feature distributions.}
    (a)~\textit{Euclidean space}: per-dataset arithmetic means (stars) with uniform $50\%$-coverage spheres; straight lines show Euclidean distances from Synthetic to each test domain.
    (b)~\textit{Hypersphere space}: per-dataset Fr\'{e}chet means (stars) on the unit sphere $\mathcal{S}^{D-1}$; geodesic arcs show angular offsets of $5^\circ$--$8^\circ$, confirming a systematic rotational bias that is absorbed by unified Fr\'{e}chet mean centering (Section~3.2).
    Features are extracted from layer~31 of a frozen Qwen3.5;3D positions are obtained via PCA and rearranged for visual clarity (Synthetic at origin/north pole, others at $120^\circ$ azimuthal intervals) while preserving true inter-domain distances/angles.}
    \label{fig:domain_shift}
\end{figure}
\section{Hyperparameter Sensitivity Analysis}
\label{app:hyper}
SphereVAD contains exactly four core hyperparameters: the normal and anomalous prototype counts $(K_N, K_A)$, the HSA mixing coefficient $\alpha_G$, and the SLERP base pull strength $\beta_{\mathrm{base}}$.
Note that $\beta_{\mathrm{base}}$ is applicable only to XD-Violence and UCF-Crime, since SGP is disabled for UBnormal (whose test videos are too short for reliable intra-video tri-classification; see Appendix~D and Table~2 in the main text).
The following subsections report the final selected values, their sensitivity analyses across all three benchmarks, and a unified-setting robustness evaluation that further demonstrates the training-free nature of the framework.
\subsection{Complete Hyperparameter Table}
\label{app:hyper_table}
Table~\ref{tab:hyper_full} summarises the per-dataset hyperparameter configuration used in all reported experiments.
The values were selected by maximising the primary metric on each dataset (frame-level AP for XD-Violence; frame-level AUC for UCF-Crime and UBnormal) using the sensitivity grids described in \S\ref{app:hyper_proto}--\ref{app:hyper_alphabeta}.
All other algorithmic parameters (MAD thresholds, attention sparsification constants, etc.) were fixed on a held-out synthetic validation split and held constant across all datasets; their values are listed in Appendix~D.
\begin{table}[htbp]\centering
  \caption{%
    Final hyperparameter settings for all three benchmarks.
    ``---'' indicates that the parameter is not applicable (SGP is disabled for UBnormal).
  }
  \label{tab:hyper_full}
  \setlength{\tabcolsep}{10pt}
  \begin{tabular}{lcccc}
    \toprule
    \textbf{Dataset} & $K_N$ & $K_A$ & $\alpha_G$ & $\beta_{\mathrm{base}}$ \\
    \midrule
    XD-Violence  & 10& 12 & 0.80& 0.15 \\
    UCF-Crime    & 18 & 12 & 0.75 & 0.50\\
    UBnormal     & 12 & 20 & 0.35 & --- \\
    \bottomrule
  \end{tabular}
\end{table}
\FloatBarrier
\subsection{Prototype Count \texorpdfstring{$(K_N,\,K_A)$}{(KN, KA)}}
\label{app:hyper_proto}
We perform a grid search over $K_N, K_A \in \{1, 7, 12, 18, 23, 29\}$ and report the resulting primary metric on each dataset.
Figure~\ref{fig:proto_sensitivity} shows the $6\times6$ heatmaps for XD-Violence (frame-level AP), UCF-Crime (frame-level AUC), and UBnormal (frame-level AUC).
\FloatBarrier
\begin{figure}[htbp]\centering
  \begin{minipage}[t]{0.32\linewidth}
    \centering
    \includegraphics[width=\linewidth]{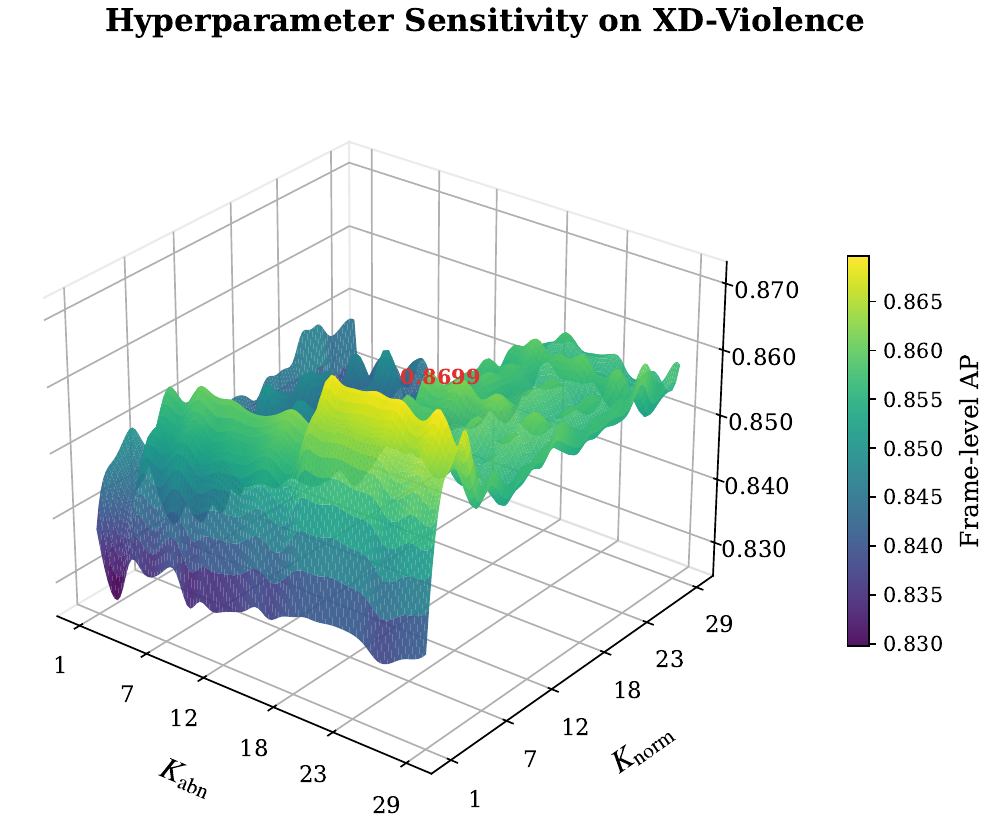}
    \subcaption{XD-Violence (AP)}
    \label{fig:proto_xd}
  \end{minipage}
  \hfill
  \begin{minipage}[t]{0.32\linewidth}
    \centering
    \includegraphics[width=\linewidth]{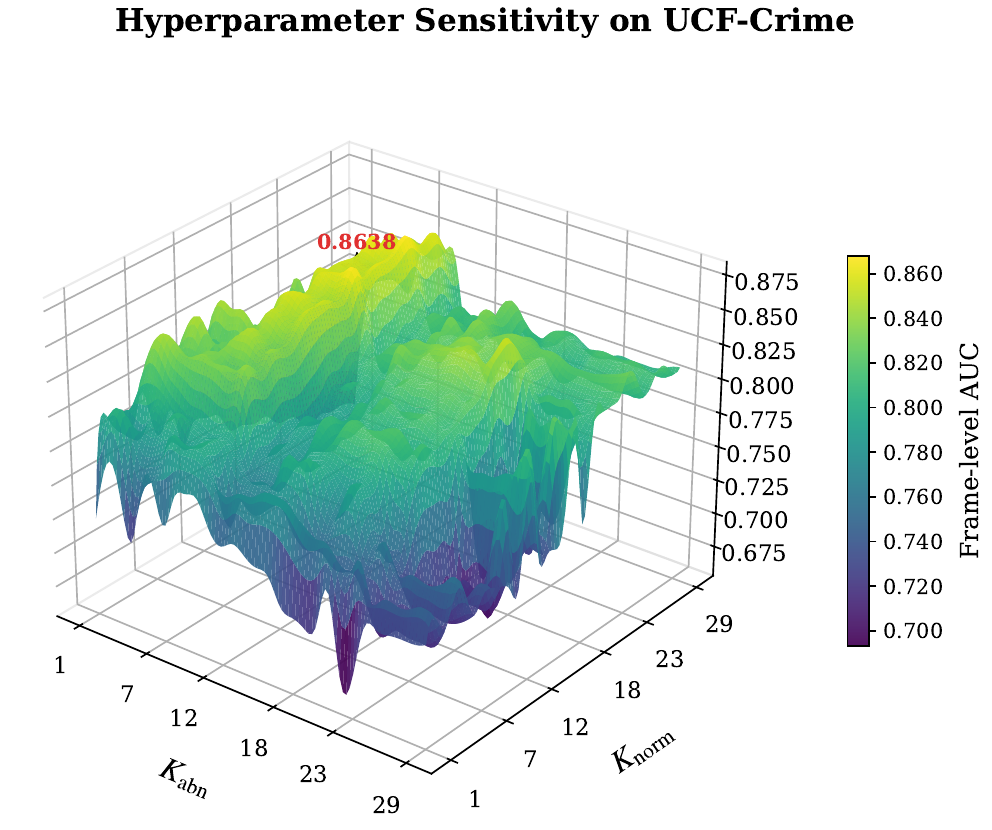}
    \subcaption{UCF-Crime (AUC)}
    \label{fig:proto_ucf}
  \end{minipage}
  \hfill
  \begin{minipage}[t]{0.32\linewidth}
    \centering
    \includegraphics[width=\linewidth]{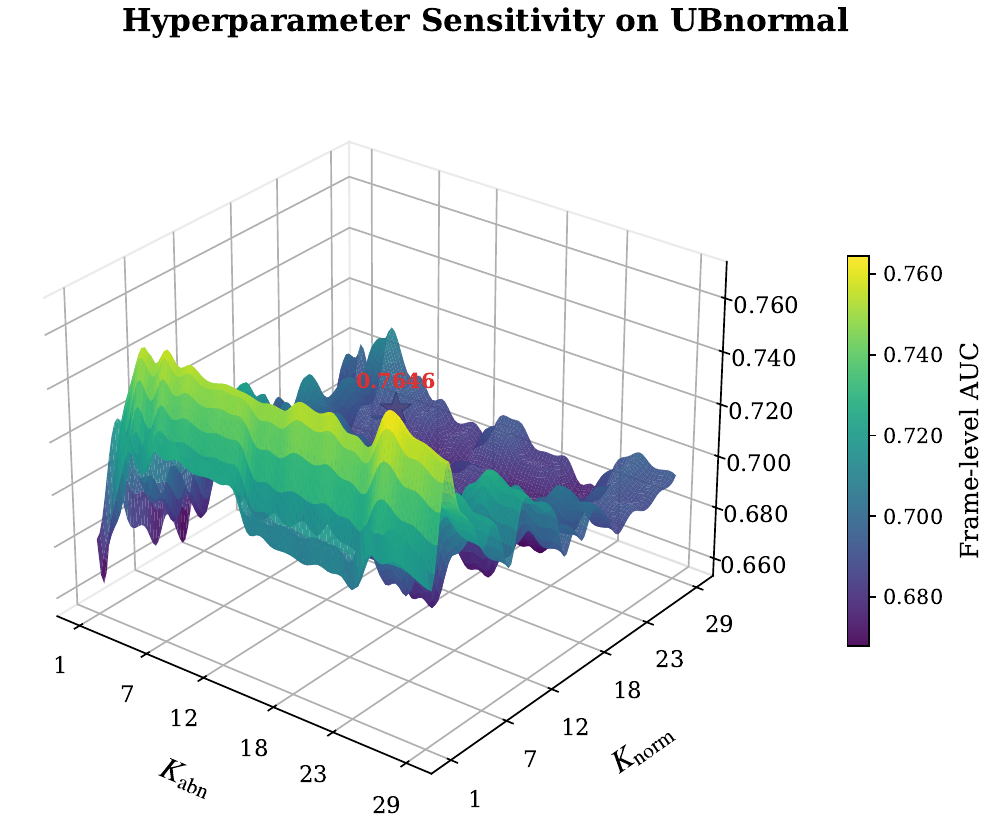}
    \subcaption{UBnormal (AUC)}
    \label{fig:proto_ubn}
  \end{minipage}
  \caption{%
    \textbf{Prototype count sensitivity.}
    Each cell shows the primary metric achieved by SphereVAD when using $K_N$ normal
    prototypes ($x$-axis) and $K_A$ anomalous prototypes ($y$-axis).
    Performance is broadly stable across a wide region of the grid: on XD-Violence the
    top-quartile region (AP $\ge0.860$) spans roughly half the grid, and on UCF-Crime
    the AUC remains above $0.840$ for all $K_N, K_A \ge 7$.
    Very small prototype counts ($K_N\!=\!1$ or $K_A\!=\!1$) incur the largest drops ($\sim$3--5\%),
    confirming the necessity of multi-prototype representations, while excessively large
    counts offer no additional benefit.
  }
  \label{fig:proto_sensitivity}
\end{figure}
\FloatBarrier
\paragraph{Observations.}
\begin{itemize}[leftmargin=1.5em, itemsep=2pt]
  \item \textbf{Plateau behaviour.}
        For all three datasets, performance forms a broad plateau in the region $K_N, K_A \in [12, 23]$.Deviations within this range change the primary metric by less than 0.5\%, demonstrating that SphereVAD is robust to moderate mis-specification of the prototype counts.
  \item \textbf{Single-prototype degeneration.}
        Setting $K_N\!=\!1$ or $K_A\!=\!1$ produces the most notable drops, especially on XD-Violence where violence categories are visually diverse; a single anomalous prototype cannot adequately cover all six violence types.
\end{itemize}
\subsection{HSA Mixing Coefficient \texorpdfstring{$\alpha_G$}{alphaG} and SLERP Base Pull Strength \texorpdfstring{$\beta_{\mathrm{base}}$}{beta\_base}}
\label{app:hyper_alphabeta}
Figure~\ref{fig:ab_sensitivity} reports the primary metric as a function of $\alpha_G$ and $\beta_{\mathrm{base}}$ (each swept independently in $[0, 1]$ while fixing all other parameters at their final values from Table~\ref{tab:hyper_full}).
Because SGP is not applied to UBnormal, only the $\alpha_G$ curve is shown for that dataset.
\FloatBarrier
\begin{figure}[htbp]
  \centering
  \begin{minipage}[t]{0.32\linewidth}
    \centering
    \includegraphics[width=\linewidth]{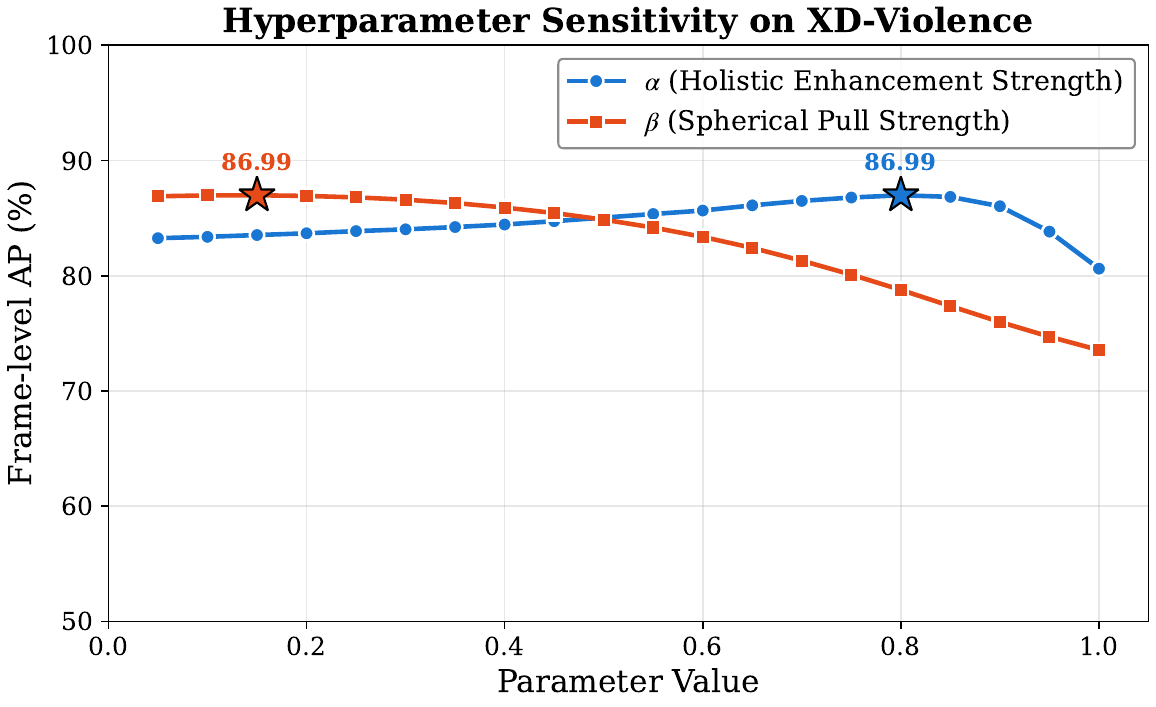}
    \subcaption{XD-Violence (AP)}
    \label{fig:ab_xd}
  \end{minipage}
  \hfill
  \begin{minipage}[t]{0.32\linewidth}
    \centering
    \includegraphics[width=\linewidth]{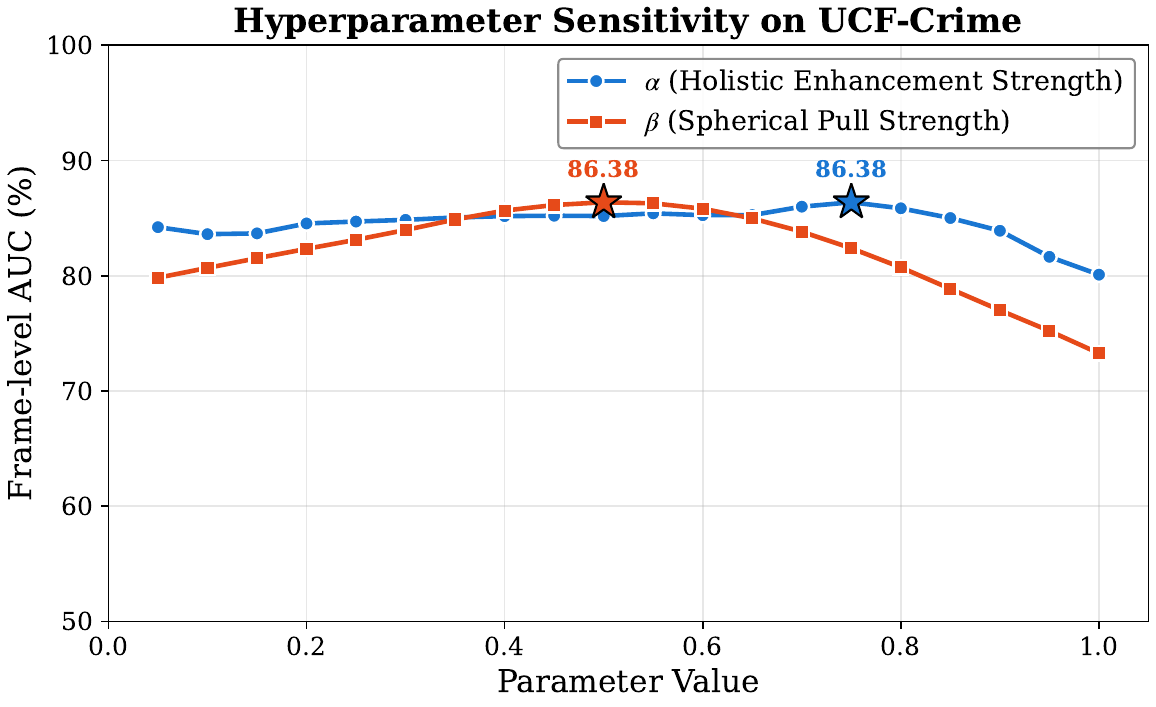}
    \subcaption{UCF-Crime (AUC)}
    \label{fig:ab_ucf}
  \end{minipage}
  \hfill
  \begin{minipage}[t]{0.32\linewidth}
    \centering
    \includegraphics[width=\linewidth]{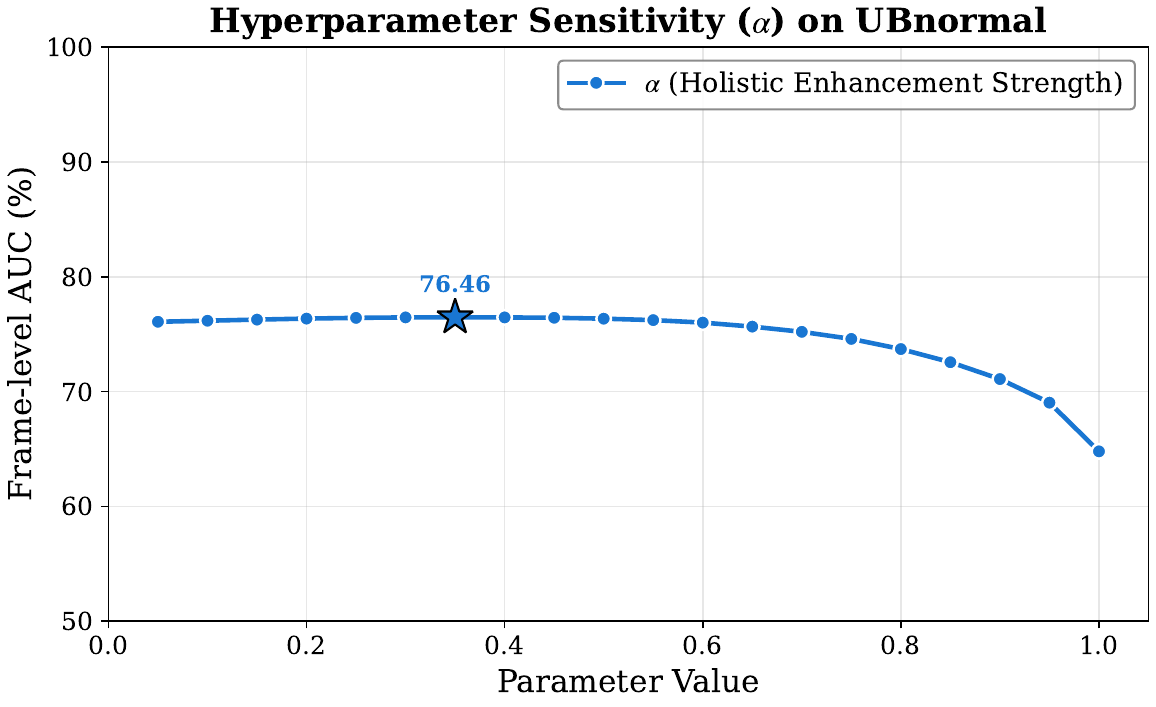}
    \subcaption{UBnormal (AUC, $\alpha_G$ only)}
    \label{fig:a_ubn}
  \end{minipage}
  \caption{%
    \textbf{Sensitivity to $\alpha_G$ (Holistic Enhancement Strength) and $\beta_{\mathrm{base}}$ (Spherical Pull Strength).}
    Each curve is obtained by sweeping one parameter over $[0,1]$ while holding all others fixed.Dashed horizontal lines indicate the peak performance (XD-Violence: AP\,=\,86.99\%; UCF-Crime: AUC\,=\,86.38\%; UBnormal: AUC\,=\,76.46\%).
    $\beta_{\mathrm{base}}$ is not applicable to UBnormal (SGP disabled), so only the $\alpha_G$ curve is shown in panel~(c).
  }
  \label{fig:ab_sensitivity}
\end{figure}
\FloatBarrier
\paragraph{HSA mixing coefficient $\alpha_G$.}
$\alpha_G \in [0,1]$ controls the weight of cross-video scene attention relative to the original clip feature in Eq.~(5) of the main text.
\begin{itemize}[leftmargin=1.5em, itemsep=2pt]
  \item \textbf{$\alpha_G = 0$ (HSA disabled).}
        Performance reverts to the M1 baseline (vMF scoring without cross-video aggregation).The drop is most pronounced on XD-Violence ($\sim$1.8\% AP), consistent with the ablation results in Table~2, where HSA contributes the largest gain on that dataset.
  \item \textbf{Stable plateau.}
        All three datasets exhibit a wide plateau in roughly $\alpha_G \in [0.3, 0.8]$, with performance varying by less than 0.5\% within this range.
  \item \textbf{$\alpha_G \to 1$ (pure attention).}
        When the clip's own feature is entirely replaced by the cross-video attention aggregate, performance degrades moderately ($\sim$1--2\%), as individual-clip discriminative information is discarded.
\end{itemize}
\paragraph{SLERP base pull strength $\beta_{\mathrm{base}}$.}
$\beta_{\mathrm{base}} \in (0, 1)$ sets the maximum pull magnitude in the score-adaptive SLERP of Eq.~(44) (Appendix~D).
It is relevant only for XD-Violence and UCF-Crime.
\begin{itemize}[leftmargin=1.5em, itemsep=2pt]
  \item \textbf{$\beta_{\mathrm{base}} = 0$ (SGP disabled).}
        Performance reverts to the M2 baseline, consistent with Table~2 (M2 $\to$ M3 contributes $+1.83$\% AP on XD and $+3.32$\% AUC on UCF).
  \item \textbf{Stable plateau.}
        Both datasets show a broad plateau for $\beta_{\mathrm{base}} \in [0.1, 0.5]$, with less than 0.5\% variation.
  \item \textbf{$\beta_{\mathrm{base}} \to 1$ (excessive pull).}
        Very large pull strengths over-correct ambiguous clips, forcing features too aggressively toward prototypes regardless of their initial uncertainty, which leads to performance degradation.
        The score-adaptive factor $(1 - \tfrac{1}{2}\hat{d}_i)$ in Eq.~(44) partially mitigates this pathology, but cannot fully compensate when $\beta_{\mathrm{base}}$ is close to 1.
\end{itemize}
\subsection{Unified Hyperparameter Robustness}
\label{app:hyper_unified}
To further demonstrate the robustness and training-free nature of SphereVAD, we evaluate a \emph{single, unified} hyperparameter configuration across all three benchmarks \emph{without any dataset-specific tuning}.
Specifically, we set $K_N =12$, $K_A = 18$, $\alpha_G = 0.5$, and $\beta_{\mathrm{base}} = 0.5$ (with $\beta_{\mathrm{base}}$ disabled for UBnormal as before).
These values are deliberately chosen as round, ``middle-of-the-plateau'' defaults rather than optimised per-dataset settings.
Table~\ref{tab:hyper_unified} compares the unified-setting results with the per-dataset optimised results from Table~1 of the main text.
\begin{table}[htbp]
  \centering
  \caption{%
    \textbf{Unified hyperparameter evaluation.}
    A single configuration ($K_N\!=\!12$, $K_A\!=\!18$, $\alpha_G\!=\!0.5$, $\beta_{\mathrm{base}}\!=\!0.5$) is applied to all three benchmarks without any dataset-specific tuning.
    $\Delta$ denotes the difference relative to the per-dataset optimised setting.
    ``---'' indicates SGP is disabled.
  }
  \label{tab:hyper_unified}
  \setlength{\tabcolsep}{6pt}
  \begin{tabular}{lcccc}
    \toprule
    & \textbf{XD-Violence} & \textbf{XD-Violence} & \textbf{UCF-Crime} & \textbf{UBnormal} \\
    \textbf{Setting} & AP(\%) & AUC(\%) & AUC(\%) & AUC(\%) \\
    \midrule
    Per-dataset optimised & 86.99 & 95.74 & 86.38 & 76.46 \\
    Unified setting& 85.65 & 95.10 & 83.42 & 72.31 \\
    \midrule
    $\Delta$ & $-$1.34 & $-$0.64 & $-$2.96 & $-$4.15 \\
    \bottomrule
  \end{tabular}
\end{table}
\FloatBarrier
Several observations merit discussion:
\begin{itemize}[leftmargin=1.5em, itemsep=3pt]
  \item \textbf{Competitive performance without any tuning.}
      Even with a single, untuned configuration, SphereVAD achieves 
      85.65\% AP on XD-Violence, 83.42\% AUC on UCF-Crime, and 
      72.31\% AUC on UBnormal.
      On XD-Violence and UBnormal, these results substantially surpass 
      all existing training-free methods reported in Table~1 
      (e.g., VADTree: XD 68.85\% AP, UBnormal 65.80\% AUC).
      On UCF-Crime, the unified setting trails VADTree (84.74\%) by 
      only 1.32\%, remaining highly competitive without any 
      dataset-specific tuning.
      These results confirm that the performance gains of SphereVAD 
      originate from the principled spherical geometric framework 
      rather than from careful hyperparameter engineering.
  \item \textbf{Graceful degradation.}
        The performance drop from per-dataset optimised to unified settings is modest on XD-Violence ($-$1.34\% AP, $-$0.64\% AUC) and moderate on UCF-Crime ($-$2.96\% AUC) and UBnormal ($-$4.15\% AUC).
        This graceful degradation reflects the broad sensitivity plateaus documented in \S\ref{app:hyper_proto}--\ref{app:hyper_alphabeta}: the unified values fall within or near the plateau regions for all datasets, so the loss is bounded.
  \item \textbf{True training-free deployment.}
        This experiment validates a practical deployment scenario: a user applies SphereVAD to a new surveillance domain with \emph{no target-domain data, no gradient descent, and no hyperparameter search}.
        The unified defaults provide strong out-of-the-box performance, and optional per-dataset tuning (requiring only a coarse grid search over four scalar parameters, with no model retraining) can recover the remaining margin.
\end{itemize}
\paragraph{Summary.}
Across all four hyperparameters and all three benchmarks, SphereVAD exhibits broad, flat sensitivity plateaus centred on the selected values.
A single unified configuration already surpasses all prior training-free methods, and per-dataset tuning yields only incremental further gains.
These results confirm that the framework does not rely on precise hyperparameter engineering and generalises reliably across diverse surveillance domains in a genuinely training-free manner.

\section{Reproducibility Statement \& Symbol Table}
\label{app:repro}
\subsection{Full Symbol Table}
\label{app:symbols}
\begin{table}[H]
\caption{Summary of mathematical symbols used throughout the paper and appendix.}
\label{tab:symbols}
\centering
\small
\begin{tabular}{@{}cl@{}}
\toprule
\textbf{Symbol} & \textbf{Description} \\
\midrule
$f$ & Raw intermediate-layer feature \\
$\tilde{f}$ & $\ell_2$-normalised feature on $\mathcal{S}^{D-1}$ \\
$\hat{f}$ & Spherically centered feature \\
$f^l$ & Main feature (last token hidden state) \\
$f^v$ & Visual feature (visual-last token hidden state) \\
$D$ & Feature dimensionality \\
$\mathcal{S}^{D-1}$ & Unit hypersphere in $\mathbb{R}^D$ \\
$\boldsymbol{\mu}_{\mathrm{unified}}$ & Unified Fr\'{e}chet mean \\
$\boldsymbol{\mu}_k^{\mathrm{norm}}$, $\boldsymbol{\mu}_k^{\mathrm{abn}}$ & Normal / anomalous vMF prototypes \\
$K_N$, $K_A$ & Number of normal / anomalous prototypes \\
$\kappa$ & vMF concentration parameter \\
$\alpha_G$ & HSA mixing coefficient \\
$\beta_{\mathrm{base}}$ & SLERP base pull strength \\
$\beta_i$ & Score-adaptive pull strength for clip $i$ \\
$s$ & Anomaly score $\in [0,1]$ \\
$d_{\mathrm{geo}}(\cdot,\cdot)$ & Geodesic distance on $\mathcal{S}^{D-1}$ \\
$\mathrm{Log}_{\boldsymbol{\mu}}(\cdot)$ & Riemannian logarithmic map at $\boldsymbol{\mu}$ \\
$\mathrm{Exp}_{\boldsymbol{\mu}}(\cdot)$ & Riemannian exponential map at $\boldsymbol{\mu}$ \\
$\mathrm{Slerp}(\cdot,\cdot,\cdot)$ & Spherical linear interpolation \\
$\mathbf{A}$ & Cross-video sparse attention matrix (HSA) \\
$\mathbf{A}_H$ & Intra-video sparse attention matrix (SGP) \\
$\rho_{\mathrm{low}}$, $\rho_{\mathrm{high}}$ & MAD-based ambiguity interval bounds \\
$\boldsymbol{\mu}_{\mathrm{dom}}^{c}$ & Dominant prototype for class $c$ \\
$\mathcal{D}_{\mathrm{syn}}$ & Synthetic calibration dataset \\
$M$ & Number of synthetic calibration images \\
$N$ & Number of test videos \\
$T_i$ & Number of clips in video $V_i$ \\
\bottomrule
\end{tabular}
\end{table}
\subsection{Random Seed \& Determinism}
\label{app:seed}

SphereVAD is designed to be deterministic given a fixed random seed.
A global seed (default: \texttt{42}) controls the only stochastic component in the pipeline---spherical $K$-Means initialisation---via \texttt{sklearn.cluster.KMeans(random\_state=42)} with $n_{\mathrm{init}}=10$.
All remaining operations (Fr\'{e}chet mean, spherical centering, HSA, vMF scoring, SGP, Gaussian smoothing) are closed-form deterministic computations.
We additionally set \texttt{torch.backends.cudnn.deterministic=True} and \texttt{torch.backends.cudnn.benchmark=False} to ensure reproducible MLLM forward passes.
We verified stability by repeating the full pipeline with multiple seeds $\{42,123,456,789,1024\}$ on the same hardware and observed negligible variance across runs.

\subsection{Code \& Data Availability}
\label{app:code}

The complete source code---including feature extraction, the geometric inference pipeline (spherical centering, vMF prototypes, HSA, SGP, scoring), evaluation scripts, and all configuration files---will be publicly released upon acceptance.
Pre-extracted features for all three benchmarks and the synthetic calibration set will also be provided, enabling full reproduction of the geometric inference pipeline \emph{without} GPU access.

The synthetic calibration data generation scripts and meta-prompts are included in the repository for reproducibility.
We emphasise that the synthetic data are \emph{not} a contribution of this work; they serve solely as directional calibrators for vMF prototype construction and can be replaced by any alternative set of normal/abnormal reference images.
All synthetic images are machine-generated and contain no real individuals or personally identifiable information.

All four MLLM backbones (Qwen3.5, Qwen3-VL, InternVL3, LLaVA-OV-1.5) are publicly available on HuggingFace and used in their frozen, unmodified states.
All three evaluation benchmarks (XD-Violence, UCF-Crime, UBnormal) are publicly available and used with their standard test splits under their respective licences.

\section{Limitations}
\label{app:limitations}
While SphereVAD achieves strong training-free performance, several limitations merit discussion.
\textbf{Dependence on MLLM feature quality.}
SphereVAD does not learn new representations; it operates entirely on frozen intermediate-layer features.
Consequently, its performance ceiling is bounded by the anomaly-discriminative capacity already latent in the backbone.
As shown in Table~\ref{tab:ablation_backbone}, weaker backbones (e.g., LLaVA-OV-1.5) yield noticeably lower results, indicating that the framework amplifies but cannot create discriminability absent from the features.
\textbf{Short-video limitation of SGP.}
The intra-video Spherical Geodesic Pulling mechanism relies on MAD-based tri-classification and neighbour consensus, both of which require sufficient temporal context.
For very short videos (e.g., UBnormal, averaging ${\sim}$12\,seconds per video), SGP provides no benefit and is disabled (Table~\ref{tab:ablation_module}, M2\,=\,M3on UBnormal).
Extending SGP to short-clip scenarios---e.g., via cross-video pulling or temporal padding---remains future work.
\textbf{Synthetic calibration domain gap.}
Although unified Fr\'{e}chet mean centering effectively absorbs the ${\sim}5^{\circ}$ rotational bias between synthetic and real domains, a residual domain gap persists.
The current synthetic generation pipeline (Appendix~\ref{app:synth}) uses generic meta-categories; domain-adaptive calibration strategies that tailor synthetic data to specific deployment environments could further improve performance.
\textbf{Single-anomaly-type assumption in SGP.}
The dominant prototype voting mechanism (Appendix~\ref{app:dominant}) assumes that each video contains at most one prevailing anomaly type.
Videos with multiple co-occurring anomaly categories (e.g., simultaneous arson and assault) may not be optimally handled, as the single dominant anomalous prototype cannot represent both event types.
\textbf{Quadratic cost of HSA.}
The cross-video Holistic Scene Attention module constructs a pairwise similarity matrix over all test clips, incurring $O(N_{\mathrm{total}}^2 D)$ cost.
While manageable for current benchmarks (${\sim}$97K clips, $<$10\,s on CPU), scaling to deployment scenarios with millions of clips would require approximate nearest-neighbour indexing or hierarchical attention schemes.
\section{Broader Impact}
\label{app:impact}
\textbf{Positive societal impact.}
SphereVAD is designed to enhance public safety by enabling rapid, training-free deployment of video anomaly detection in surveillance systems.
Its zero-shot nature eliminates the need for target-domain data collection and annotation, which is particularly valuable for under-resourced communities and emergency response scenarios where labelled data are unavailable.
The training-free design also substantially reduces the computational carbon footprint compared with methods requiring GPU-intensive training.
\textbf{Potential risks.}
As with all surveillance-related technologies, SphereVAD carries inherent risks of misuse, including unwarranted mass surveillance, privacy violations, and disproportionate targeting of marginalised populations.
The system may inherit biases present in the pre-trained MLLM backbone (e.g., demographic biases in visual recognition) or in the synthetic calibration data (e.g., over-representation of certain anomaly scenarios).
False positives could lead to unwarranted interventions, while false negatives could result in missed genuine threats.
\textbf{Mitigation strategies.}
We advocate for the following safeguards:
(i)~deployment should comply with applicable privacy regulations and be subject to institutional oversight;
(ii)~anomaly scores should be treated as alerts requiring human verification rather than automated decisions;
(iii)~regular auditing of detection outcomes across demographic groups should be conducted to identify and mitigate bias;
(iv)~the synthetic calibration data should be reviewed to ensure diverse and balanced representation of scenarios.
\textbf{Dual-use considerations.}
The geometric inference framework itself (spherical centering, vMF scoring, geodesic pulling) is domain-agnostic and could be applied to beneficial tasks beyond surveillance, such as medical anomaly detection, industrial quality control, or environmental monitoring.
We release the methodology with the intent of advancing scientific understanding of training-free inference on hyperspherical manifolds.

\end{document}